\documentclass{article}
\usepackage[dvips]{graphicx}
\usepackage{amssymb,amsmath,color,natbib,amsthm}
\usepackage{url} 
\usepackage{array}
\usepackage{algorithm,algorithmic}

\oddsidemargin .25in    %   Note \oddsidemargin = \evensidemargin
\evensidemargin .25in \marginparwidth 0.07 true in
%\marginparwidth 0.75 true in
%\topmargin 0 true pt           % Nominal distance from top of page to top of
%\topmargin 0.125in
\topmargin -0.5in \addtolength{\headsep}{0.25in}
\textheight 8.5 true in       % Height of text (including footnotes & figures)
\textwidth 6.0 true in        % Width of text line.
\widowpenalty=10000 \clubpenalty=10000

% Lists and paragraphs
\parindent 0pt
\topsep 4pt plus 1pt minus 2pt
\partopsep 1pt plus 0.5pt minus 0.5pt
\itemsep 2pt plus 1pt minus 0.5pt

\parsep 2pt plus 1pt minus 0.5pt
\parskip .5pc
  % end of proof
\bibpunct{(}{)}{;}{a}{,}{,}

\title{Convex Relaxation for Combinatorial Penalties}

\author{
Guillaume Obozinski  \\
INRIA - Sierra project-team\\
Laboratoire d'Informatique \\
de l'Ecole Normale Sup\'erieure \\
Paris, France \\
\texttt{guillaume.obozinski@ens.fr}
\and
Francis Bach  \\
INRIA - Sierra project-team\\
Laboratoire d'Informatique \\
de l'Ecole Normale Sup\'erieure \\
Paris, France \\
\texttt{francis.bach@ens.fr} }

%%%%%%%%%OUR MACROS
\newcommand{\mysec}[1]{Section~\ref{sec:#1}}
\newcommand{\eq}[1]{Eq.~(\ref{eq:#1})}

\def\tw{\textwidth}  
\def\RR{\mathbb{R}}
\def\rb{\mathbb{R}}
\def\VV{\mathcal{V}}

\def\v{ v }
\def\w{ w }
\def\s{ s }
\def\eps{\epsilon}

\def\st{\text{ s.t. }}
\def\supp{\text{Supp}}
\def\vA{\v^A}

\def\OJ{\Omega_J}
\def\OnJ{\Omega^J}

\def\G{\mathcal{G}}
\def\D{\mathcal{D}}

\def\pen{\text{pen}}

\def\ptpow{{\small\circ}}
\def\Fu{F_+}
\def\Fl{F_-}
\def\Fc{\widetilde{F}}
\newcommand\F[1]{F_{[#1]}}
\newcommand\eqdef{:=}
\newcommand\posvspace[1]{}

\newtheorem{example}{Example}
\newtheorem{theorem}{Theorem}
\newtheorem{lemma}{Lemma}
\newtheorem{proposition}{Proposition}

\newtheorem{corollary}[theorem]{Corollary}
\newtheorem{definition}[theorem]{Definition}

\newtheorem*{lemmann}{Lemma}
\newtheorem*{propositionnn}{Proposition}

\newcommand{\BEAS}{\begin{eqnarray*}}
\newcommand{\EEAS}{\end{eqnarray*}}
\newcommand{\BEA}{\begin{eqnarray}}
\newcommand{\EEA}{\end{eqnarray}}
\newcommand{\BEQ}{\begin{equation}}
\newcommand{\EEQ}{\end{equation}}
\newcommand{\BIT}{\begin{itemize}}
\newcommand{\EIT}{\end{itemize}}
\newcommand{\BNUM}{\begin{enumerate}}
\newcommand{\ENUM}{\end{enumerate}}
\newcommand{\lova}{Lov\'asz }

\graphicspath{{./figures_pdf/}}

\begin{document} 
\maketitle
\begin{abstract} 
In this paper, we propose an unifying view of several recently proposed structured sparsity-inducing norms. We consider the situation of a model simultaneously (a) penalized by a set-function defined on the support of the unknown parameter vector which represents prior knowledge on supports, and (b) regularized in $\ell_p$-norm. We show that the natural combinatorial optimization problems obtained may be relaxed into convex optimization problems and 
 introduce a notion, the \emph{lower combinatorial envelope} of a set-function, that characterizes the tightness of our relaxations. We moreover establish links with norms based on latent representations including the latent group Lasso and \emph{block-coding}, and with norms obtained from submodular functions.
\end{abstract}

\section{Introduction}

The last years have seen the emergence of the field of \emph{structured sparsity}, which aims at identifying a model of small complexity given a priori knowledge on its possible structure. 

Various regularizations, in particular convex, have been proposed 
that formalized the notion that prior information can be expressed through
 functions encoding the set of possible or encouraged supports\footnote{By support, we mean the set of indices of non-zero parameters.} in the model. Several convex regularizers for structured sparsity arose as generalizations of the group Lasso \citep{YuaLi06} to the case of overlapping groups \citep{jenatton2011structured,jacob2009group,mairal2011convex}, in particular to tree-structured groups \citep{zhao2009icap,kim3,jenatton2011proximal}. Other formulations have been considered based on variational formulations \citep{Micchelli2011Regularizers}, the perspective of multiple kernel learning \citep{bach2011optim}, submodular functions \citep{bach2010structured} and norms defined as convex hulls \citep{obozinski2011group,chandrasekaran2010convex}.
Non convex approaches include \citet{he2009exploiting,baraniuk2010model,huang2011learning}.
We refer the reader to \citet{huang2011learning} for a concise overview and discussion of the related literature and to \citet{bach2011optim} for a more detailed tutorial presentation.

\label{sec:newnorm}
In this context, and given a model parametrized by a vector of coefficients $w \in \RR^V$ with $V=\{1, \ldots,d\}$, the main objective of this paper is to find an appropriate way to combine together \emph{combinatorial penalties}, that control the structure of a model in terms of the 
sets of variables allowed or favored to enter the function learned, with \emph{continuous regularizers}---such as $\ell_p$-norms, that control the magnitude of their coefficients, into a convex regularization that would control both.

Part of our motivation stems from previous work on regularizers that ``convexify'' combinatorial penalties.  \citet{bach2010structured} proposes to consider the tightest convex relaxation of the restriction of a submodular penalty to a unit $\ell_\infty$-ball in the space of model parameters $w \in \RR^d$. However, this relaxation scheme implicitly assumes that the coefficients are in a unit $\ell_\infty$-ball; 
then, the relaxation obtained induces clustering artifacts of the values of the learned vector. 
It would thus seem desirable to propose relaxation schemes that do not assume that coefficient are bounded but rather to control continuously their magnitude and to find alternatives to the $\ell_\infty$-norm.
Finally the class of functions considered is restricted to submodular functions.

In this paper, we therefore consider combined penalties of the form mentioned above and propose first an appropriate convex relaxation in Section~\ref{sec:cvx_relax}; the properties of general combinatorial functions preserved by the relaxation are captured by the notion of lower combinatorial envelope introduced in Section~\ref{sec:lce}. Section~\ref{sec:lgl} relates the convex regularization obtained to the latent group Lasso and to set-cover penalties, while Section~\ref{sec:examples} provides additional examples, such as the exclusive Lasso. We discuss in more details the case of submodular functions in Section~\ref{sec:submod} and propose for that case efficient algorithms and a theoretical analysis. Finally, we present some experiments in Section~\ref{sec:exp}.

Yet another motivation is to follow loosely the principle of two-part or multiple-part codes  from MDL theory \citep{rissanen1978modeling}. In particular if the model is parametrized by a vector of parameters $w$, it is possible to encode (an approximation of) $w$ itself with a two-part code, by encoding first the support ${\rm Supp}(w)$ ---or set of non-zero values--- of $w$  
with a code length of $F({\rm Supp}(w))$ and by encoding the actual values of $w$ using a code based on a log prior distribution on the vector $w$ that could motivate the choice of an $\ell_p$-norm as a surrogate for the code length.
This leads naturally to consider penalties of the form $\mu F({\rm Supp}(w))+  \nu \|w\|_p^p$ and to find appropriate notions of relaxation.

\paragraph{Notations.} 
When indexing vectors of $\RR^d$ with a set $A$ or $B$ in \emph{exponent}, $x^A$ and $x^B \in \RR^d$ refer to two a priori unrelated vectors; by contrast, when using $A$ as an \emph{index}, and given a vector $x \in \RR^d$, $x_A$ denotes the vector of $\RR^d$ such that  $[x_A]_i=x_i, \, i \in A$ and $[x_A]_i=0, \: i \notin A$.
If $\s$ is a vector in $\RR^d$, we use the shorthand $\s(A) \eqdef \sum_{i \in A} s_i$ and $|\s|$ denotes the vector whose elements are the absolute values $|\s_i|$ of the elements $\s_i$ in $\s$.
For $p\geq 1$, we define $q$ through the relation $\frac{1}{p}+\frac{1}{q}=1$.
The $\ell_q$-norm of a vector $w$ will be noted $\|w\|_q=\big (\sum_{i} w_i^q \big )^{1/q}.$
For a function $f:\RR^d \rightarrow \RR$, we will denote by $f^*$ is Fenchel-Legendre conjugate.
We will write $\overline{\RR}_+$ for $\RR_+ \cup \{+\infty\}.$
\posvspace{-1mm}
\section{Penalties and convex relaxations}
\label{sec:cvx_relax}
\posvspace{-1mm}
Let $V = \{1,\dots,d\}$ and $2^V=\{A \mid A \subset V\}$ its power-set.
We will consider positive-valued set-functions of the form $F: 2^V \rightarrow \overline{\RR}_+$ such that $F(\varnothing)=0$ and $F(A)>0$ for all $A \neq \varnothing$. We do not necessarily assume that $F$ is non-decreasing, even if it would a priori be natural for a penalty function of the support. We however assume that the domain of $F$, defined as $\D_0:=\{A \mid F(A) < \infty\}$, covers $V$, i.e.,  satisfies $\cup_{A \in \D_0} A=V$ (if $F$ is non-decreasing, this just implies that it should be finite on singletons). We will denote by $\iota_{x \in S}$ the indicator function of the set $S$, taking value $0$ on the set and $+\infty$ outside. We will write $[\![k_1,k_2]\!]$ to denote the discrete interval $\{k_1,\ldots,k_2\}$.

With the motivations of the previous section, and denoting by ${\rm Supp}(w)$ the set of non-zero coefficients of a vector $w$, we consider a penalty involving both a \emph{combinatorial} function $F$ and $\ell_p$-regularization:

\begin{equation}
\label{eq:pen}
\pen: w \mapsto \mu\, F({\rm Supp}(w))+  \nu\, \|w\|_p^p,
\end{equation}
where $\mu$ and $\nu$ are positive scalar coefficients.
Since such non-convex discontinuous penalizations are untractable computationally, we undertake to construct an appropriate convex relaxation. The most natural convex surrogate for a non-convex function, say $A$, is arguably its \emph{convex envelope} (i.e., its tightest convex  lower bound) which can be computed as its Fenchel-Legendre bidual $A^{**}$. However, one relatively natural requirement for a regularizer is to ask that it be also \emph{positively homogeneous} (p.h.\ ) since this leads to formulations that are invariant by rescaling of the data.
Our goal will therefore be to construct the tightest positively homogeneous convex lower bound of the penalty considered.

Now, it is a classical result that, given a function $A$, its tightest p.h.\ (but not necessarily convex) lower bound $A_h$ is  $A_h(w) = \inf_{\lambda > 0 } \frac{ A(\lambda w)}{\lambda}$ \citep[see][p.35]{Rockafellar1970Convex}.

This is instrumental here given the following proposition: 
\begin{proposition}
\label{prop:phclb}
Let $A:\RR^d \rightarrow \RR_+$ be a real valued function, $A_h$ defined as above. Then $C$, the tightest positively homogeneous and convex lower bound of $A$, is well-defined and $C=A_h^{**}$.
\end{proposition}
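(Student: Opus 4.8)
The plan is to squeeze $C$ between $A_h^{**}$ from both sides, after first pinning down that $C$ is well-defined. I would define $C$ as the pointwise supremum of the family $\mathcal{F}$ of all closed convex positively homogeneous functions minorizing $A$ (working with closed functions is harmless here, since the discussion above already identifies a tightest convex lower bound with a biconjugate). This family is nonempty because $A \geq 0$ puts the zero function in $\mathcal{F}$; moreover a pointwise supremum of convex functions is convex, a pointwise supremum of positively homogeneous functions is positively homogeneous, and a supremum of minorants of $A$ is again a minorant of $A$. Hence $C$ is itself a convex positively homogeneous minorant of $A$, and is by construction the greatest one, so it is well-defined as the object the proposition describes.

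Next I would check that $A_h^{**}$ belongs to $\mathcal{F}$, which immediately gives $A_h^{**} \leq C$. As a biconjugate it is closed and convex, and since biconjugation only decreases a function we have $A_h^{**} \leq A_h$; combined with $A_h \leq A$ (the quoted fact that $A_h$ is a positively homogeneous minorant of $A$) this yields $A_h^{**} \leq A$. The one nontrivial requirement left to verify is that $A_h^{**}$ is positively homogeneous.

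That step is the one I expect to be the main obstacle, and it follows from the way conjugation interacts with positive homogeneity. Because $A_h$ is positively homogeneous, for fixed $s$ the quantity $\langle s, \lambda w\rangle - A_h(\lambda w)$ equals $\lambda\,(\langle s, w\rangle - A_h(w))$ and is thus linear in $\lambda > 0$; hence the supremum defining $A_h^*(s) = \sup_{w}\, \langle s,w\rangle - A_h(w)$ is $0$ when $\langle s,w\rangle \leq A_h(w)$ for all $w$ and $+\infty$ otherwise. Therefore $A_h^* = \iota_{s \in K}$ is the indicator of the closed convex set $K = \{s : \langle s,w\rangle \leq A_h(w)\ \text{for all}\ w\}$, and consequently $A_h^{**}(w) = \sup_{s \in K}\, \langle s, w\rangle$ is the support function of $K$, which is positively homogeneous. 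This confirms $A_h^{**} \in \mathcal{F}$ and hence $A_h^{**} \leq C$.

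Finally I would prove the reverse inequality $C \leq A_h^{**}$. Since $C$ is a positively homogeneous minorant of $A$ and $A_h$ is the greatest such (the Rockafellar fact quoted above), we get $C \leq A_h$. As $C$ is closed and convex it coincides with its own biconjugate, and biconjugation is order preserving, so $C = C^{**} \leq A_h^{**}$. Combining this with $A_h^{**} \leq C$ yields $C = A_h^{**}$, which completes the argument. The only genuinely delicate point is the homogeneity-preservation identification of $A_h^{**}$ with a support function; everything else is a routine order-theoretic squeeze together with the standard properties $f^{**} \leq f$ and $f = f^{**}$ for closed convex $f$.
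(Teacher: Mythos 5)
Your proof is correct and takes essentially the same route as the paper's: define $C$ as the supremum of the (nonempty, sup-stable) family of convex positively homogeneous minorants of $A$, squeeze $A_h^{**} \leq C \leq A_h$, and conclude from the fact that the convex $C$ lies below the biconjugate of $A_h$. The only notable difference is that you spell out the step the paper leaves implicit --- that $A_h^{**}$ is itself positively homogeneous, via the observation that $A_h^{*}$ is the indicator of a closed convex set and hence $A_h^{**}$ is a support function --- which is a worthwhile clarification rather than a departure.
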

\begin{proof}
The set of convex p.h.\ lower bounds of $A$ is non-empty (since it contains the constant zero function) and stable by taking pointwise maxima. Therefore it has a unique majorant, which we call $C$.
We have for all $w \in \rb^d$,
$ A_h^{\ast \ast} (w) \leqslant C(w) \leqslant A(w)$, by definition of $C$ and the fact that $A_h$ is an p.h.\ lower bound on $A$. We thus have for all $\lambda >0$,
$ A_h^{\ast \ast} (\lambda w) \lambda^{-1}\leqslant C(\lambda w)\lambda^{-1} \leqslant A(\lambda w)\lambda^{-1}$, which implies that for all $w \in \rb^d$, $A_h^{\ast \ast} (w)  \leqslant C(w) \leqslant A_h(w)$. Since $C$ is convex, we must have $C=A_h^{\ast \ast}$, hence the desired result. 
\end{proof}

Using its definition we can easily compute the tightest positively homogeneous  lower bound of the penalization of Eq.~\eqref{eq:pen}, which we denote $\pen_h$:
\BEAS
\pen_h(w)  & = &  \inf_{\lambda > 0 } \frac{\mu}{\lambda} \,F({\rm Supp}(\w)) +  \nu \,\lambda^{p-1}\, \| \w \|_p^p.
\EEAS
Setting the gradient of the objective to $0$, one gets that the minimum is obtained for\\ 
$\displaystyle \lambda =   {\textstyle \big (\frac{\mu q}{\nu p}\big )^{1/p}} \:F({\rm Supp}(\w)) ^{1/p}\: \| w \|_p^{-1},$ and that 
$$\pen_h(w) =  (q \mu)^{1/q} \,(p\nu)^{1/p}\; \Theta(\w),$$
 where we introduced the notation
$$ \Theta(\w) \eqdef \: F({\rm Supp}(\w))^{1/q} \; \| \w\|_p.$$

Up to a constant factor depending on the choices of $\mu$ and $\nu$, we are therefore led to consider the positively homogeneous penalty $\Theta$ we just defined, which combines the two terms multiplicatively.
Consider the norm $\Omega_p$ (or $\Omega_p^F$ if a reference to $F$ is needed) whose dual norm\footnote{The assumptions on the domain $\D_0$ of $F$ and on the positivity of $F$ indeed guarantee that $\Omega_p^*$ is a norm.} is defined as 
\BEA
\Omega_p^*(s):=\max_{A \subset V, A \neq \varnothing} \frac{\|s_A\|_q}{F(A)^{1/q}}.
\EEA
We have the following result:
\begin{proposition}[Convex relaxation]
\label{prop:relax}
The norm $\Omega_p$ is the convex envelope of $\Theta$.
\end{proposition}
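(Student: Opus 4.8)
The goal is to show $\Omega_p = \Theta^{**}$, i.e.\ that the norm $\Omega_p$ defined implicitly through its dual $\Omega_p^*$ equals the convex envelope of $\Theta(w) = F(\mathrm{Supp}(w))^{1/q}\|w\|_p$. My plan is to work through the biconjugate. Since $\Omega_p$ is already a norm (hence convex, closed, and positively homogeneous by the footnote's assumption) and $\Omega_p \le \Theta$ would give that $\Omega_p$ is a convex lower bound, the natural route is to prove the two inequalities $\Theta^{**} \le \Omega_p$ and $\Omega_p \le \Theta^{**}$ separately, sandwiching via $\Theta^{**} \le \Omega_p \le \Theta$.

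**First I would verify the easy domination $\Omega_p \le \Theta$.** By the standard duality between a norm and its dual norm, $\Omega_p(w) = \sup_{\Omega_p^*(s)\le 1} \langle s, w\rangle$. Equivalently, for any nonempty $A$, the definition of $\Omega_p^*$ forces $\Omega_p^*(s)\ge \|s_A\|_q / F(A)^{1/q}$, and I expect to show by a direct computation that for a fixed $w$ with support $B := \mathrm{Supp}(w)$, the optimal $s$ in the dual problem can be chosen supported on $B$ and aligned with $w$ in the Hölder sense, yielding $\langle s,w\rangle \le F(B)^{1/q}\|w\|_p$. This shows $\Omega_p(w)\le F(B)^{1/q}\|w\|_p = \Theta(w)$ for all $w$. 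Combined with the fact that $\Omega_p$ is convex and p.h., this establishes $\Omega_p \le \Theta^{**}$ after taking biconjugates, since $\Theta^{**}$ is the largest such lower bound. Wait---I should be careful about direction here: $\Theta^{**}$ is the \emph{greatest} convex lower bound of $\Theta$, so any convex lower bound $G$ of $\Theta$ satisfies $G \le \Theta^{**}$; hence $\Omega_p \le \Theta^{**}$.

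**The main work is the reverse inequality $\Theta^{**} \le \Omega_p$,** equivalently $\Omega_p \le \Theta$ together with $\Theta^{**}$ being the convex envelope forces nothing more unless I show $\Omega_p$ is actually \emph{tight}. The clean way is to compute $\Theta^*$ and show it coincides with $\iota_{\{\Omega_p^* \le 1\}}$, because then $\Theta^{**} = (\iota_{\{\Omega_p^*\le 1\}})^* = \Omega_p$ by norm–indicator duality. So I would compute
\[
\Theta^*(s) = \sup_{w} \Big( \langle s, w\rangle - F(\mathrm{Supp}(w))^{1/q}\|w\|_p \Big).
\]
I expect to decompose this supremum by first fixing the support $A = \mathrm{Supp}(w)$ and optimizing over $w$ with that support; for fixed $A$ the inner problem is a positively homogeneous maximization whose value is $0$ if $\|s_A\|_q \le F(A)^{1/q}$ and $+\infty$ otherwise (by Hölder's inequality giving the tight bound $\langle s_A, w\rangle \le \|s_A\|_q \|w\|_p$, with equality attainable). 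Taking the max over all nonempty $A$ then yields exactly $\Theta^*(s) = 0$ if $\max_A \|s_A\|_q/F(A)^{1/q}\le 1$, i.e.\ if $\Omega_p^*(s)\le 1$, and $+\infty$ otherwise. This is precisely $\iota_{\{\Omega_p^*\le 1\}}$, and dualizing once more gives $\Theta^{**} = \Omega_p$.

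**Anticipated obstacles.** The delicate points are (i) handling the support constraint carefully in the conjugate computation---the map $w \mapsto F(\mathrm{Supp}(w))$ is discontinuous, so I must genuinely split by support rather than differentiate, and confirm the homogeneity argument forcing the $0/+\infty$ dichotomy; (ii) justifying attainability in Hölder's inequality so the supremum over each fixed-support slice is exactly $0$ on the feasible region (one must exhibit a $w$ achieving equality, which requires $s_A \ne 0$, and separately dispatch the trivial case $s_A = 0$); and (iii) confirming that the $\max$ over $A$ commutes correctly with the indicator, namely that $\Theta^*$ is finite iff \emph{every} nonempty $A$ satisfies the constraint, matching the $\max$ in the definition of $\Omega_p^*$. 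Once the conjugate is identified with the indicator of the dual unit ball, the conclusion is immediate from Fenchel--Moreau.
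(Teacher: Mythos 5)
Your proposal is correct and takes essentially the same route as the paper: the paper's proof is exactly your ``main work'' step, computing $\Theta^\ast$ by splitting the supremum over supports $A$, using H\"older and positive homogeneity to obtain the $0/+\infty$ dichotomy $\iota_{\{\|s_A\|_q \leq F(A)^{1/q}\}}$, identifying $\Theta^\ast = \iota_{\{\Omega_p^\ast \leq 1\}}$, and concluding $\Theta^{\ast\ast} = \Omega_p$ by biconjugation. Your preliminary step establishing $\Omega_p \leq \Theta$ (and hence $\Omega_p \leq \Theta^{\ast\ast}$) is redundant once the conjugate is identified with the indicator of the dual unit ball, but it is harmless.
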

%\vspace{-5mm}
\begin{proof}
Denote $\Theta(\w) = \| \w\|_p \, F({\rm Supp}(\w))^{1/q}$, and compute its Fenchel conjugate:
\BEAS
\Theta^\ast(s) & = & \max_{\w \in \RR^d} \w^\top s - \| \w\|_p \, F({\rm Supp}(\w))^{1/q}\\
& = & \max_{A \subset V} \max_{\w_A \in \RR_\ast^{|A|} } \w_A^\top s_A - \| \w_A \|_p \, F(A)^{1/q} \\
& = & \max_{A \subset V}  \iota_{\{  \|s_A\|_{q} \leqslant   F(A)^{1/q} \}} =  \iota_{ \{  \Omega_p^\ast(s) \leqslant 1 \}},
\EEAS
where $\iota_{\{s \in S\}}$ is the indicator of the set $S$, that is the function equal to $0$ on $S$ and $+\infty$ on $S^c$.
The Fenchel bidual of $\Theta$, i.e.,  its largest (thus tightest) convex lower bound, is therefore exactly $\Omega_p$.
\end{proof}
\posvspace{-2mm}
Note that the function $F$ is not assumed submodular in the previous result.
Since the function $\Theta$ depends on $w$ only through $|w|$, by symmetry, the norm $\Omega_p$ is also a function of $|w|$. Given Proposition~\ref{prop:phclb}, we have the immediate corollary:

\begin{corollary}[Two parts-code relaxation]
\label{prop:code2}
Let $p>1$. The norm $w \mapsto (q \mu)^{1/q} (p\nu)^{1/p} \,\Omega_p(w)$ is the tightest convex \emph{positively homogeneous} lower bound of  the function $\w \mapsto  \mu F({\rm Supp}(\w)) + \nu \| \w \|_p^p$.
\end{corollary}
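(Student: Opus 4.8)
The plan is to read this corollary as a direct assembly of the two preceding propositions, glued together by the standard fact that the tightest convex lower bound (convex envelope) of a function is exactly its Fenchel biconjugate. I would set $A = \pen$, that is $A(w) = \mu F({\rm Supp}(w)) + \nu \|w\|_p^p$, and first note that $A$ is nonnegative with $A(0) = \mu F(\varnothing) = 0$, so it falls under the scope of Proposition~\ref{prop:phclb}. Applying that proposition directly, the object we are after---the tightest positively homogeneous convex lower bound of $A$---is well-defined and equals $C = A_h^{\ast\ast}$, where $A_h(w) = \inf_{\lambda>0} A(\lambda w)/\lambda$ is the tightest p.h.\ (not necessarily convex) minorant.

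Next I would substitute the explicit form of $A_h$ already derived in the text immediately before the corollary, namely $A_h = \penh = (q\mu)^{1/q}(p\nu)^{1/p}\,\Theta$, with $\Theta(w) = F({\rm Supp}(w))^{1/q}\,\|w\|_p$. This identification is exactly where the hypothesis $p>1$ is used: it forces $q$ to be finite, so that the exponents $1/p$, $1/q$ and the minimizing $\lambda$ computed in the text are meaningful. The one genuinely computational step is then the algebraic remark that biconjugation commutes with multiplication by a positive scalar, $(cg)^{\ast\ast} = c\,g^{\ast\ast}$ for $c>0$; hence $C = (q\mu)^{1/q}(p\nu)^{1/p}\,\Theta^{\ast\ast}$. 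Finally, Proposition~\ref{prop:relax} states that $\Omega_p$ is the convex envelope of $\Theta$, i.e.\ $\Theta^{\ast\ast} = \Omega_p$, and combining the last two equalities yields $C = (q\mu)^{1/q}(p\nu)^{1/p}\,\Omega_p$, which is precisely the claim.

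I do not expect a real obstacle here, consistent with the statement being flagged as an immediate corollary: the substantive work is all contained in Propositions~\ref{prop:phclb} and~\ref{prop:relax} and in the prior computation of $\penh$. The only points I would take care to verify are technical rather than deep. First, Proposition~\ref{prop:phclb} is stated for a real-valued $A$, whereas $F$ may equal $+\infty$ off its domain $\D_0$, so I would check that its proof carries over unchanged to extended-real-valued $A$ with values in $\overline{\RR}_+$; this is harmless, since $A_h$ and its conjugates remain well-defined and the monotonicity-and-rescaling argument used in the proof of that proposition is unaffected by infinite values. Second, I would make sure the positive multiplicative constant is pulled through the biconjugate cleanly, so that the normalizing factor in the corollary matches the one appearing in $\penh$.
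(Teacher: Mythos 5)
Your proposal is correct and follows essentially the same route as the paper, which presents the corollary as an immediate consequence of Proposition~\ref{prop:phclb}, the preceding computation $\pen_h=(q\mu)^{1/q}(p\nu)^{1/p}\,\Theta$, and Proposition~\ref{prop:relax}, with the positive scalar pulled through the biconjugate exactly as you do. Your added check that Proposition~\ref{prop:phclb} extends to $\overline{\RR}_+$-valued functions (since $F$ may be $+\infty$ off $\D_0$) is a legitimate technical point the paper leaves implicit, and your resolution of it is sound.
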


The penalties and relaxation results considered in this section are illustrated on Figure~\ref{fig:penalties}.

\begin{figure}
\begin{tabular}{ccc}
\includegraphics[width=0.3\tw]{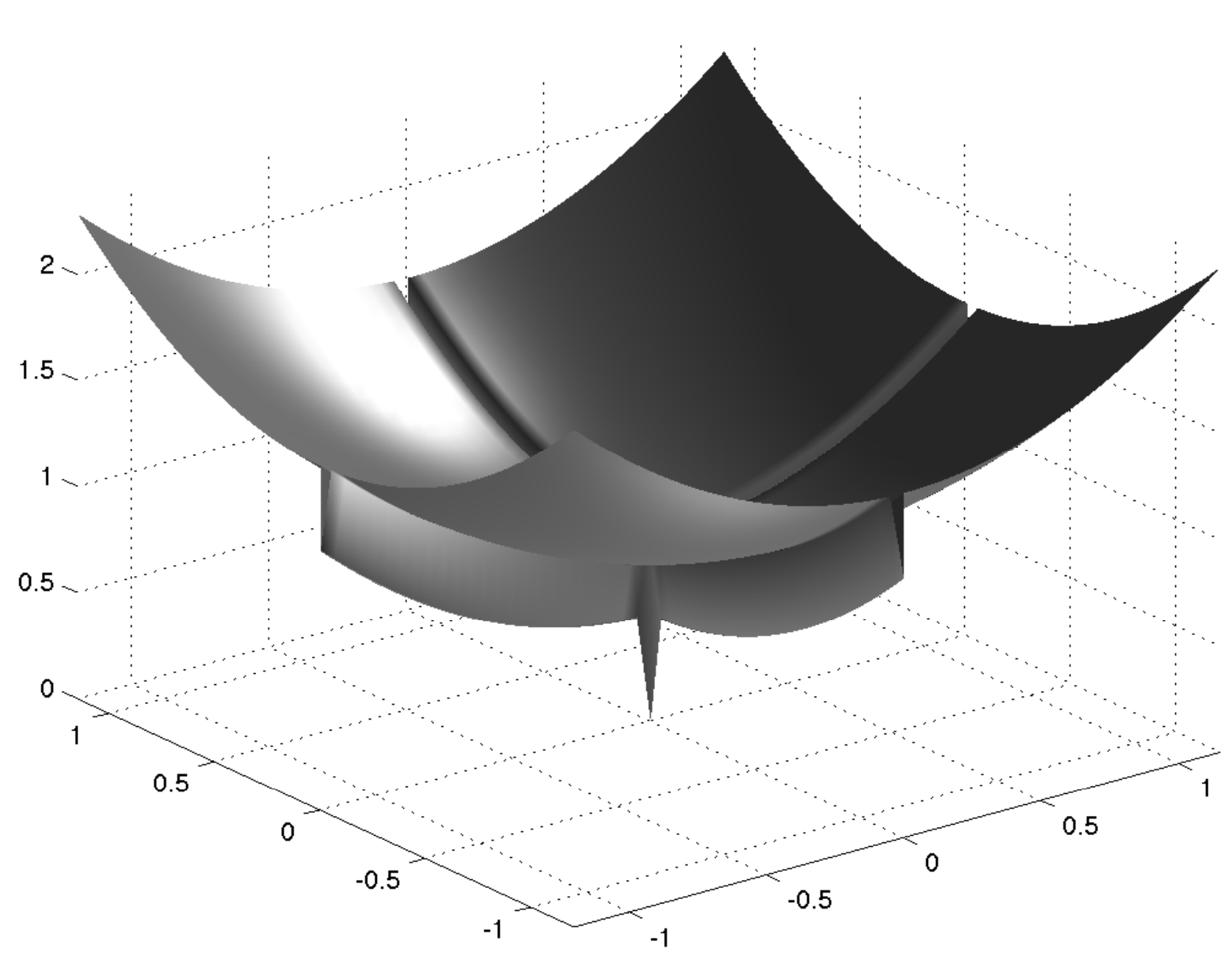} &
\includegraphics[width=0.3\tw]{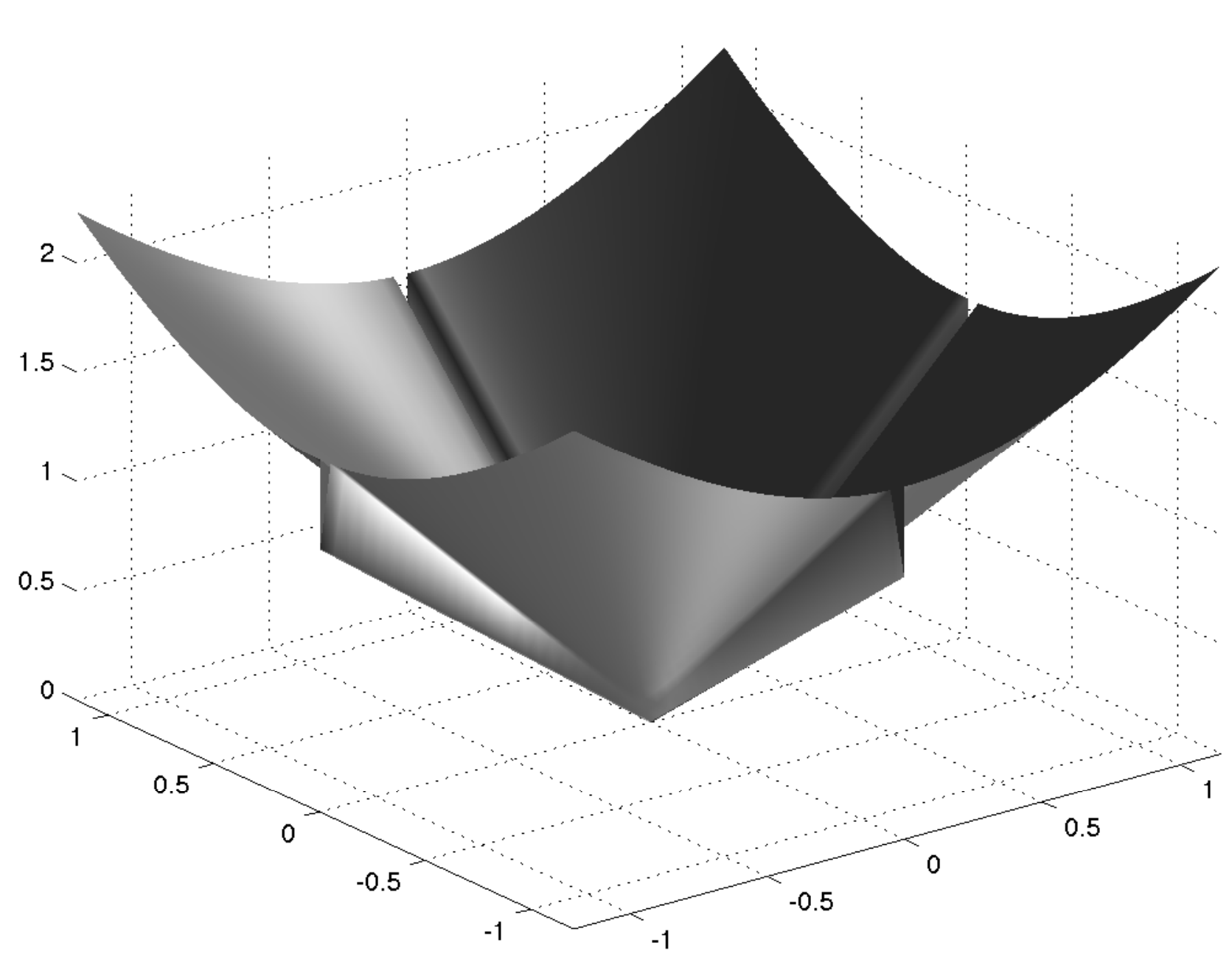}&
\includegraphics[width=0.3\tw]{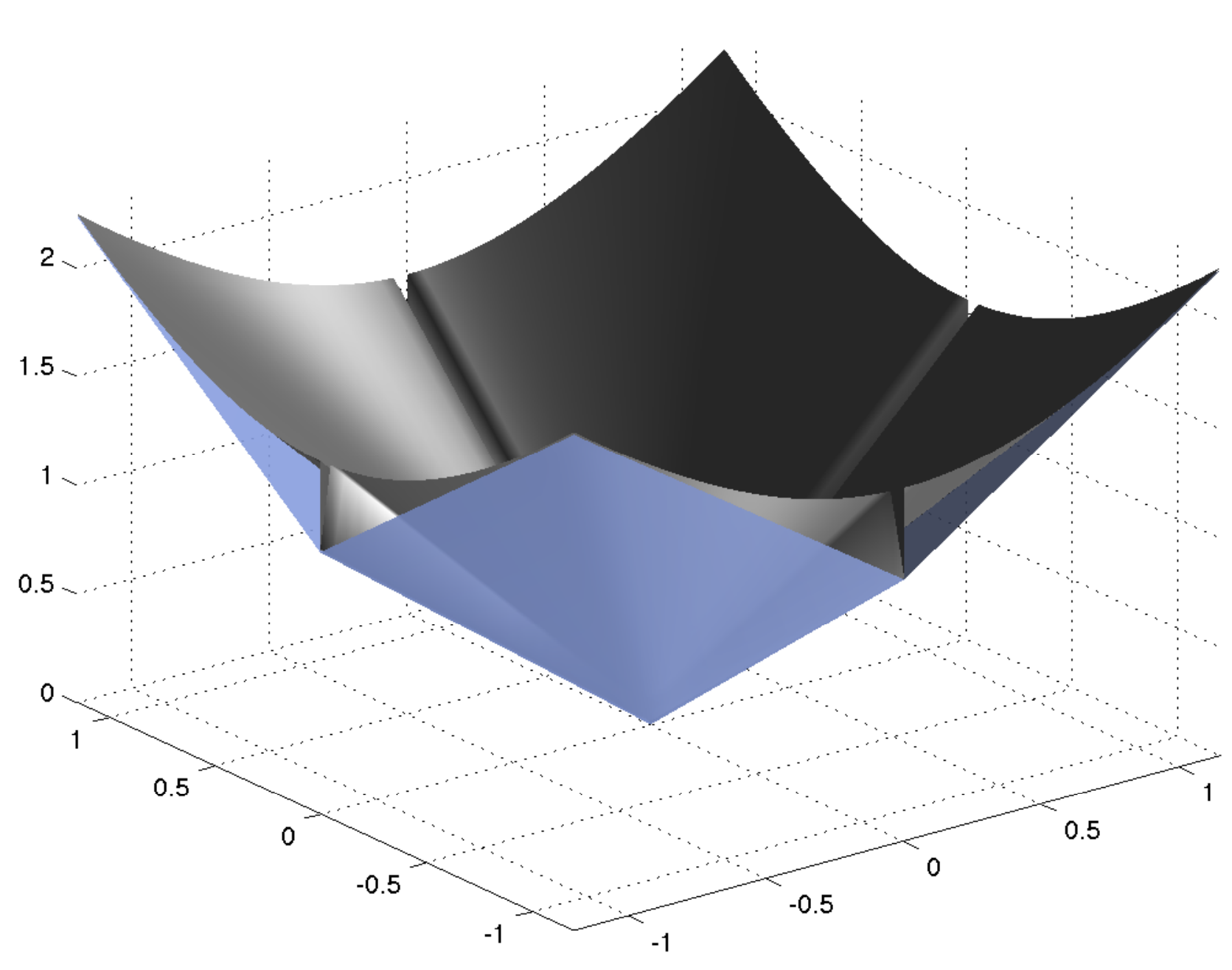}
\end{tabular}
\caption{\textbf{Penalties in 2D} From left to right: the graph of the penalty \texttt{pen}, the graph of penalty $\text{\texttt{pen}}_h$ with $p=2$, and the graph of the norm $\Omega^F_2$ in blue overlaid over graph of $\text{\texttt{pen}}_h$, for the combinatorial function $F:2^V \rightarrow \RR^+$, with $F(\varnothing)=0$, $F(\{1\})=F(\{2\})=1$ and $F(\{1,2\})=1.8$. }
\label{fig:penalties}
\end{figure}

\subsection{Special cases.} 
\label{sec:special}
\paragraph{Case $p=1$.} In that case, letting $d_k=\max_{A \ni k} F(A)$, the dual norm is $\Omega_1^\ast(s) = \max_{k \in V}|s_k|/d_k$
so that $\Omega_1(w) = \sum_{k \in V}  d_k\, | w_k|$ is always a weighted $\ell_1$-norm. 
But regularizing with a weighted $\ell_1$-norm leads to estimators that can potentially have all sparsity patterns possible (even if some are obviously privileged) and in that sense a weighted $\ell_1$-norm cannot encode hard structural constraints on the patterns.
Since this means in other words that the $\ell_1$-relaxations essentially lose the combinatorial structure of allowed sparsity patterns possibly encoded in $F$, 
 we focus, from now on, on the case $p>1$.
 
\paragraph{Lasso, group Lasso.}
$\Omega_p$ instantiates as the $\ell_1$, $\ell_p$ and $\ell_1/\ell_p$-norms for the simplest functions:
\BIT
\item If $F(A) = |A|$, then  $\Omega_p(\w) = \| \w \|_1$, since $\Omega_p^\ast(s) = \max_A \frac{ \| s_A\|_q}{|A|^{1/q}} = \| s \|_\infty$. It is interesting that the cardinality function is always relaxed to the $\ell_1$-norm for all $\ell_p$-relaxations, and is not an artifact of the traditional relaxation on an $\ell_\infty$-ball.
\item If $F(A) = 1_{\{A \neq \varnothing\}}$ , then   $\Omega_p(\w) = \| \w \|_p$, since
$\Omega_p^\ast(s) = \max_{A} \| s_A \|_q = \| s \|_q$.
\item If $F(A) \!=\! \sum_{j=1}^g 1_{\{ A \cap G_j  \neq \varnothing\}}$, for $(G_j)_{j \in \{1,\dots,g\}}$ a partition of $V$, then  $\Omega_p(\w) = \sum_{j=1}^g\| \w_{G_j} \|_p$ is the group Lasso or $\ell_1/\ell_p$-norm \citep{YuaLi06}. This result provides a principled derivation for the form of these norms, which did not exist in the literature. For groups which do not form a partition, this identity does in fact not hold in general for $p<\infty$, as we discuss in Section~\ref{sec:overlap}.
\EIT

\paragraph{Submodular functions and $p=\infty$.}
For a submodular function $F$ and in the $p=\infty$ case, the norm $\Omega^F_\infty$ that we derived actually coincides with the relaxation proposed by \citet{bach2010structured}, and as showed in that work, $\Omega^F_\infty(w)=f(|w|)$, where $f$ is a function associated with $F$ and called the \emph{\lova  extension} of $F$. We discuss the case of submodular functions in detail in Section~\ref{sec:submod}.

\subsection{Lower combinatorial envelope}
\label{sec:lce}
The fact that when $F$ is a submodular function, $\Omega^F_\infty$ is equal to the \lova extension $f$ on the positive orthant provides a guarantee on the tightness of the relaxation.
Indeed $f$ is called an ``extension" because $\forall A \subset 2^V,\: f(1_A)=F(A)$, so that $f$ can be seen to extend the function $F$ to $\RR^d$; as a consequence, $\Omega^F_\infty(1_A)=f(1_A)=F(A)$, which means that the relaxation is tight for all $w$ of the form $w=c\,1_A$, for any scalar constant $c \in \RR$ and any set $A \subset V$. If $F$ is not submodular, this property does not necessarily hold, thereby suggesting that the relaxation could be less tight in general.
To characterize to which extend this is true, we introduce a couple of new concepts.

Much of the properties of $\Omega_p$, for any $p>1$, are captured by the unit ball of $\Omega^*_\infty$ or its intersection with the positive orthant. 
In fact, as we will see in the sequel, the $\ell_\infty$ relaxation plays a particular role, to establish properties of the norm, to construct algorithms and for the statistical analysis, since it it reflects most directly the combinatorial structure of the function $F$.
 
We define the \emph{canonical polyhedron}\footnote{The reader familiar with submodular functions will recognize that the canonical polyhedron generalizes the submodular polyhedron usually defined for these functions.} associated to the combinatorial function as the polyhedron $\mathcal{P}_F$ defined by
%\vspace{-2mm}
$$\mathcal{P}_F = \{ s \in \rb^d, \ \forall A \subset V, \  s(A) \leq F(A) \}.$$
%\vspace{-2mm}

By construction, it is immediate that the unit ball of $\Omega^*_\infty$ is $\{s \in \rb^d \mid |s| \in \mathcal{P}_F\}.$

From this polyhedron, we construct a new set-function which restitutes the features of $F$ that are captured by $\mathcal{P}_F$:% 
\begin{definition}[Lower combinatorial envelope]
Define the \emph{lower combinatorial envelope} (LCE) of $F$ as the set-function $\Fl$ defined by:
$$\Fl(A)= \max_{s \in \mathcal{P}_F} s(A).$$
\end{definition}
By construction, even when $F$ is not monotonic, $\Fl$ is always non-decreasing (because 
$\mathcal{P}_F  \subset \rb_+^d$).

One of the key properties of the lower combinatorial envelope is that, as shown in the next lemma,  $\Omega^F_\infty$ is an extension of $\Fl$
in the same way that the \lova extension is an extension of $F$ when $F$ is submodular.
\begin{lemma}(Extension property)
\label{lem:ext_lova}  $\displaystyle \Omega^F_\infty(1_A) \:=\:  \Fl(A) \rule{0mm}{1pc}$.
\end{lemma}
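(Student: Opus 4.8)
The plan is to read off $\Omega^F_\infty(1_A)$ from duality and recognize the resulting quantity as the linear program that defines $\Fl(A)$. I would start from the general identity for a norm, $\Omega_\infty(w)=\sup_{\Omega_\infty^*(s)\le 1}\langle s,w\rangle$, which applies here because $\Omega_\infty^*$ is a genuine dual norm (the footnote after the definition of $\Omega_p^*$ guarantees this under the standing assumptions on $\mathcal D_0$ and the positivity of $F$). Evaluating at $w=1_A$ gives
\[
\Omega^F_\infty(1_A)=\sup_{\Omega_\infty^*(s)\le 1}\langle s,1_A\rangle=\sup_{\Omega_\infty^*(s)\le 1} s(A).
\]
Since $p=\infty$ forces $q=1$, the excerpt has already identified the dual unit ball $\{s:\Omega_\infty^*(s)\le 1\}$ with $\{s:|s|\in\mathcal P_F\}$, so the right-hand side becomes $\sup_{|s|\in\mathcal P_F}s(A)$.

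The second step is to eliminate the absolute value and match the supremum to the definition of the lower combinatorial envelope. Writing $t=|s|$, feasibility reduces to $t\in\mathcal P_F$ with $t\ge 0$ automatic, and since $s(A)=\sum_{i\in A}s_i\le\sum_{i\in A}t_i=t(A)$ with equality attained by choosing $s_i=t_i\ge 0$ for $i\in A$, I obtain $\sup_{|s|\in\mathcal P_F}s(A)=\max_{t\in\mathcal P_F\cap\RR_+^d}t(A)$. It then remains to check that restricting the maximization of $s\mapsto s(A)$ to the nonnegative orthant does not lower its value, i.e. $\max_{t\in\mathcal P_F\cap\RR_+^d}t(A)=\max_{s\in\mathcal P_F}s(A)=\Fl(A)$. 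This is exactly where I would invoke the positivity property recorded just before the statement, namely $\mathcal P_F\subset\RR_+^d$ (equivalently $\mathcal P_F=\mathcal P_F\cap\RR_+^d$), which collapses the two feasible sets and closes the chain of equalities.

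I expect the genuine content, and the main obstacle, to lie in this last reduction rather than in the duality bookkeeping: one must ensure that taking absolute values inside the dual ball loses no mass on the coordinates indexed by $A$, and that maximizing $s(A)$ over the full canonical polyhedron is not strictly larger than maximizing it over its intersection with the positive orthant. The positivity/monotone structure of $\mathcal P_F$ is what underwrites both, with the optimizer taken to be the nonnegative vector $s=t$ realizing $|s|(A)$. As a sanity check I would verify the identity on the two benchmark cases of the excerpt: for $F(A)=1_{\{A\neq\varnothing\}}$ the polyhedron enforces $s(A)\le 1$ with equality attainable, giving $\Fl(A)=1=\Omega_\infty(1_A)$; and for a non-decreasing submodular $F$, the greedy vertex of $\mathcal P_F$ is nonnegative and attains $s(A)=F(A)$, recovering the classical extension property of the \lova extension and confirming $\Fl=F$ in that regime.
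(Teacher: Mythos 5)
Your proof is correct and follows essentially the same route as the paper's: Fenchel duality gives $\Omega^F_\infty(1_A)=\max_{\Omega_\infty^\ast(s)\leq 1} s^\top 1_A$, the dual unit ball is identified with the canonical polyhedron, and the result is then the definition of $\Fl(A)$. The only difference is that you make explicit the step the paper glosses over---passing from the ball $\{s : |s|\in \mathcal{P}_F\}$ to $\mathcal{P}_F$ itself, which indeed rests on the nonnegativity $\mathcal{P}_F\subset \RR_+^d$ noted in the paper---a welcome clarification but not a different argument.
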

\begin{proof}
From the definition of $\Omega^F_\infty$, $\mathcal{P}_F$ and $F_-$, we get:
$\displaystyle \Omega^F_\infty(1_A)  \!=\! \max_{ {\Omega_\infty^F}^\ast(s)  \leq 1 } 1_A^\top \, s
 \!=\!  \max_{s \in \mathcal{P}_F} s^\top 1_A  \!=\!   \Fl(A) \rule{0mm}{1pc}$
\end{proof} 
Functions that are close to their LCE have in that sense a tighter relaxation than others.

\begin{figure}
\includegraphics[width=.95\tw]{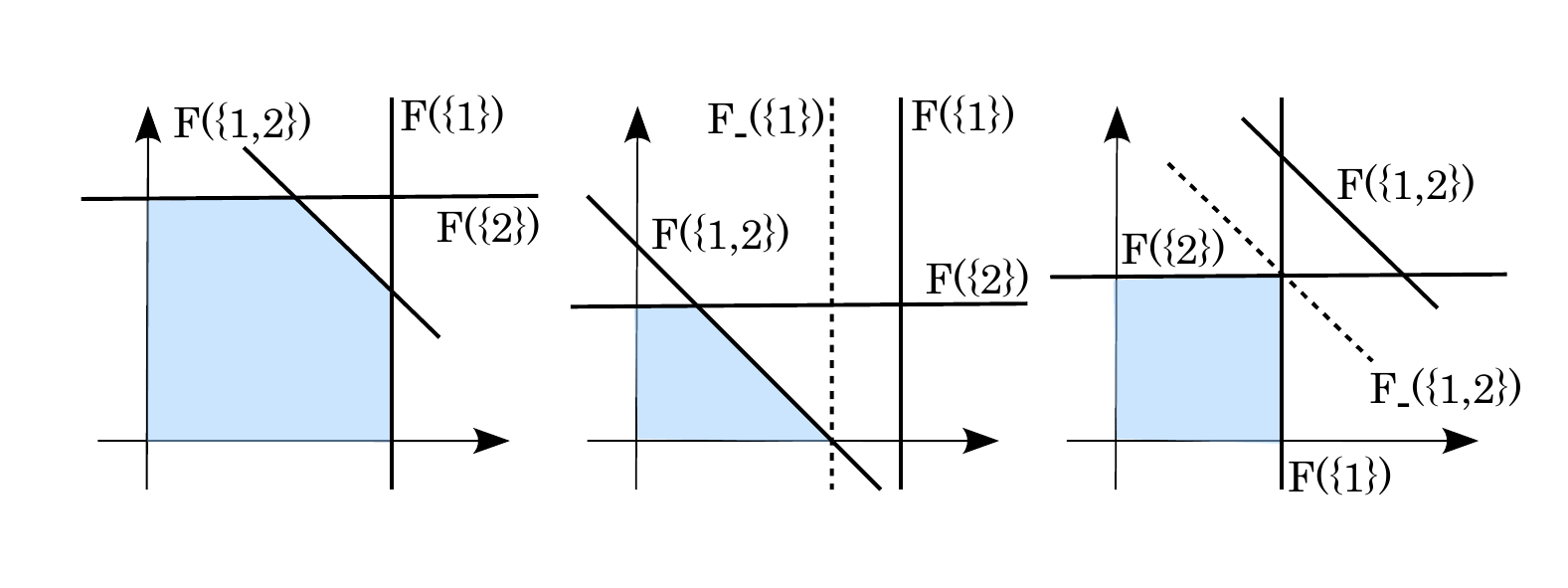}

\vspace*{-.5cm}

\caption{Intersection of the canonical polyhedron with the positive orthant for three different functions $F$. Full lines materialize the inequalities $s(A)\leq F(A)$ that define the polyhedron. Dashed line materialize the induced constraints $s(A) \leq \Fl(A)$ that results from all constraints $s(B)\leq F(B), \: B \in 2^V$. From left to right: (i) $\D_F=2^V$ and $\Fl=F=\Fu$; (ii) $\D_F= \{ \{2\},\{1,2\}\}$ and $\Fl(\{1\})<F(\{1\})$; (iii) $\D_F= \{ \{1\},\{2\}\}$ corresponding to a weighted $\ell_1$-norm.
}
\label{fig:submod_pol}
\end{figure}

A second important property is that a function $F$ and its LCE share the same canonical polyhedron. This will result as a immediate corollary from the following lemma:

\begin{lemma}
$\forall s \in \RR^V_+, \: \max_{A \subset V} \frac{s(A)}{F(A)}=\max_{A \subset V} \frac{s(A)}{\Fl(A)}$.
\end{lemma}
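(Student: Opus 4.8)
The plan is to prove the two inequalities separately, using throughout that $s \in \RR_+^V$ (so that $s(A) \geq 0$ for every $A$) and that both maxima effectively range over nonempty $A$, for which $F(A) > 0$. The first thing I would record is that $\Fl \leq F$ pointwise: by definition every $s' \in \mathcal{P}_F$ satisfies the constraint $s'(A) \leq F(A)$, so taking the supremum over $\mathcal{P}_F$ gives $\Fl(A) = \max_{s' \in \mathcal{P}_F} s'(A) \leq F(A)$ (this is vacuous but harmless when $F(A)=+\infty$). It is also worth noting that both sides of the identity are the gauge of $\mathcal{P}_F$, resp. $\mathcal{P}_{\Fl}$, evaluated at $s$, so the lemma is the analytic counterpart of the statement that $F$ and $\Fl$ share the same canonical polyhedron on the positive orthant.

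From $0 < \Fl(A) \leq F(A)$ and $s(A) \geq 0$ one gets the termwise inequality $\frac{s(A)}{\Fl(A)} \geq \frac{s(A)}{F(A)}$, hence $\max_{A} \frac{s(A)}{\Fl(A)} \geq \max_{A} \frac{s(A)}{F(A)}$. This disposes of one direction immediately (the only point to check is that $\Fl(A)>0$ for $A \neq \varnothing$, which holds because $\mathcal{P}_F$ contains a point with strictly positive $A$-mass).

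The substantive direction is the reverse inequality, and this is where the one idea of the proof sits. Let $t = \max_{A} \frac{s(A)}{F(A)}$, which is positive whenever $s \neq 0$ (pick $i$ with $s_i>0$ and a finite-valued $A \ni i$, available since $\D_0$ covers $V$). By definition of $t$ we have $s(A) \leq t\, F(A)$, i.e. $\big(\tfrac{1}{t}s\big)(A) \leq F(A)$, for \emph{every} $A$; in other words $\tfrac{1}{t}s \in \mathcal{P}_F$. But then $\tfrac{1}{t}s$ is an admissible competitor in the program defining $\Fl(A)$, so $\frac{s(A)}{t} \leq \Fl(A)$, i.e. $\frac{s(A)}{\Fl(A)} \leq t$, for every $A$. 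Maximizing over $A$ yields $\max_{A} \frac{s(A)}{\Fl(A)} \leq t = \max_{A} \frac{s(A)}{F(A)}$, and combining with the previous paragraph gives equality.

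I expect the main (and essentially only) obstacle to be spotting that the optimal scaling factor $t$ is precisely what places $\tfrac{1}{t}s$ inside $\mathcal{P}_F$, turning it into a feasible point for the maximization that defines $\Fl$; once this is seen, both inequalities become one-liners. What remains is routine bookkeeping: the degenerate case $s=0$ (both sides vanish), the sets with $F(A)=+\infty$ (which contribute $0$ to either maximum), and the positivity of $\Fl(A)$ needed for the ratios to be well defined.
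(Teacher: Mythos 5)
Your proof is correct and rests on exactly the same key idea as the paper's: rescaling $s$ by the optimal ratio $t=\max_A s(A)/F(A)$ so that $s/t \in \mathcal{P}_F$, and then invoking feasibility of $s/t$ in the maximization defining $\Fl$ to bound the ratios $s(A)/\Fl(A)$. The only difference is presentational—the paper phrases this as a proof by contradiction while you argue directly—so the two arguments are essentially identical.
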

\begin{proof}
Given that for all $A$, $\Fl(A) \leqslant F(A)$, the left hand side is always smaller or equal to the right hand side.
 We now reason by contradiction. Assume that there exists $s$ such that  $\forall A \subset V, \: s(A) \leq \nu F(A)$
 but that there exists $B \subset V$,\: $s(B)> \nu F(B)$, then $s'=\frac{1}{\nu}s$ satisfies $\forall A \subset V, \: s'(A)\leq F(A)$. By definition of $\Fl$, the latter implies  that $\Fl(B)\geq s'(B)=\frac{1}{\nu}s(B)>\frac{1}{\nu}\cdot \nu F(B)$,
 where the last inequality results from the choice of this particular $B$. This would imply $\Fl(B)>F(B)$, but by definition of $\Fl$, we have $\Fl(A) \leq F(A)$ for all $A \subset V$.
\end{proof}

\begin{corollary}
$\mathcal{P}_F=\mathcal{P}_{\Fl}$.
\end{corollary}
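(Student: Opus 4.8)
The plan is to establish the equality by proving the two inclusions $\mathcal{P}_{\Fl} \subseteq \mathcal{P}_F$ and $\mathcal{P}_F \subseteq \mathcal{P}_{\Fl}$ separately, and the cleanest route goes directly through the definition of $\Fl$ as a maximum over $\mathcal{P}_F$ rather than through the preceding lemma. First I would record that $\Fl \leq F$ pointwise: since every $s \in \mathcal{P}_F$ satisfies $s(A) \leq F(A)$, taking the maximum over $s \in \mathcal{P}_F$ in the definition $\Fl(A) = \max_{s \in \mathcal{P}_F} s(A)$ yields $\Fl(A) \leq F(A)$ for every $A$. The defining inequalities of $\mathcal{P}_{\Fl}$ are therefore at least as stringent as those of $\mathcal{P}_F$, so $s \in \mathcal{P}_{\Fl}$ forces $s(A) \leq \Fl(A) \leq F(A)$ for all $A$, i.e.\ $\mathcal{P}_{\Fl} \subseteq \mathcal{P}_F$. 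This direction uses only monotonicity of the polyhedron in its right-hand side.

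For the reverse inclusion $\mathcal{P}_F \subseteq \mathcal{P}_{\Fl}$, the key observation is that $\Fl(A)$ is by construction the support-function-type maximum of $s \mapsto s(A)$ taken over the very polyhedron $\mathcal{P}_F$. Hence for any fixed $s \in \mathcal{P}_F$ and any $A \subset V$ we have $s(A) \leq \max_{s' \in \mathcal{P}_F} s'(A) = \Fl(A)$; since this holds for all $A$, the point $s$ satisfies every defining inequality of $\mathcal{P}_{\Fl}$, so $s \in \mathcal{P}_{\Fl}$. Combining the two inclusions gives $\mathcal{P}_F = \mathcal{P}_{\Fl}$, valid on all of $\rb^d$.

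To follow instead the route suggested by the lemma, I would note that for $s \in \rb_+^d$ the conditions $s \in \mathcal{P}_F$ and $s \in \mathcal{P}_{\Fl}$ are respectively equivalent to $\max_A s(A)/F(A) \leq 1$ and $\max_A s(A)/\Fl(A) \leq 1$ (the maxima ranging over nonempty $A$, the empty set giving the trivial constraint $0 \leq 0$), and the lemma asserts these two maxima coincide; this yields $\mathcal{P}_F \cap \rb_+^d = \mathcal{P}_{\Fl} \cap \rb_+^d$. I expect no genuine obstacle here: the whole statement is an essentially immediate unwinding of the definition of $\Fl$. The only subtlety worth flagging is that the lemma on its own controls only the positive orthant, so one still has to argue the equality extends to all of $\rb^d$; the direct argument above bypasses this entirely, since the reverse inclusion is automatic precisely because $\Fl$ was defined as a maximum over $\mathcal{P}_F$ itself. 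I would therefore present the direct proof as primary and mention the lemma-based identity as the motivating interpretation.
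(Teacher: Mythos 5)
Your proof is correct, and its primary argument takes a genuinely more direct route than the paper's. The paper obtains the corollary as an immediate consequence of the preceding lemma ($\max_{A} s(A)/F(A) = \max_{A} s(A)/\Fl(A)$ for $s \in \RR_+^V$), itself proved by a scaling-and-contradiction argument; you bypass the lemma and verify the two inclusions directly. The mathematical content overlaps---both arguments hinge on exactly the two facts you isolate, namely $\Fl \le F$ (making the constraints of $\mathcal{P}_{\Fl}$ at least as tight as those of $\mathcal{P}_F$) and the tautology that any $s \in \mathcal{P}_F$ satisfies $s(A) \le \max_{s' \in \mathcal{P}_F} s'(A) = \Fl(A)$---but your packaging is shorter, needs no contradiction, and, as you note, it establishes the equality for the polyhedra exactly as defined, as subsets of $\rb^d$ with no sign restriction, whereas the lemma-based route only controls the intersection with the positive orthant. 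One remark: the paper's stated definition of $\mathcal{P}_F$ omits the constraint $s \ge 0$ even though the paper later treats $\mathcal{P}_F$ as a subset of $\RR_+^d$ (and some of its worked computations of $\Fl$, e.g.\ the block-coding example, implicitly rely on that sign constraint); your argument is indifferent to which convention is adopted, provided $\mathcal{P}_F$ and $\mathcal{P}_{\Fl}$ are defined with the same one, which is a further robustness advantage. What the paper's route buys in exchange is that its lemma is the gauge/dual-norm form of the same fact---for $p=\infty$, $q=1$ it says that the dual norms built from $F$ and from $\Fl$ coincide on $\RR_+^d$---which most directly anticipates the downstream conclusion that $\Omega_p^F = \Omega_p^{\Fl}$ for all $p$. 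Your secondary, lemma-based sketch is a faithful reconstruction of the paper's intended derivation.
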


But the sets $\{w \in \RR^d \mid |w| \in \mathcal{P}_F\}$ and  $\{w \in \RR^d \mid |w| \in \mathcal{P}_{\Fl}\}$ are respectively the unit balls of $\Omega^F_\infty$ and $\Omega^{\Fl}_\infty$. As a direct consequence, we have: 
\begin{lemma}
\label{sec:lce_same_norm}
 For all $p \geq 1, \quad \Omega^F_p=\Omega^{\Fl}_p$.
\end{lemma}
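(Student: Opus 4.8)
The plan is to prove the equivalent statement that the two norms share the same \emph{dual} norm: since on a finite-dimensional space a norm is the bidual of its dual, the equality $(\Omega_p^F)^\ast = (\Omega_p^{\Fl})^\ast$ forces $\Omega_p^F = \Omega_p^{\Fl}$ (and the footnoted assumptions on $\D_0$ and on the positivity of $F$ guarantee we are dealing with genuine norms, so biduality applies). Recall that $(\Omega_p^F)^\ast(s) = \max_{A \neq \varnothing} \|s_A\|_q / F(A)^{1/q}$, and likewise for $\Fl$; hence the whole claim reduces to showing that these two maxima agree for every $s$.

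The key step is a monotone reparametrization that collapses the $\ell_q$-ratio onto the positive orthant, where the preceding lemma applies. Assume first $p>1$, so that $q<\infty$, and to a given $s$ associate the vector $t \in \RR_+^d$ with $t_i = |s_i|^q$. Then $\|s_A\|_q^q = \sum_{i \in A} |s_i|^q = t(A)$, and since $x \mapsto x^{1/q}$ is increasing we may pull the power outside the maximum:
$$\big((\Omega_p^F)^\ast(s)\big)^q = \max_{A \neq \varnothing} \frac{\|s_A\|_q^q}{F(A)} = \max_{A \neq \varnothing} \frac{t(A)}{F(A)}.$$
Because $t \in \RR_+^d$, the preceding lemma gives $\max_{A} t(A)/F(A) = \max_A t(A)/\Fl(A)$, which is exactly $\big((\Omega_p^{\Fl})^\ast(s)\big)^q$; taking $q$-th roots yields $(\Omega_p^F)^\ast(s) = (\Omega_p^{\Fl})^\ast(s)$. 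Equivalently, this shows that $(\Omega_p^F)^\ast$ depends on $F$ only through the positive-orthant functional $t \mapsto \max_A t(A)/F(A)$, which the preceding lemma (equivalently, the corollary $\mathcal{P}_F = \mathcal{P}_{\Fl}$) shows coincides for $F$ and $\Fl$. This is the precise sense in which the identity of the unit balls of $\Omega_\infty^F$ and $\Omega_\infty^{\Fl}$ propagates to all $p$.

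The only value not covered is $p=1$, where $q=\infty$ and the substitution $t_i = |s_i|^q$ degenerates; I would dispatch it separately, either from its explicit form (checking that the relevant weights are computed from data shared by $F$ and $\Fl$, e.g.\ the singleton values, for which one verifies $\Fl(\{k\}) = F(\{k\})$ by taking $s_k = F(\{k\})$ and the other coordinates arbitrarily negative in $\mathcal{P}_F$), or simply by continuity, letting $p \to 1^+$ in the identity just established. I do not expect a genuine obstacle here: the mathematical content was already spent in proving $\mathcal{P}_F = \mathcal{P}_{\Fl}$, and the sole new ingredient is recognizing the increasing change of variables $t_i = |s_i|^q$ that rewrites every dual norm $(\Omega_p^F)^\ast$ as a $q$-th root of the \emph{single} positive-orthant functional $\max_A t(A)/F(A)$, after which equality is immediate.
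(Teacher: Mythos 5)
Your proof is correct and is essentially the paper's own argument made explicit: the paper obtains this lemma as a ``direct consequence'' of the corollary $\mathcal{P}_F=\mathcal{P}_{\Fl}$, and your monotone change of variables $t_i=|s_i|^q$ is precisely the implicit step there, exhibiting each dual norm $(\Omega^F_p)^\ast$ as a function of $F$ only through the polyhedron $\mathcal{P}_F$, so that the two dual norms (and hence, by biduality, the norms themselves) coincide. One caution on your $p=1$ fallback: the singleton identity $\Fl(\{k\})=F(\{k\})$ you invoke holds only when $F(\{k\})<\infty$ (in Example~\ref{two_groups}, $F(\{1\})=\infty$ while $\Fl(\{1\})=1$), so the limiting argument $p\to 1^+$ is the safer route for that endpoint, which the paper itself also leaves implicit.
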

By construction, $\Fl$ is the largest function which lower bounds $F$, and has the same $\ell_p$-relaxation as $F$, hence the term of \emph{lower combinatorial envelope}.

Figure~\ref{fig:submod_pol} illustrates the fact that $F$ and $\Fl$ share the same canonical polyhedron and that the value of $\Fl(A)$ is determined by the values that $F$ takes on other sets. This figure also suggests that some constraints $\{\, s(A) \leq F(A) \,\}$ can never be active and could therefore be removed. This will be formalized in Section~\ref{sec:uce}. 

To illustrate the relevance of the concept of lower combinatorial envelope, we compute it for a specify combinatorial function, the range function, and show that it enables us to answer the question of whether the relaxation would be good in this case.

\begin{example}[Range function]
\label{ex:range}
Consider, on $V=[1,d]$, the range function $F: A \mapsto \max(A)- \min(A)+1$ where $\min(A)$ (resp. $\max(A)$) is the smallest (resp. largest) element in $A$.
A motivation to consider this function is that it induces the selection of supports that are exactly intervals. Since $F(\{i\})=1,\: i \in V$, then for all $s \in \mathcal{P}_F$, we have $s(A) \leq |A| \leq F(A)$. But this implies that $\D_F$ is the set of singletons and that $\Fl(A)=|A|$, so that $\Omega^F$ is the $\ell_1$-norm and is oblivious of the structure encoded in $F$. 
\end{example}

As we see from this example, the lower combinatorial envelope can be interpreted as the combinatorial function which the relaxation is actually able to capture.

\subsection{Upper combinatorial envelope} 
\label{sec:uce}

Let $F$ be a set-function and $\mathcal{P}_F$ its canonical polyhedron.
In this section, we follow an intuition conveyed by Figure~\ref{fig:submod_pol} and find a compact representation of $F$: the polyhedron $\mathcal{P}_F$ has in many cases a number of faces which much smaller than $2^d$. We formalize this in the next lemma.
\begin{lemma}
\label{lem:core}
There exists a minimal subset $\D_F$ of $2^V$ such that for $s \in \RR^d_+,$ 
$$s \in \mathcal{P}_F \Leftrightarrow (\forall A \in \D_F, \: s(A) \leq F(A)).$$
\end{lemma}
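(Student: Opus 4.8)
The plan is to regard $Q \eqdef \mathcal{P}_F \cap \RR^d_+$ as a genuine polyhedron and to identify $\D_F$ with its non-trivial facets. First I would observe that, since we restrict to $s \in \RR^d_+$, the set $Q$ is exactly the intersection of the coordinate halfspaces $\{s_i \geq 0\}$ with the halfspaces $\{s(A) \leq F(A)\}$ indexed by $A \in 2^V$; as there are finitely many of these, $Q$ is polyhedral. I would also record that $Q$ is full-dimensional: choosing $\epsilon>0$ smaller than $F(A)/d$ for every $A$ with $F(A)<\infty$ forces the cube $[0,\epsilon]^d$ to lie in $Q$, because $s(A) \leq \epsilon|A| \leq \epsilon d < F(A)$ there (using $F(A)>0$ for $A\neq\varnothing$). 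Full-dimensionality is the load-bearing step, since it is what makes the minimal description canonical rather than merely existent.

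Existence of a minimal $\D_F$ is then essentially free: the collection of subsets $\mathcal{S} \subseteq 2^V$ for which $\RR^d_+ \cap \bigcap_{A \in \mathcal{S}}\{s(A) \leq F(A)\} = Q$ is non-empty (it contains $2^V$), so a subset of minimal cardinality exists and is automatically minimal for inclusion. The real content, and the reason $\D_F$ deserves a name, is to pin this set down intrinsically. Here I would invoke the standard fact that a full-dimensional polyhedron admits a unique irredundant inequality description, in which the surviving inequalities are precisely the facet-defining ones (up to positive scaling). Thus I would set $\D_F$ to be the collection of $A$ such that the face $Q \cap \{s(A)=F(A)\}$ has dimension $d-1$.

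Two checks then finish the argument. First, the $F$-facets are disjoint from the positivity facets: an $F$-constraint has normal $1_A$, which can only be proportional to a coordinate normal $e_i$ when $A=\{i\}$, and even then the hyperplanes differ because $F(\{i\})>0$ while the coordinate facet sits on $\{s_i=0\}$. Hence the positivity halfspaces, which are always retained, account for exactly the facets lying on coordinate hyperplanes, and every remaining facet must be supported by some $F$-constraint. Second, because the normals $1_A$ are pairwise distinct, each non-coordinate facet is supported by a \emph{unique} member of the $F$-family, so dropping any $A \in \D_F$ strictly enlarges $Q$ (the corresponding facet is no longer cut) while the constraints outside $\D_F$ are redundant; this simultaneously yields $Q = \RR^d_+ \cap \bigcap_{A\in\D_F}\{s(A)\leq F(A)\}$, minimality, and uniqueness.

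I expect the main obstacle to be the interaction between the positivity constraints and the $F$-constraints: it is precisely the presence of $\RR^d_+$ that renders most of the $2^d$ inequalities redundant (as in the range-function example, where every $A$ with $|A|\geq 2$ drops out), so the argument cannot treat the two families symmetrically and must keep positivity fixed while pruning only the $F$-family. A secondary subtlety is that individual redundancy of a constraint does not by itself license removing a whole block of constraints at once; this is exactly what the facet/irredundancy characterization of full-dimensional polyhedra resolves, which is why establishing full-dimensionality at the outset matters. For a fully self-contained redundancy test one may phrase membership via linear programming: $A \in \D_F$ iff $\max\{\,s(A) : s\in \RR^d_+,\ s(B)\leq F(B)\ \forall B\neq A\,\} > F(A)$.
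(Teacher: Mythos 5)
Your proof is correct, and it reaches the paper's conclusion by a genuinely different route. The paper's own proof is purely combinatorial: it calls a set $A$ \emph{redundant} when the inequality $s(A)\leq F(A)$ is implied by the inequalities attached to other sets, defines $\D_F$ as the family of non-redundant sets, and then invokes a ``peeling'' argument --- removing redundant constraints one at a time and claiming recursively that each discarded inequality remains implied by those not yet removed. You instead work geometrically: you establish that $\mathcal{P}_F\cap\RR^d_+$ is a full-dimensional polyhedron (via the cube $[0,\epsilon]^d$, using $F(A)>0$ for $A\neq\varnothing$), invoke the uniqueness of the irredundant description of a full-dimensional polyhedron, and identify $\D_F$ with the facet-defining $F$-constraints, after checking that no $F$-constraint can support the same facet as a positivity constraint and that distinct sets give non-proportional normals $1_A$. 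Your route buys three things the paper leaves implicit or unproved: (i) it closes the ``block removal'' gap you yourself flag --- peeling individually redundant constraints is in general order-dependent and can fail when two constraints cut the same hyperplane, and it is exactly your non-proportionality check that rules this out here, so the paper's recursion is only valid for the same geometric reason your argument makes explicit; (ii) it yields uniqueness of the minimal family, not merely existence, which is what entitles the paper to speak of \emph{the} core set; (iii) it proves, rather than asserts, the remark stated right after the lemma that $\D_F$ indexes the faces of dimension $d-1$ (properly, of $\mathcal{P}_F\cap\RR^d_+$ away from the coordinate hyperplanes --- your restriction to the positive orthant is the correct reading, since a constraint can be redundant over $\RR^d_+$ without being so over $\RR^d$). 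What the paper's approach buys in exchange is self-containedness: it uses nothing beyond the logic of implied inequalities, whereas you rely on the standard facet-description theorem from polyhedral theory.
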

\begin{proof}
To prove the result, we define as in \citet[][Sec. 8.1]{obozinski2011group} the notion of \emph{redundant} sets: we say that a set $A$ is redundant for $F$ if 
$$\exists A_1, \ldots, A_k \in 2^V\backslash \{A\}, \quad (\forall i,\: s(A_i) \leq F(A_i)) \Rightarrow (s(A) \leq F(A)).$$
Consider the set $\D_F$ of all non redundant sets. 

We will show that, in fact, $A$ is redundant for $F$ if and only if 
$$\exists A_1, \ldots,A_k \in \D_F\backslash \{A\}, \quad (\forall i,\: s(A_i) \leq F(A_i)) \Rightarrow (s(A) \leq F(A)),$$ which proves the lemma.

Indeed, we can use a peeling argument to remove all redundant sets one by one and show recursively that the inequality constraint associated with a given redundant set is still implied by all the ones we have not removed yet. The procedure stops when we have reached the smallest set $\D_F$ of constraints implying all the other ones.
\end{proof}

We call $\D_F$ the \emph{core set} of $F$. It corresponds to the set of faces of dimension $d-1$ of $\mathcal{P}_F$.

This notion motivates the definition of a new set-function:
\begin{definition}(Upper combinatorial envelope)
We call \emph{upper combinatorial envelope} (UCE) the function $\Fu$ defined by $\Fu(A)=F(A)$ for $A \in \D_F$ and $\Fu(A)=\infty$ otherwise.
\end{definition}

As the reader might expect at this point, $\Fu$ provides a compact representation which captures all the information about $F$ that is preserved in the relaxation:

\begin{proposition}
\label{prop:lce_uce}
$F,\Fl$ and $\Fu$ all define  the same canonical polyhedron $\mathcal{P}_{\Fl}=\mathcal{P}_{F}=\mathcal{P}_{\Fu}$ and share the same core set $\D_F$. Moreover,
$\forall A \in \D_F,\: \Fl(A)=F(A)=\Fu(A).$
\end{proposition}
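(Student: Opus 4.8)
The plan is to establish the chain of equalities in three coordinated pieces, building directly on the two lemmas and the corollary that precede this proposition. First I would settle the statement about the canonical polyhedra. The equality $\mathcal{P}_F = \mathcal{P}_{\Fl}$ is already the content of the corollary following the previous lemma, so the only genuinely new work is to show $\mathcal{P}_{\Fu} = \mathcal{P}_F$. Here I would invoke Lemma~\ref{lem:core}: for $s \in \RR^d_+$, membership in $\mathcal{P}_F$ is equivalent to satisfying the constraints $s(A) \leq F(A)$ only for $A$ in the core set $\D_F$. Since $\Fu$ agrees with $F$ on $\D_F$ and is $+\infty$ elsewhere, the constraints $s(A) \leq \Fu(A)$ are vacuous outside $\D_F$ and identical to those of $F$ on $\D_F$; thus $\mathcal{P}_{\Fu}$ is cut out by exactly the same finite system of inequalities as $\mathcal{P}_F$, giving $\mathcal{P}_{\Fu} = \mathcal{P}_F$.

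Next I would address the claim that all three functions share the same core set $\D_F$. Since $\D_F$ is defined intrinsically from the polyhedron (Lemma~\ref{lem:core} identifies it with the facets, i.e.\ the faces of dimension $d-1$, of $\mathcal{P}_F$), and we have just shown that $\mathcal{P}_F$, $\mathcal{P}_{\Fl}$ and $\mathcal{P}_{\Fu}$ are literally the same polyhedron, the core set---being a function of the polyhedron alone---must coincide for all three. This is where the geometric reformulation of $\D_F$ as the set of facet-defining inequalities does the real work: because the core set depends only on which inequalities are non-redundant, and the three functions define identical polyhedra, the non-redundant index sets are forced to agree.

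Finally I would prove the pointwise equality $\Fl(A) = F(A) = \Fu(A)$ for $A \in \D_F$. The rightmost equality $F(A) = \Fu(A)$ on $\D_F$ is immediate from the definition of $\Fu$. For the left equality, I would use the definition $\Fl(A) = \max_{s \in \mathcal{P}_F} s(A)$ together with the fact that $A \in \D_F$ indexes a facet of $\mathcal{P}_F$: since the inequality $s(A) \leq F(A)$ is non-redundant, there exists a point of the polyhedron at which it is tight, so the maximum $\max_{s \in \mathcal{P}_F} s(A)$ actually attains the value $F(A)$, whence $\Fl(A) = F(A)$. Combined with the general bound $\Fl(A) \leq F(A)$ noted earlier in the excerpt, this pins down equality exactly on the core set.

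The main obstacle I anticipate is making rigorous the claim that a non-redundant constraint is necessarily tight somewhere on $\mathcal{P}_F$, i.e.\ that $A \in \D_F$ genuinely implies $\max_{s \in \mathcal{P}_F} s(A) = F(A)$ rather than merely $\leq F(A)$. This requires the polyhedral fact that a facet-defining inequality is supported by the polyhedron (otherwise it could be slackened or dropped without changing $\mathcal{P}_F$, contradicting non-redundancy); one must also be slightly careful that the relevant maximum is taken over $\RR^d_+$ and that the polyhedron is nonempty and the supporting face nondegenerate. I would treat this as the crux and verify it via the characterization of $\D_F$ as the minimal irredundant representation from Lemma~\ref{lem:core}, since everything else follows mechanically once the polyhedra are identified.
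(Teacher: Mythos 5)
Your proof is correct and follows essentially the same route as the paper: $\mathcal{P}_{\Fu}=\mathcal{P}_F$ via Lemma~\ref{lem:core}, $\mathcal{P}_F=\mathcal{P}_{\Fl}$ from the earlier corollary, identity of core sets from identity of the polyhedra, and $\Fl(A)=F(A)$ on $\D_F$ from the fact that a constraint that is never active on the polyhedron would be redundant---which is exactly the contrapositive of the argument the paper gives. The polyhedral fact you flag as the crux (non-redundant $\Rightarrow$ tight somewhere on a nonempty polyhedron) is left equally implicit in the paper's own proof, so you are, if anything, more explicit about the one step that requires care.
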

\begin{proof}
To show that $\Omega^{\Fu}_p=\Omega^{F}_p$ we just need to show $\mathcal{P}_{\Fu}=\mathcal{P}_F$. By the definition of $\Fu$ we have $\mathcal{P}_{\Fu}=\{s \in \RR^d \mid s(A) \leq F(A),\, A \in \D_F\}$ but the previous lemma precisely states that the last set is equal to $\mathcal{P}_F$.

We now argue that, for all $A \in \D_F, \: \Fl(A)=F(A)=\Fu(A)$. Indeed, the equality $F(A)=\Fu(A)$ holds by definition, and, for all $A \in \D_F$, we need to have $F(A)=\Fl(A)$
because $\Fl(A)=\max_{s \in \mathcal{P}_F} s(A)=\max_{s \in \mathcal{P}_{\Fu}} s(A)$ and if we had $\Fl(A)<F(A)$, this would imply that $A$ is redundant.
\end{proof}

Finally, the term ``upper combinatorial envelope" is motivated by the following lemma:

\begin{lemma}
\label{lem:uce_property}
$\Fu$ is the pointwise supremum of all the set-functions $H$ that are upper bounds on $F$ and such that $\mathcal{P}_H=\mathcal{P}_F$. 
\end{lemma}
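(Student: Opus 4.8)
The plan is to establish two facts. First, that $\Fu$ itself belongs to the family $\mathcal{H}\eqdef\{H : H \geq F \text{ and } \mathcal{P}_H = \mathcal{P}_F\}$; and second, that every $H \in \mathcal{H}$ satisfies $H(A) \leq \Fu(A)$ for all $A \subset V$. Together these give $\Fu(A) = \max_{H \in \mathcal{H}} H(A)$ for each $A$, i.e.\ $\Fu$ is the (attained) pointwise supremum. The first fact is immediate: $\Fu \geq F$ by definition (it equals $F$ on $\D_F$ and $+\infty$ elsewhere), and $\mathcal{P}_{\Fu} = \mathcal{P}_F$ by Proposition~\ref{prop:lce_uce}.

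For the second fact, fix $H \in \mathcal{H}$ and $A \subset V$. If $A \notin \D_F$ then $\Fu(A) = +\infty$ and the inequality is trivial. If $A \in \D_F$, then $H \geq F$ already gives $H(A) \geq F(A) = \Fu(A)$, so it suffices to prove the reverse inequality $H(A) \leq F(A)$; this then forces $H(A) = F(A) = \Fu(A)$.

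I would prove $H(A) \leq F(A)$ for $A \in \D_F$ by contradiction, exhibiting a point in $\mathcal{P}_H \setminus \mathcal{P}_F$ whenever $H(A) > F(A)$. Since $A$ lies in the core set $\D_F$, it is non-redundant for $F$ (in the sense of the proof of Lemma~\ref{lem:core}): the implication $(\forall B \neq A,\, s(B) \leq F(B)) \Rightarrow s(A) \leq F(A)$ fails, so there is $s \in \RR^d_+$ with $s(B) \leq F(B)$ for all $B \neq A$ but $s(A) > F(A)$. Separately, the point $s' = \epsilon\, 1_V$ satisfies $s'(B) = \epsilon |B| < F(B)$ for every nonempty $B$ once $\epsilon < \min_{B \neq \varnothing} F(B)/|B|$, so all constraints are strict at $s'$. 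Forming the segment $s_\theta = \theta s + (1-\theta)s' \in \RR^d_+$, the map $\theta \mapsto s_\theta(A)$ moves continuously from $s'(A) < F(A)$ to $s(A) > F(A)$, while $s_\theta(B) < F(B)$ holds for all $B \neq A$ and all $\theta \in (0,1)$ (convexity plus the strict slack at $s'$). Choosing $\theta$ just above the crossing value at which $s_\theta(A) = F(A)$ — in particular with $\theta < 1$ — we can ensure $F(A) < s_\theta(A) \leq H(A)$, using $H(A) > F(A)$. This point satisfies $s_\theta(B) < F(B) \leq H(B)$ for $B \neq A$ and $s_\theta(A) \leq H(A)$, hence $s_\theta \in \mathcal{P}_H$, yet $s_\theta(A) > F(A)$ shows $s_\theta \notin \mathcal{P}_F$, contradicting $\mathcal{P}_H = \mathcal{P}_F$.

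The main obstacle is this last construction. Non-redundancy by itself only furnishes a point violating the $A$-constraint, but possibly with $s(A)$ far exceeding $H(A)$, so that it would fail the relaxed constraint $s(A) \leq H(A)$ and hence lie outside $\mathcal{P}_H$. The role of the strict-interior point $s' = \epsilon\, 1_V$ together with the convex combination is precisely to dial the violation down continuously into the window $(F(A), H(A)]$ while keeping every other constraint strict; this is what upgrades a mere out-of-$\mathcal{P}_F$ witness into an in-$\mathcal{P}_H$ witness. An equivalent, purely geometric route uses that $A \in \D_F$ indexes a facet of $\mathcal{P}_F$: perturbing a relative-interior point of that facet along a coordinate of $A$ produces the same separating witness. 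I would adopt whichever phrasing meshes best with the convention fixed for $\mathcal{P}_F$ on the nonnegative orthant.
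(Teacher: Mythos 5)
Your proof is correct, but it follows a genuinely different route from the paper's. The paper never constructs an explicit witness point: it first shows that every $H$ in the family $\mathcal{E}_F$ has the same lower combinatorial envelope as $F$ (since the LCE depends only on the polyhedron) and the same core set $\D_H=\D_F$ (via two redundancy arguments), and then invokes Proposition~\ref{prop:lce_uce} applied to $H$ itself, giving $H(A)=H_-(A)=\Fl(A)=F(A)=\Fu(A)$ for $A\in\D_F$, the case $A\notin\D_F$ being trivial. You instead prove the key inequality $H(A)\leq F(A)$ for $A\in\D_F$ by a direct separation argument: non-redundancy of $A$ furnishes $s$ violating only the $A$-constraint, and blending it with the strictly feasible point $\epsilon\,1_V$ lets you tune the violation into the window $(F(A),H(A)]$, producing a point of $\mathcal{P}_H\setminus\mathcal{P}_F$ and contradicting $\mathcal{P}_H=\mathcal{P}_F$. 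Your approach is more elementary and self-contained—it uses only the definition of redundancy, the positivity of $F$ on nonempty sets, and convexity—and it makes explicit the interior-point perturbation that the paper's machinery leaves implicit (e.g.\ in the step of Proposition~\ref{prop:lce_uce} asserting that $\Fl(A)<F(A)$ would make $A$ redundant); the paper's route, in exchange, yields the additional structural fact that $\D_H=\D_F$ for every $H\in\mathcal{E}_F$. One cosmetic point: your claim that all constraints are strict at $s'=\epsilon\,1_V$ fails for $B=\varnothing$ (where $s'(\varnothing)=0=F(\varnothing)$), but that constraint is vacuously satisfied by every nonnegative $H$, so the conclusion $s_\theta\in\mathcal{P}_H$ is unaffected.
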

\begin{proof}
We need to show that we have 
 $\Fu: A \mapsto \sup\{H(A) \mid H \in \mathcal{E}_F\}$ with  $$\mathcal{E}_F:=\{H: 2^V \rightarrow \overline{\RR}_+ \: \st \: H\geq F \: \text{and} \:\:  \mathcal{P}_H=\mathcal{P}_F\}.$$
But for any $H \in \mathcal{E}_F$, $\mathcal{P}_H=\mathcal{P}_F$ implies that $H_-=\Fl$ by definition of the lower combinatorial envelope. Moreover we have $\mathcal{D}_H\subset\mathcal{D}_F$ since if $A \notin \D_F$, then $A$ is redundant for $F$, i.e. there are $A_1, \ldots, A_k \in \D_F$ such that $\big (\forall i, \: s(A_i) \leq \Fl(A_i) \big ) \Rightarrow  (s(A) \leq F(A)),$ but $F(A) \leq H(A)$, and thus, since $\Fl(A_i)=H_-(A_i)$, this implies that $A$ is redundant for $H$, i.e., $A \notin \D_H$.
Now, assume there is $A \in \D_F \cap\D_H^c$. Since $A$ is redundant for $H$ then there are $A_1, \ldots, A_k$ in $\D_H\backslash \{A\}$ such that $\big (\forall i, \: s(A_i) \leq H(A_i) \big ) \Rightarrow  (s(A) \leq H(A)),$ but since $A_i \in\D_H \subset \D_F$ we have $H(A_i)=H_-(A_i)=\Fl(A_i)$ and since $A \in \D_F$ we also have $F(A)=\Fl(A)$ so either  $\big (\forall A' \in \D_H, \: s(A') \leq \Fl(A') \big ) \Rightarrow  (s(A) \leq \Fl(A)),$ so that $A$ is redundant, which is excluded, or since $H_-(A)=\max_{s \in \mathcal{P}_H} s (A)=\max_{s :\: s(A') \leq \Fl(A'), \: A' \in \D_H} s(A)$, we then have $H_-(A)>F_-(A)$, which is also impossible. So we necessarily have $\D_H=\D_F$.
To conclude the proof we just need to show that $\Fu \in \mathcal{E}_F$ and that $\Fu \geq H$ for all $H \in \mathcal{E}_F$;  this inequality is trivially satisfied for $A \notin \D_{\Fu}$, and since $\D_H=\D_{\Fu}$, for $A \in \D_{\Fu},$
 we have $\Fu(A)=\Fl(A)=H_-(A)=H(A)$.
 \end{proof}

The picture that emerges at this point from the results shown is rather simple: any combinatorial function $F$ defines a  polyhedron $\mathcal{P}_F$ whose faces of dimension $d-1$ are indexed by a set $\mathcal{D}_F \subset 2^V$ that we called the \emph{core set}. 
In symbolic notation: $\mathcal{P}_F=\{s \in \RR^d \mid s(A) \leq F(A), \: A \in \mathcal{D}_F\}$. All the combinatorial functions which are equal to $F$ on $\mathcal{D}_F$ and which otherwise take values that are larger than its lower combinatorial envelope $\Fl$, 
have the same $\ell_p$ tightest positively homogeneous convex relaxation $\Omega^F_p$, the smallest such function being $\Fl$ and the largest $\Fu$. Moreover $\Fl(A)=\Omega^F_\infty$(A), so that $\Omega^F_\infty$ is an extension of $\Fl$.
By construction, and even if $F$ is a non-decreasing function, $\Fl$ is non-decreasing, while $\Fu$ is obviously not a decreasing function, even though its restriction to $\D_F$ is.
It might therefore seem an odd set-function to consider; however if  $\D_F$ is a small set, since $\Omega^F_p=\Omega^{\Fu}_p$, and it provides a potentially much more compact representation of the norm, which we now relate to a norm previously introduced in the literature.

\section{Latent group Lasso, block-coding and set-cover penalties}
\label{sec:lgl}
The norm $\Omega_p$ is actually not a new norm. It was introduced from a different point of view by
\citet{jacob2009group} \citep[see also][]{obozinski2011group} as one of the possible generalizations of the group Lasso to the case where groups overlap.

To establish the connection, we now provide a more explicit form for $\Omega_p$, which is different from the definition via its dual norm which we have exploited so far.

We consider models that are parameterized by a vector $w \in \RR^V$ and associate to them latent variables that
are tuples of vectors of $\RR^V$ indexed by the power-set of $V$. Precisely, with the notation
$$\VV=\big \{\v=(\vA)_{A \subset V} \in \big ( \RR^{V} \big )^{2^V} \: \st \: \supp(\vA) \subset A \big \},$$ 
we define the 
norms $\Omega_p$  as
\vspace{-3mm}

 \BEQ
\label{eq:lgl}
\!\!\!\Omega_p(\w)  = \min_{\v \in \VV} \sum_{A \subset V} F(A)^\frac{1}{q} \, \|\vA\|_p \: \st \: \w=\sum_{A \subset V } \vA.
\EEQ

As suggested by notations and as first proved for $p=2$ by \citet{jacob2009group}, we have:
\begin{lemma} 
\label{lem:duality}
$\Omega_p$ and $\Omega_p^*$ are dual to each other.
\end{lemma}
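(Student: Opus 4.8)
The plan is to read the right-hand side of \eqref{eq:lgl} as a \emph{quotient norm} and to obtain its dual from the adjoint of the summation map. Restricting attention to the domain (a term with $F(A)=\infty$ forces $v^A=0$), set $\mathcal{U}:=\{(v^A)_{A\in\D_0}\mid \supp(v^A)\subset A\}$, define on it $N\big((v^A)_A\big):=\sum_{A\in\D_0}F(A)^{1/q}\,\|v^A\|_p$, and let $T:\mathcal{U}\to\RR^V$, $T\big((v^A)_A\big)=\sum_{A\in\D_0}v^A$, be the (linear) summation map. With this notation \eqref{eq:lgl} is exactly $\Omega_p(w)=\inf_{T(v)=w}N(v)$.

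First I would establish that this defines a genuine norm. On the finite-dimensional subspace $\mathcal{U}$, $N$ is a weighted sum of the norms $\|v^A\|_p$ of the components, hence itself a norm; in particular its sublevel sets are compact. The covering assumption $\cup_{A\in\D_0}A=V$ makes $T$ surjective, so every fibre $T^{-1}(w)$ is a nonempty closed affine set, and intersecting it with a sublevel set of $N$ shows the infimum in \eqref{eq:lgl} is attained. The standard quotient-norm argument then gives that $\Omega_p$ is a norm: positive homogeneity and the triangle inequality are inherited from $N$, and definiteness follows from attainment together with definiteness of $N$ (if $\Omega_p(w)=0$, an attaining $v$ has $N(v)=0$, so $v=0$ and $w=Tv=0$).

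The duality itself is then a short computation. Writing $\Omega_p^{\,\circ}$ for the dual norm of $\Omega_p$, which we want to identify with $\Omega_p^*$, one has for any quotient norm
\[
\Omega_p^{\,\circ}(s)=\sup_{\Omega_p(w)\le 1}\langle s,w\rangle=\sup_{N(v)\le 1}\langle s,Tv\rangle=\sup_{N(v)\le 1}\langle T^*s,v\rangle=N^\circ(T^*s),
\]
where $T^*$ is the adjoint of $T$ and $N^\circ$ the dual norm of $N$. The adjoint acts by $T^*s=(s_A)_{A\in\D_0}$, since $\langle s,\sum_A v^A\rangle=\sum_A\langle s_A,v^A\rangle$ by the support constraint. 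The dual of a weighted sum of norms over a product space is the corresponding weighted maximum of the dual norms, and the dual of $\|\cdot\|_p$ is $\|\cdot\|_q$, whence $N^\circ\big((y^A)_A\big)=\max_{A\in\D_0}\|y^A\|_q/F(A)^{1/q}$. Substituting $y^A=s_A$ gives $\Omega_p^{\,\circ}(s)=\max_{A}\|s_A\|_q/F(A)^{1/q}=\Omega_p^*(s)$, which is the claim.

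The computation of the dual norm is routine convex duality; the one point that genuinely needs care is the well-posedness of the quotient norm, i.e.\ that the infimum is attained so that $\Omega_p$ is closed and definite rather than a mere gauge, and that $T$ is onto. Both reduce to compactness of $N$-sublevel sets and to the covering hypothesis on $\D_0$. Equivalently, one may bypass attainment and argue through conjugacy: since the objective is the infimal convolution of the functions $v\mapsto F(A)^{1/q}\|v\|_p+\iota_{\{\supp(v)\subset A\}}$ and the conjugate of an infimal convolution is the sum of conjugates, the conjugate of $\Omega_p$ equals $\sum_A\iota_{\{\|s_A\|_q\le F(A)^{1/q}\}}=\iota_{\{\Omega_p^*\le 1\}}$, i.e.\ the indicator of the unit ball of $\Omega_p^*$, which again identifies $\Omega_p^*$ as the dual norm.
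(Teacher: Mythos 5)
Your proof is correct, and your primary route is genuinely different from the paper's. You start from the latent formulation \eqref{eq:lgl}, read it as a quotient norm $\Omega_p(w)=\inf\{N(v)\mid Tv=w\}$, verify well-posedness (surjectivity of $T$ from the covering assumption on $\D_0$, attainment from compactness of the $N$-sublevel sets in the finite-dimensional space $\mathcal{U}$), and then compute the dual norm through the adjoint, $\Omega_p^{\,\circ}=N^\circ\circ T^*$, identifying it with $\Omega_p^*$ since the dual of a weighted sum of $\ell_p$-norms over a product is the weighted maximum of $\ell_q$-norms. The paper argues in the opposite direction and with different machinery: it starts from $\Omega_p^*$, writes the indicator of its unit ball as $\sum_{A\subset V}\iota_{K^A_p}$ with $K^A_p=\{s\mid \|s_A\|_q^q\leq F(A)\}$, observes that each $\omega^A_p$ (the weighted $\ell_p$-norm restricted to supports in $A$) is the support function of $K^A_p$, and invokes Rockafellar's theorem that the conjugate of a sum is the infimal convolution of the conjugates, thereby recovering \eqref{eq:lgl} as the dual of $\Omega_p^*$. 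Your closing alternative argument---conjugating the infimal convolution to get the sum of indicators---is exactly the paper's proof read in reverse. What your main route buys: it is self-contained (no appeal to the conjugate-of-sum theorem), and it delivers attainment of the infimum, hence definiteness and closedness of $\Omega_p$, as by-products---facts the paper leaves implicit. What the paper's route buys: it needs no attainment or compactness discussion at all, since the conjugacy identities hold regardless. One small wrinkle you glossed over: $\varnothing\in\D_0$ with $F(\varnothing)=0$ gives a zero weight in $N$, but this is harmless because the corresponding component space $\{v\mid\supp(v)\subset\varnothing\}$ is $\{0\}$, so $N$ is still a norm on $\mathcal{U}$.
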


An elementary proof of this result is provided in \citet{obozinski2011group}\footnote{The proof in \citet{obozinski2011group} addresses the $p=2$ case but generalizes immediately to other values of $p$.}. We propose a slightly more abstract proof of this result in appendix~\ref{sec:form_primal} using explicitly the fact that $\Omega_p$ is defined as an infimal convolution.

 We will refer to this norm $\Omega_p$ as the \emph{latent group Lasso} since it is defined by introducing latent variables $v^A$ that are themselves regularized instead of the original model parameters. We refer the reader to~\citet{obozinski2011group} for a detailed presentation of this norm, some of its properties and some support recovery results in terms of the support of the latent variables. In \citet{jacob2009group} the expansion \eqref{eq:lgl} did not involve all terms of the power-set but only a subcollection of sets $\G \subset 2^V$.
The notion of redundant set discussed in Section~\ref{sec:uce} was actually introduced by \citet[][Sec.~8.1]{obozinski2011group} and the set $\G$ could be viewed as the core set $\D_F$.
A result of \citet{obozinski2011group} for $p=2$ generalizes immediately to other $p$: the unit ball of $\Omega_p$ can be shown to be the convex hull of the sets $D_A=\{w \in \RR^d \mid \|w_A\|_p^p \leq F(A)^{-1/q}\}$. This is illustrated in Figure~\ref{fig:balls}.

The motivation of \citet{jacob2009group} was to find a convex regularization which would induce sparsity patterns that are unions of groups in $\G$ and explain the estimated vector $w$ as a combination of a small number of latent components, each supported on one group of $\G$. The motivation is very similar in \citet{huang2011learning} who consider an $\ell_0$-type penalty they call \emph{block coding}, where each support is penalized by the minimal sum of the \emph{coding complexities} of a certain number of elementary sets called ``blocks'' which \emph{cover} the support. In both cases the underlying combinatorial penalty is the \emph{minimal weighted set cover} defined for a set $B \subset V$ by: 
\BEAS
\label{eq:set-cover}
\Fc(B)& = & \min_{(\delta^A)_{A \subset V}} \:\sum_{A \subset V} F(A) \, \delta^A\qquad \st \qquad {\sum_{A \subset V} \: \delta^A 1_A \geq 1_B,\qquad \delta^A\in \{0,1\}, \: A \subset V}.
\EEAS
While the norm proposed by \citet{jacob2009group} can be viewed as a form of ``relaxation'' of the cover-set problem, a rigorous link between the $\ell_0$ and convex formulation is missing. We will make this statement rigorous through a new interpretation of the lower combinatorial envelope of $F$.

Indeed, assume w.l.o.g.~that $w \in \RR^d_+$. For $x,y \in \RR^V$, we write $x \geq y$ if $x_i \geq y_i$ for all $i \in V$. Then,
\BEAS
\Omega_\infty(w)& = & \min_{v \in \VV} \sum_{A \subset V} F(A) \|v^A\|_\infty  \qquad\st \qquad \sum_{A \subset V} v^A \geq w\\
& = & \min_{\delta^A \in \RR_+} \sum_{A \subset V} F(A) \, \delta^A \qquad \st \qquad \sum_{A \subset V} \delta^A 1_A \geq w,
\EEAS

since if $(v^A)_{A \subset V}$ is a solution so is $(\delta^A 1_A)_{A \subset V}$ with $\delta^A=\|v^A\|_\infty$.
We then have
\BEA
\label{eq:frac-set-cover}
\Fl(B) & = & \min_{(\delta^A)} \: \sum_{A \subset V} F(A) \, \delta^A,\qquad \st \qquad { \sum_{A \subset V} \delta^A 1_A \geq 1_B, \quad \delta^A \in [0,1], \: A \subset V},
\EEA
because constraining $\delta$ to the unit cube does not change the optimal solution, given that $1_B \leq 1$. But the optimization problem in
\eqref{eq:frac-set-cover} is exactly the \emph{fractional weighted set-cover problem} \citep{Lovasz1975Ratio}, a classical relaxation of the \emph{weighted cover set problem} in Eq.~(\ref{eq:set-cover}).

Combining Proposition~\ref{prop:relax} with the fact that $\Fl(A)$ is the fractional weighted set-cover, now yields:
\begin{theorem}
$\Omega_p(w)$ is the tightest convex relaxation of the function $w \mapsto \|w\|_p \, \Fc({\rm Supp}(w))^{1/q}$ where $\Fc({\rm Supp}(w))$ is the \emph{weighted set-cover} of the support of $w$.
\end{theorem}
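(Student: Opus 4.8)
The plan is to apply the convex-relaxation machinery of Proposition~\ref{prop:relax} directly to the set-cover function $\Fc$, and then to show that the norm it produces is nothing but $\Omega_p$ itself. First I would check that $\Fc$ is an admissible combinatorial function in the sense of Section~\ref{sec:cvx_relax}: one has $\Fc(\varnothing)=0$, while $\Fc(B)>0$ for $B\neq\varnothing$ (any cover of a nonempty set must select at least one nonempty $A$, on which $F(A)>0$), and $\Fc(B)<\infty$ for every $B$ since $\bigcup_{A\in\D_0}A=V$ guarantees that each $B$ can be covered by finite-valued sets. The proof of Proposition~\ref{prop:relax} uses no property of the underlying set-function beyond these, so it applies verbatim with $F$ replaced by $\Fc$ and yields that the convex envelope---i.e.\ the tightest convex relaxation---of $w\mapsto\|w\|_p\,\Fc({\rm Supp}(w))^{1/q}$ is precisely the norm $\Omega_p^{\Fc}$, whose dual is $s\mapsto\max_{A}\|s_A\|_q/\Fc(A)^{1/q}$.

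It then remains to identify $\Omega_p^{\Fc}$ with $\Omega_p=\Omega_p^{F}$. The key is a sandwich of set-functions, $\Fl\leq\Fc\leq F$. The upper bound $\Fc\leq F$ holds because covering $A$ by the single set $A$ is feasible for the integer program defining $\Fc$ and costs $F(A)$. The lower bound $\Fl\leq\Fc$ is exactly the statement that the fractional program \eqref{eq:frac-set-cover} relaxes the integer set-cover program (identical objective, feasible set enlarged from $\{0,1\}$ to $[0,1]$); recall that \eqref{eq:frac-set-cover} already identifies $\Fl=F_-$ with this fractional optimum.

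I would then invoke monotonicity of the relaxation in the underlying set-function: if $G_1\leq G_2$ pointwise, then $(\Omega_p^{G_1})^*\geq(\Omega_p^{G_2})^*$ directly from the formula for the dual norm, and since the dual is an order-reversing involution this gives $\Omega_p^{G_1}\leq\Omega_p^{G_2}$. Applying this to the sandwich yields $\Omega_p^{\Fl}\leq\Omega_p^{\Fc}\leq\Omega_p^{F}$. Finally, Lemma~\ref{sec:lce_same_norm} gives $\Omega_p^{\Fl}=\Omega_p^{F}$, so the two outer terms coincide and the chain collapses to $\Omega_p^{\Fc}=\Omega_p^{F}=\Omega_p$. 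Combining with the first paragraph, $\Omega_p$ is the convex envelope of $w\mapsto\|w\|_p\,\Fc({\rm Supp}(w))^{1/q}$, which is the claim.

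The only genuinely substantive point, and the one I would treat most carefully, is the middle step: the integer and fractional set-covers differ as set-functions in general, yet both are trapped between $F$ and its lower combinatorial envelope $\Fl$ and therefore induce the same convex relaxation. Everything else is bookkeeping---verifying admissibility of $\Fc$ so that Proposition~\ref{prop:relax} may be reused, and the one-line monotonicity argument. An alternative to the monotonicity route, should one prefer the polyhedral language of Section~\ref{sec:lce}, is to observe that $\Fl\leq\Fc\leq F$ forces $\mathcal{P}_{\Fl}\subseteq\mathcal{P}_{\Fc}\subseteq\mathcal{P}_F$, whence $\mathcal{P}_{\Fc}=\mathcal{P}_F$ since $\mathcal{P}_F=\mathcal{P}_{\Fl}$; equality of the canonical polyhedra then gives equality of the norms exactly as in the proof of Lemma~\ref{sec:lce_same_norm}.
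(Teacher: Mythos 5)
Your proof is correct and follows essentially the same route as the paper: the sandwich $\Fl\leq\Fc\leq F$ combined with the fact that $F$ and its lower combinatorial envelope induce the same norm (Lemma~\ref{sec:lce_same_norm}), then Proposition~\ref{prop:relax} applied to $\Fc$. The paper states this in one line, asserting directly that $\Fl$ is also the LCE of $\Fc$; your monotonicity argument for the dual norms (and your polyhedral alternative) and the admissibility check for $\Fc$ are just explicit fillings-in of the same reasoning.
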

\begin{proof}

We have $\Fl(A)\leq \Fc(A) \leq F(A)$ so that, since $\Fl$ is the lower combinatorial envelope of $F$, it is also the lower combinatorial envelope of $\Fc$, and therefore
$\Omega^{\Fl}_p=\Omega^{\Fc}_p=\Omega^F_p$.
\end{proof}
 This proves that the norm $\Omega_p^F$ proposed by 
\citet{jacob2009group} is indeed in a rigorous sense a relaxation of the block-coding or set-cover penalty. 

\begin{figure}
\begin{center}
\begin{tabular}{cc}
 \!\!\!\!\!\!\!\!\includegraphics[width=0.42\tw]{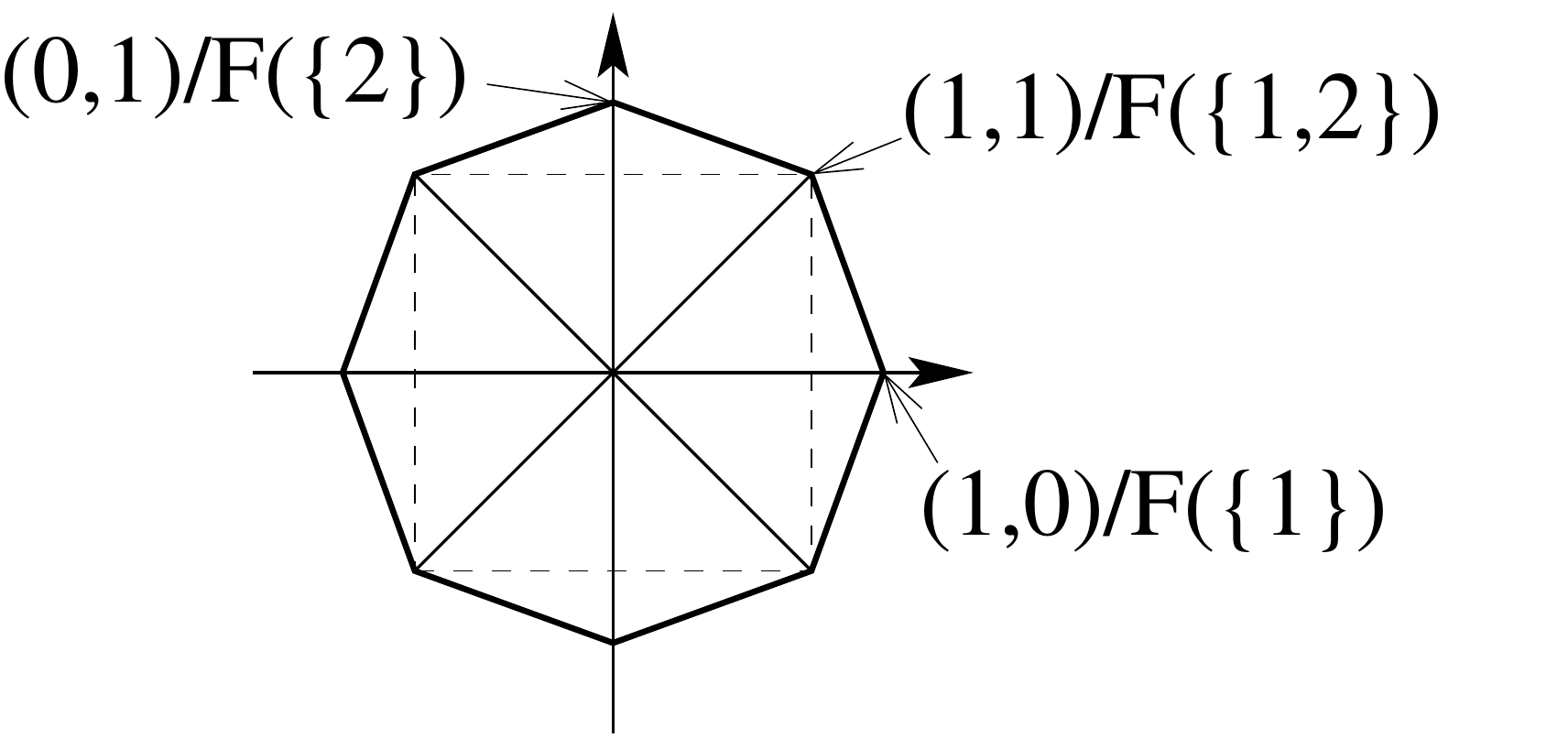}& \!\!\!\!\!\!\!\!\!\!\includegraphics[width=0.42\tw]{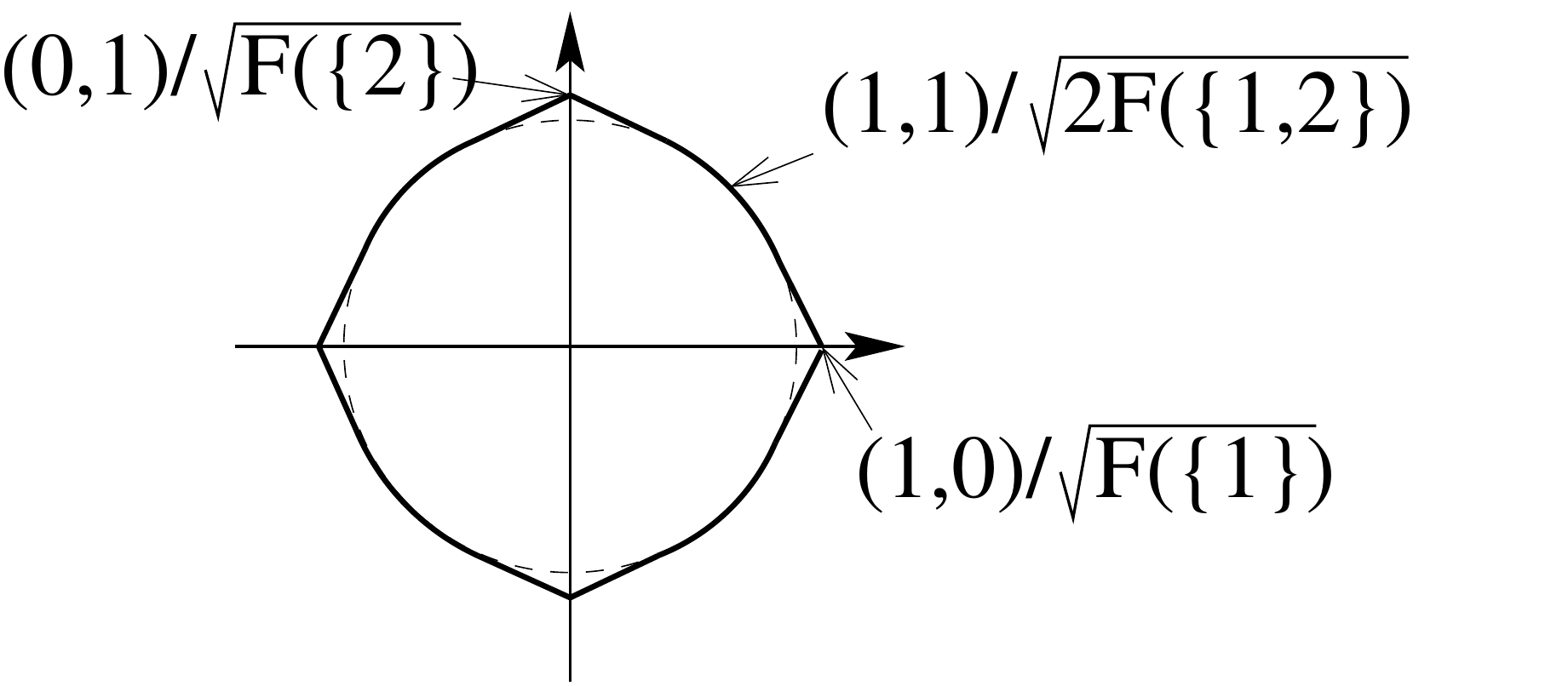} \\
\end{tabular}

\begin{tabular}{ccc}
\includegraphics[width=0.18\tw]{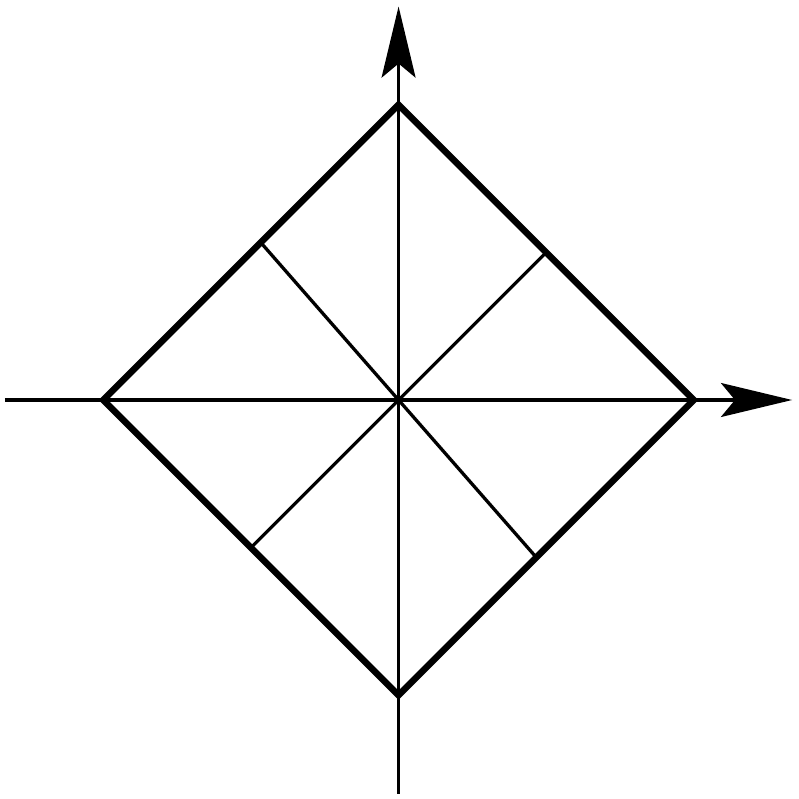} &
\includegraphics[width=0.18\tw]{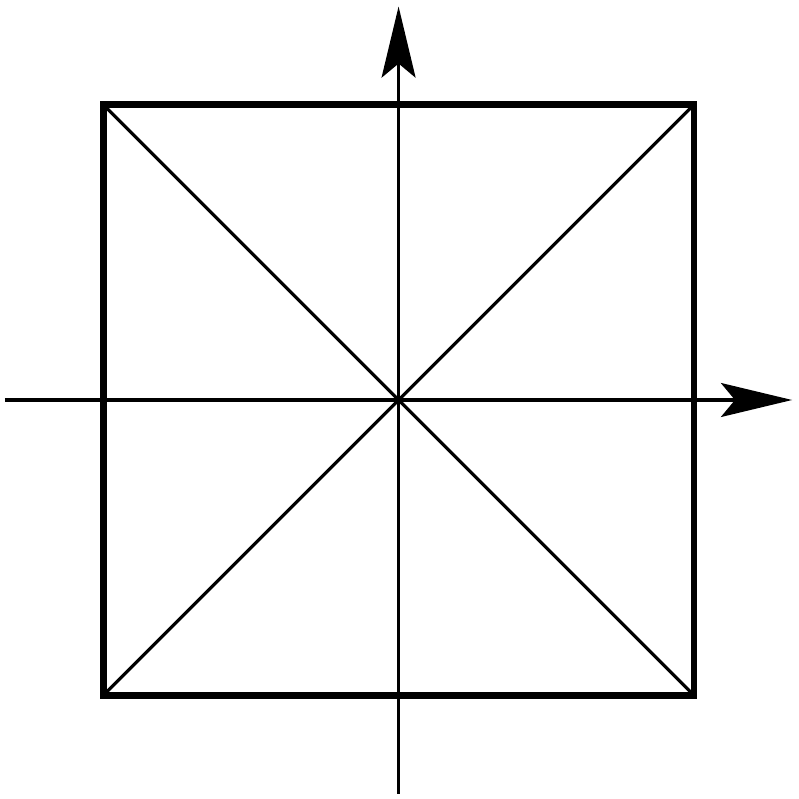} &
\includegraphics[width=0.18\tw]{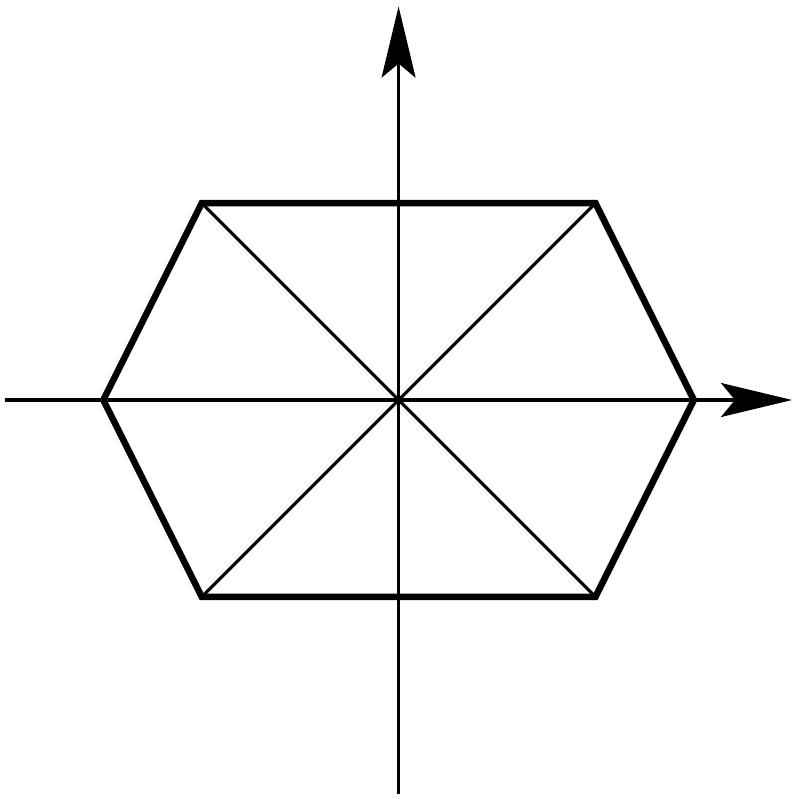} \\ 

\includegraphics[width=0.18\tw]{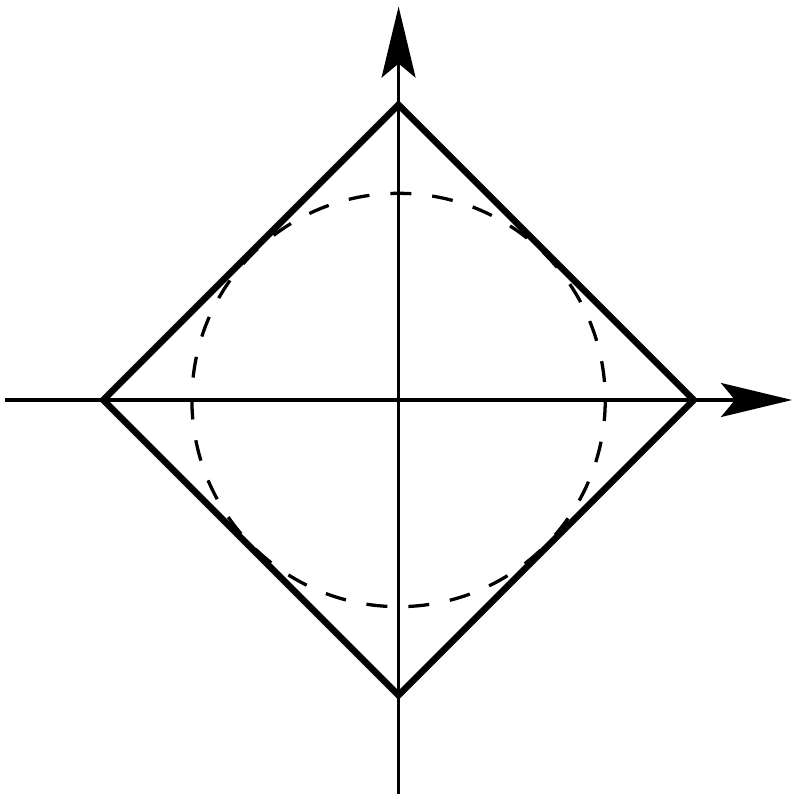} &
\includegraphics[width=0.18\tw]{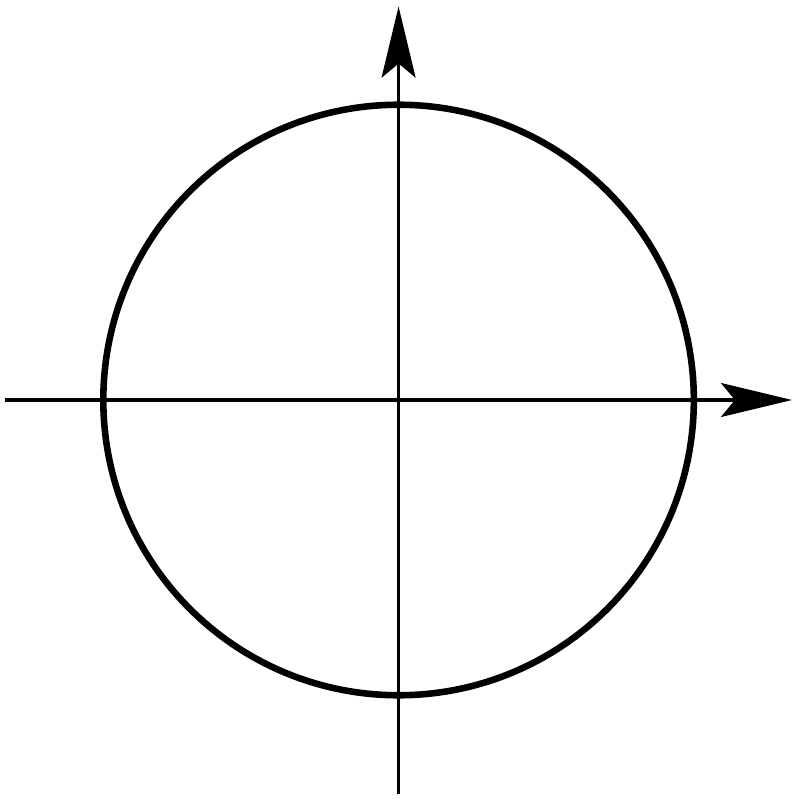} &
\includegraphics[width=0.18\tw]{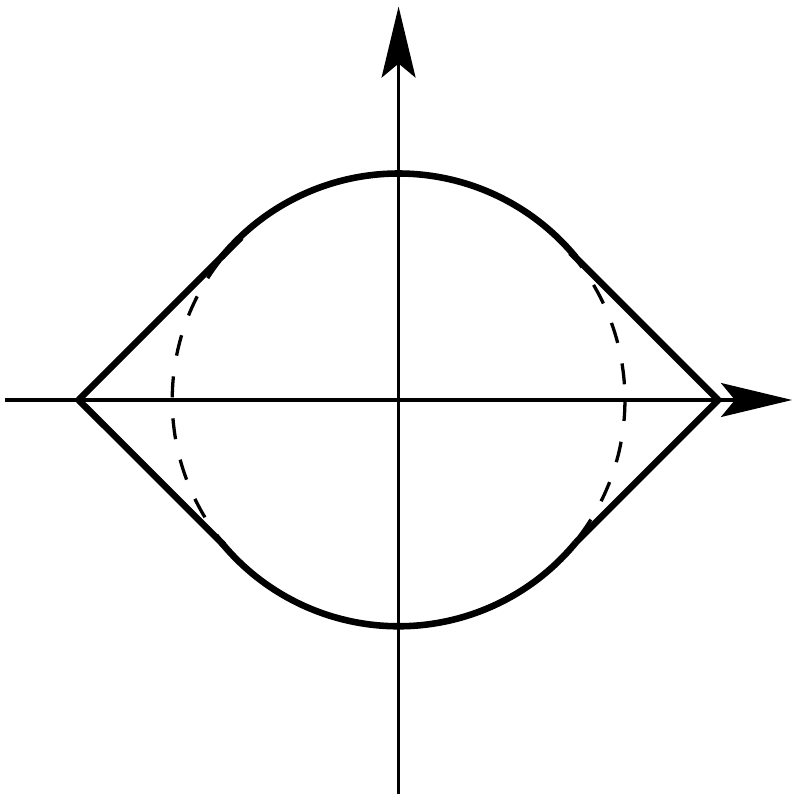}\\ 
\end{tabular}

\end{center}
\caption{Unit balls in $\RR^2$ for four combinatorial functions (actually all submodular) on two variables. 
Top left and middle row: $p=\infty$; top right and bottom row: $p=2$. Changing values of $F$ may make some of the extreme points disappear. All norms are hulls of a disk and points along the axes, whose size and position is determined by the values taken by $F$. On top row: $F(A)=\Fl(A) = |A|^{1/2}$ (all possible extreme points); and from left to right on the middle and bottom rows: $F(A) = |A|$ (leading to $\|\cdot\|_1$),
$F(A) =\Fl(A)= \min\{|A|,1\}$ (leading to $\|\cdot\|_p$), $F(A) =\Fl(A)=  \frac{1}{2} 1_{ \{A \cap \{2\} \neq \varnothing\} } +  1_{ \{A \neq \varnothing\}}$.
}
\label{fig:balls}
\end{figure}

\begin{example} 
\label{two_groups}
To illustrate the above results consider the block-coding scheme for subsets of $V=\{1,2,3\}$ with blocks consisting only of pairs, i.e., chosen from the collection $\D_0:=\big \{\{1,2\},\{2,3\},\{1,3\} \big\}$ with costs all equal to 1.
The following table lists the values of $F$, $\Fl$ and $\Fc$:
\def\cwz{6mm}
\def\cwo{8mm}
\def\cwt{12mm}

\begin{center}
\begin{tabular}{>{\centering} p{\cwo}|>{\centering}p{\cwz}|>{\centering}p{\cwo}|>{\centering}p{\cwo}|>{\centering}p{\cwo}|>{\centering}p{\cwt}|>{\centering}p{\cwt}|>{\centering}p{\cwt}| >{\centering} p{12mm}}
& $\varnothing$ & $\{1\}$ & $\{2\}$ & $\{3\}$ & $\!\!\{1,2\}$ & $\!\!\{2,3\}$& $\!\!\{1,3\}$& $\!\!\{1,2,3\}$\tabularnewline
\hline
$F$ & $0$ & $\infty$ & $\infty$& $\infty$ & $1$ &  $1$ & $1$ & $\infty$\tabularnewline
$\Fc$ & $0$ & $1$ &  $1$ & $1$ & $1$ &  $1$ & $1$ & $2$\tabularnewline
$\Fl$ & $0$ & $1$ &  $1$ & $1$ & $1$ &  $1$ & $1$ & $3/2$\tabularnewline
\end{tabular}
\end{center}
Here, $F$ is equal to its UCE (except that $\Fu(\varnothing)=\infty$) and takes therefore non trivial
values only on the core set $\D_F=\D_0$.
All non-empty sets except $V$ can be covered by exactly one set, which explains the cases where $\Fl$ and $\Fc$ take the value one. $\Fc(V)=2$ since $V$ is covered by any pair of blocks and a slight improvement is obtained if fractional covers is allowed since for $\delta_1=\delta_2=\delta_3=\frac{1}{2}$, we have $1_{V}=\delta_1\, 1_{\{2,3\}}+\delta_2\, 1_{\{3,1\}}+\delta_3\, 1_{\{1,2\}}$ and therefore $\Fl(V)=\delta_1+\delta_2+\delta_3=\frac{3}{2}$.  
\end{example}

The interpretation of the LCE as the value of a minimum fractional weighted set cover suggests a new interpretation of $\Fu$ (or equivalently of $\D_F$) as defining the smallest set of blocks ($\D_F$) and their costs, that induce a fractional set over problem with the same optimal value.

It is interesting to note (but probably not a coincidence) that it is \lova who introduced the concept of optimal fractional weighted set cover, while we just showed that the value of that cover is precisely $\Fl$, i.e.,
the combinatorial function which is extended by $\Omega^{\Fu}_\infty=\Omega^{\Fl}_\infty$ and which, if $\Fu$ is submodular is equal to the \lova extension.

The interpretation of $\Fl$ as the value of a minimum fractional weighted cover set problem allows us also to show a result which is dual to the property of LCEs, and which we now present.

\subsection{Largest convex positively homogeneous function with same combinatorial restriction}
By symmetry with the characterization of the \emph{lower combinatorial envelope} as the smallest combinatorial function that
has the same tightest convex and positively homogeneous (p.h.) relaxation as a given combinatorial function $F$, we can, given
a convex positively homogeneous function $g$, define the combinatorial function $F: A \mapsto g(1_A)$, which by construction, is the combinatorial function which $g$ \emph{extends} (in the sense of \lova) to $\RR^d_+$, and ask if there exists a largest convex and p.h.\ function $g^{+}$ among all such functions. It turns out that this problem is well-posed if the question is restricted to functions that are also coordinate-wise non-decreasing. Perhaps not surprisingly, it is then the case that the largest convex p.h.\ function extending the same induced combinatorial function is precisely $\Omega^F_\infty$, as we show in the next lemma.

\begin{lemma}(Largest convex positively homogeneous extension)
\label{lch_ext}
Let $g$ be a convex, p.h.\ and coordinate-wise non-decreasing function defined on $\RR^d_+$.
Define $F$ as $F: A \mapsto g(1_A)$ and denote by $\Fl$ its lower combinatorial envelope.\\
Then $F=\Fl$ and $\forall w \in \RR^d$, $g(|w|)\leq\Omega^F_\infty(w)$.
\end{lemma}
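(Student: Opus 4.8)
The plan is to establish the two assertions separately, first that $F=\Fl$ and then that $g(|w|)\leq\Omega^F_\infty(w)$ for all $w$.

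For the first claim, I would argue that since $g$ is convex, positively homogeneous, and coordinate-wise non-decreasing on $\RR^d_+$, it is itself a norm-like function (a support function) on the positive orthant, and in particular it equals the support function of some convex set $S$, namely $g(w)=\max_{s\in S} s^\top w$ for $w\in\RR^d_+$. Because $g$ is non-decreasing, this set can be taken inside $\RR^d_+$. The key observation is that for any $A$, $g(1_A)=\max_{s\in S} s(A)=F(A)$, so $F$ is precisely the support-function representation evaluated at indicators. I would then show $S\subseteq\mathcal{P}_F$: indeed for any $s\in S$ and any $A$, $s(A)\leq g(1_A)=F(A)$, so $s\in\mathcal{P}_F$. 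Conversely the definition $\Fl(A)=\max_{s\in\mathcal{P}_F}s(A)$ combined with $S\subseteq\mathcal{P}_F$ gives $\Fl(A)\geq\max_{s\in S}s(A)=F(A)$. Since we always have $\Fl(A)\leq F(A)$ by construction of the LCE, we conclude $\Fl=F$.

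For the second claim, the plan is to exploit Lemma~\ref{lem:ext_lova}, the extension property $\Omega^F_\infty(1_A)=\Fl(A)=F(A)=g(1_A)$, together with the primal infimal-convolution form of $\Omega_\infty$ from Eq.~\eqref{eq:lgl}. Taking $p=\infty$ and assuming w.l.o.g.\ that $w\in\RR^d_+$, any decomposition $w=\sum_A v^A$ with $\supp(v^A)\subset A$ yields $w=\sum_A v^A\leq \sum_A \|v^A\|_\infty\,1_A$ coordinate-wise; by the coordinate-wise monotonicity of $g$ and its p.h.\ and subadditive (convex p.h.\ $\Rightarrow$ subadditive) properties, $g(w)\leq g\big(\sum_A \|v^A\|_\infty 1_A\big)\leq \sum_A \|v^A\|_\infty\, g(1_A)=\sum_A \|v^A\|_\infty\, F(A)$. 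Minimizing the right-hand side over all valid decompositions gives exactly $\Omega^F_\infty(w)$, so $g(w)\leq\Omega^F_\infty(w)$, and since the norm depends only on $|w|$ this extends to all $w$ via $g(|w|)\leq\Omega^F_\infty(w)$.

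The main obstacle I expect is making the support-function representation of $g$ rigorous and controlling its domain: a convex p.h.\ function need not be finite everywhere, and to identify it as $\max_{s\in S}s^\top w$ with $S\subseteq\RR^d_+$ one must invoke that $g$ is non-decreasing to guarantee the representing set lies in the positive orthant (otherwise negative components of $s$ could violate $s(A)\leq F(A)$). A clean alternative that sidesteps the support-function machinery is to prove $F=\Fl$ directly: it suffices to exhibit, for each $A$, a single $s^\star\in\mathcal{P}_F$ with $s^\star(A)=F(A)$, which can be obtained as a subgradient of the convex function $g$ at the point $1_A$, using that subgradients of a non-decreasing p.h.\ function are nonnegative and satisfy $s^{\star\top}1_A=g(1_A)$ together with $s^{\star\top}1_B\leq g(1_B)=F(B)$ for all $B$. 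This subgradient route is likely the most economical and I would present it as the primary argument.
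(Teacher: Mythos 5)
Your proposal is correct in substance, but it takes a genuinely different route from the paper on both halves of the statement. For $F=\Fl$, the paper works on the primal side: it invokes the fractional set-cover formulation of $\Fl$ in \eq{frac-set-cover} and shows that any fractional cover $\sum_{A} \delta^A 1_A \geq 1_B$ has cost $\sum_A \delta^A F(A) = \sum_A \delta^A g(1_A) \geq g\big(\sum_A \delta^A 1_A\big) \geq g(1_B) = F(B)$ by sublinearity and coordinate-wise monotonicity of $g$, whence $\Fl \geq F$. You instead produce a dual certificate: a point $s^\star \in \mathcal{P}_F$ with $s^\star(A)=F(A)$, obtained from a subgradient of $g$ at $1_A$. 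For the inequality $g(|w|)\leq \Omega^F_\infty(w)$, the paper bounds $g(|w|)\leq \|w\|_\infty F(\supp(w))$ and takes biconjugates, using Proposition~\ref{prop:relax}; you instead run the sublinearity-plus-monotonicity computation through the latent decomposition \eq{lgl}, i.e.\ essentially the same estimate the paper uses for the first half, but applied to arbitrary $w\in\RR^d_+$ rather than to indicator vectors. This is perfectly valid, and it has a pleasant structural consequence you could exploit: your second argument subsumes the first, since once $g\leq \Omega^F_\infty$ on $\RR^d_+$, Lemma~\ref{lem:ext_lova} gives $F(A)=g(1_A)\leq \Omega^F_\infty(1_A)=\Fl(A)$, so the subgradient machinery could be dispensed with entirely.

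Two points in your primary (subgradient) argument do need repair. First, when $A\neq V$ the point $1_A$ lies on the boundary of the domain $\RR^d_+$, and a finite convex function need not be subdifferentiable at boundary points of its domain; here existence is saved by the observation that a non-decreasing sublinear $g$ is automatically Lipschitz on $\RR^d_+$ (from $g(x)\leq g(y+|x-y|)\leq g(y)+\|x-y\|_\infty\, g(1_V)$), and a Lipschitz convex function is subdifferentiable at every point of its domain. Second, your claim that subgradients of a non-decreasing p.h.\ function are nonnegative is false at points with zero coordinates: for $g(x)=x_1$ on $\RR^2_+$, the vector $(1,-1)$ is a subgradient at $1_{\{1\}}$. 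The repair is short: if $s\in\partial g(1_A)$, then its coordinates on $A$ are nonnegative (one may decrease those coordinates while staying in $\RR^d_+$ and use monotonicity), Euler's relation $s^\top 1_A=g(1_A)$ holds, and hence $g(y)\geq s^\top y$ for all $y\in\RR^d_+$; setting $s^\star=\max(s,0)$ componentwise and $B^+=\{i\in B: s_i>0\}$, one gets $s^\star(A)=s(A)=F(A)$ and $s^\star(B)=s(B^+)\leq g(1_{B^+})\leq g(1_B)=F(B)$ for every $B$, so $s^\star$ is a nonnegative certificate in $\mathcal{P}_F$ even though it need not itself be a subgradient. With these two fixes your argument is complete.
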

\begin{proof}
From Equation (\ref{eq:frac-set-cover}), we know that $\Fl$ can be written as the value of a minimal weighted fractional set-cover. But if $1_B \leq \sum_{A \subset V} \delta^A 1_A$, we have
$$\sum_{A \subset V} \delta^A g(1_A) \geq g \big ({ \textstyle \sum_{A \subset V} \delta^A }\big ) \geq g(1_B),$$
where the first inequality results from the convexity and homogeneity of $g$, and the second from the assumption that it is coordinate-wise non-decreasing. As a consequence, injecting the above inequality in (\ref{eq:frac-set-cover}), we have $\Fl(B) \geq F(B)$. But since, we always have $\Fl \leq F$, this proves the equality.

For the second statement, using the coordinate-wise monotonicity of $g$ and its homogeneity, we have $g(|w|) \leq \|w\|_\infty g(1_{\supp(w)})= \|w\|_\infty F(\supp(w))$. Then, taking the convex envelope of functions on both sides of the inequality
we get $g(|\cdot|)^{**} \leq \big( \|\cdot\|_\infty F(\supp(\cdot))\big )^{**}=\Omega^F_\infty$, where $(\cdot)^*$ denotes the Fenchel-Legendre transform.
\end{proof}

\section{Examples}
\label{sec:examples}
subsection{Overlap count functions, their relaxations and the $\ell_1/\ell_p$-norms.}
\label{sec:overlap}
\def\Fi{F_{\cap}}
\def\Flgl{F_{\cup}}
A natural family of set functions to consider are the functions that, given a collection of sets $\G \subset 2^V$ are defined as the number of these sets that are intersected by the support: 
\BEA
\label{eq:overlap_counting}
\Fi(A)=\sum_{B \in \G} d_B 1_{\{A\cap G \neq \varnothing\}}.
\EEA
Since $A \mapsto 1_{\{A\cap G \neq \varnothing\}}$ is clearly submodular and since submodular functions form a positive cone, all these functions are submodular, which implies that $\Omega_p^{\Fi}$ is a tight relaxation of $\Fi$.

\paragraph{Overlap count functions \emph{vs} set-covers.}

As mentioned in Section~\ref{sec:special}, if $\G$ is a partition, the norm $\Omega_p^{\Fi}$ is the $\ell_1/\ell_p$-norm; in this special case, $\Fi$ is actually the value of the minimal (integer-valued) weighted set-cover associated with the sets in $\G$ and the weights $d_G$.

However, it should be noted that, in general, the value of these functions is quite different from the value of a minimal weighted set-cover.
It has rather the flavor of some sort of ``maximal weighted set-cover" in the sense that any
set that has a non-empty intersection in the support would be included in the cover.
We call them overlap count functions.

\paragraph{$\ell_p$ relaxations of $\Fi$ \emph{vs} $\ell_1/\ell_p$-norms.}

In the case where $p=\infty$, \citet{bach2010structured} showed that even when groups overlap we have $\Omega_\infty(w)=\sum_{B \in \G} d_B \|w_G\|_\infty$, since the \lova extension of a sum of submodular functions is just the sum of the \lova extensions of the terms in the sum. 

The situation is more subtle when $p<\infty$: in that case, and perhaps surprisingly, $\Omega_p^{\Fi}$ is not the \emph{weighted $\ell_1/\ell_p$ norm with overlap} \citep{jenatton2011structured}, also referred to as the \emph{overlapping group Lasso} (which should clearly be distinguished from the \emph{latent group Lasso}) and which is the norm defined by $w \mapsto \sum_{B \in \G} d'_B \|w_G\|_p$. The norm $\Omega_p^{\Fi}$ does not have a simple closed form in general. In terms of sparsity patterns induced however, $\Omega_p^{\Fi}$  behaves like $\Omega_\infty^{\Fi}$, and as a result the sparsity patterns allowed by $\Omega_p^{\Fi}$ are the same as those allowed by the corresponding \emph{weighted $\ell_1/\ell_p$ norm with overlap}.

\paragraph{$\ell_p$-relaxation of $\Fi$ \emph{vs} latent group Lasso based on $\G$.}
It should be clear as well that $\Omega_p^{\Fi}$ is not itself the \emph{latent group Lasso} associated with the collection $\G$ and the weights $d_G$ in the sense of~\citet{jacob2009group}. Indeed, the latter corresponds to the function $\Flgl: A \mapsto 1_{\{A \neq \varnothing\}}+\iota_{\{A \in \G\}}$, or to its LCE which is the minimal value of the fractional weighted set cover associated with $\G$. Clearly, $\Flgl$ is in general strictly smaller than $\Fi$ and since the relaxation of the latter is tight, it cannot be equal to the relaxation of the former, if the combinatorial functions are themselves different. Obviously, the function $\Omega_p^{\Fi}$
is still as shown in this paper, another latent group Lasso corresponding to a fractional weighted set cover and involving a larger number of sets that the ones in $\G$ (possibly all of $2^V$).
This last statement leads us to what might appear to be a paradox, which we discuss next.

\paragraph{Supports \emph{stable by intersection} vs \emph{formed as unions}.}

\citet{jenatton2011structured} have shown that the family of norms they considered induces possible supports which form a family that is \emph{stable by intersection}, in the sense that the intersection of any two possible support is also a possible support. But since as mentioned above they have the same support as the norms $\Omega_{p}^{\Fi}$, for $1< p\leq \infty$,
which are latent group Lasso norms, and since \citet{jacob2009group} have discussed the fact that the supports induced by any norm $\Omega_p$ are formed by \emph{unions} of elements of the core set $\D$, is might appear paradoxical that the allowed support can be described at the same time as intersections and as unions. There is in fact not contradiction because in general the set of supports that are induced by the latent group Lasso are in fact not \emph{stable by union} in the sense that some unions are actually ``unstable" and will thus not be selected.

\paragraph{Three different norms.} To conclude, we must, given a set of groups $\G$ and a collection of weights $(d_G)_{G \in \G}$, distinguish three norms that can be defined from it, the weighted $\ell_1/\ell_p$-norm with overlap, the norm $\Omega_p^{\Fi}$ obtained as the $\ell_p$ relaxation of the submodular penalty $\Fi$, and finally, the norm $\Omega_p^{\F_\cup}$ obtained as the relaxation of the set-cover or block-coding penalty with the weights $d_G$.

Some of the advantages of using a tight relaxation still need to be assessed empirically and theoretically, but the possibility of using $\ell_p$-relaxation for $p<\infty$ removes the artifacts that were specific to the $\ell_\infty$ case. 

\subsection{Chains, trees and directed acyclic graphs.}
Instances of the three types of norms above are naturally relevant to induce sparsity pattern on structures such as chains, trees and directed acyclic graphs.

The weighted $\ell_1/\ell_p$-norm with overlap has been proposed to induce interval patterns on chains and rectangular or convex patterns on grids~\citep{jenatton2011structured}, for certain sparsity patterns on trees~\citep{jenatton2011proximal} and on directed acyclic graphs~\citep{mairal2011convex}.

One of the norm considered in~\citet{jenatton2011structured} provides a nice example of an overlap count function, which it is worth presenting.

\begin{example}[Modified range function]
\label{ex:submodular_range}
 A shown in Example~\ref{ex:range} in Section~\ref{sec:lce}, the natural range function on a sequence leads to a trivial LCE. Consider now the penalty with the form of Eq.~(\ref{eq:overlap_counting}) with $\G$ the set of groups defined as 
 
 $$\mathcal{G}=\big  \{[\![1,k]\!] \mid 1\leq k \leq p \big \} \cup \big  \{ [\![k,p]\!] \mid 1\leq k \leq p \big \}.$$
 
 A simple calculation shows that $\Fi(\varnothing)=0$ and that for $A \neq \varnothing$,
  $\Fi(A)=d-1+range(A)$. This function is submodular as a sum of submodular functions, and thus equal to it lower combinatorial envelope, which implies that the relaxation retains the structural a prior encoded by the combinatorial function itself. We will consider the $\ell_2$ relaxation of this submodular function in the experiments (see Section~\ref{sec:exp}) and compare it with the $\ell_1/\ell_2$-norm with overlap of~\citet{jenatton2011structured}.
\end{example}

In the case of trees and DAGs, a natural counting function to consider is the number of nodes which have at least one descendant in the support, i.e. functions of the form $\Fi: A \mapsto \sum_{i \in V} 1_{\{A \cap D_i \neq \varnothing\}}, $ where $D_i$ is the set containing node $i$ and all its descendants. It is related to the weighted $\ell_1/\ell_p-$norms which were considered in~\citet{jenatton2011proximal} ($p \in \{2, \infty\}$) for and~\citet{mairalpath} ($p=\infty$).
As discussed before, while these norms include $\Omega_\infty^{\Fi}$ if $p=\infty$, they otherwise do not correspond to the tightest relaxation, which it would be interesting to consider in future work.

Beyond the standard group Lasso and the exclusive group Lasso, there are very few instances of the norm $\Omega_2^F$ appearing in the literature. One such example is the wedge penalty considered in~\citet{Micchelli2011Regularizers}.
 
Latent group Lasso formulations are also of interest in these cases, and have not been yet been investigated much, with the exception of~\citet{mairalpath}, which considered the case of a parameter vector with coefficients indexed by a DAG and $\G$ the set of all paths in the graph.

There are clearly other combinatorial functions of interest than submodular functions and set-cover functions. We present an example of such functions in the next section.

\subsection{Exclusive Lasso}
\label{sec:exclusive}
The exclusive Lasso is a formulation proposed by \citet{zhou2010exclusive} which considers the case where a partition
$\mathcal{G}=\{G_1,\ldots,G_k\}$ of $V$ is given and the sparsity imposed is that $w$ should have at most one non-zero coefficient in each group $G_j$. The regularizer proposed by \citet{zhou2010exclusive} is the $\ell_p/\ell_1$-norm
defined\footnote{The Exclusive Lasso norm which is $\ell_p/\ell_1$ should not be confused with the group Lasso norm which is $\ell_1/\ell_p$.} by $\|w\|_{\ell_p/\ell_1}=(\sum_{G \in \G} \|w_{G}\|_1^p)^{1/p}$. Is this the tightest relaxation?

A natural combinatorial function corresponding to the desired constraint is the function $F(A)$ defined
by $F(\varnothing)=0$, $F(A)=1$ if $\max_{G \in \G} |A \cap G|=1$ and $F(A)=\infty$ otherwise. 

To characterize the corresponding $\Omega_p$ we can compute explicitly its dual norm $\Omega^*_p$:
\BEAS
\big (\Omega_p^*(w) \big )^q  &=&\max_{A \subset V, \, A \neq \varnothing} \frac{\|s_A\|_q^q}{F(A)}\\
&= & \max_{A \subset V} \: \|s_A\|_q^q \quad \st \quad |A \cap G| \leq 1,\, G \in \G\rule{0pc}{1.2pc}\\
&= & \max_{ i_j \in G_j, \, 1 \leq j \leq k} \sum_{j=1}^k |s_{i_j}|^q \:=\:   \sum_{j=1}^k \max_{i \in G_j}  |s_{i_j}|^q\: =\:  \sum_{j=1}^k \|s_{G_j}\|_\infty^q,\\
\EEAS
which shows that $\Omega_p^*$ is the $\ell_q/\ell_\infty$-norm or equivalently that $\Omega_p$ is the $\ell_p/\ell_1$-norm
and provides a theoretical justification for the choice of this norm: it is indeed the tightest relaxation!
It is interesting to compute the lower combinatorial extension of $F$ which is 
$\Fl(A)=\Omega^F_\infty(1_A)=\|1_A\|_{\ell_\infty/\ell_1}=\max_{G \in \G} |A \cap G|.$
This last function is also a natural combinatorial function to consider; by the previous result $\Fl$ has the same convex relaxation as F, but it would be however less obvious
to show directly that $\Omega^{\Fl}_p$ is the $\ell_p/\ell_1$ (see appendix~\ref{sec:exclusive_plus} for a direct proof which uses Lemma~\ref{lch_ext}).

\section{A variational form of the norm}
Several results on $\Omega_p$ rely on the fact that it can be related variationally to $\Omega_\infty$.
\begin{lemma}
$\Omega_p$ admits the two following variational formulations:
\BEAS
\Omega_p(w)
&=& \max_{\kappa \in \RR^d_+} \, \sum_{i \in V} \kappa_i^{1/q}|w_i| \quad \st \quad \forall A \subset V, \: \kappa(A) \leq F(A) \\  
 &=& \min_{\eta \in \RR^d_+}  \, \sum_{i \in V} \frac{1}{p}\frac{|w_i|^p}{\eta_i^{p-1}}+\frac{1}{q}\Omega_\infty(\eta).
\EEAS
\end{lemma}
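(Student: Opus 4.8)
The plan is to derive the first identity as a direct reparametrization of the dual norm, and then to obtain the second from it via a minimax argument.

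For the first formula, I would start from the fact that $\Omega_p$ is the dual norm of $\Omega_p^*$, so that $\Omega_p(w) = \max_{\Omega_p^*(s) \le 1} s^\top w$. Since both $\Omega_p$ and $\Omega_p^*$ depend on their argument only through its absolute value, I may assume $w \ge 0$ and restrict the maximization to $s \ge 0$ without changing the value. In that regime the constraint $\Omega_p^*(s) \le 1$ reads $\sum_{i \in A} s_i^q \le F(A)$ for every $A \subset V$. Setting $\kappa_i := s_i^q \ge 0$ turns this into $\kappa(A) \le F(A)$ for all $A$, i.e.\ $\kappa \in \mathcal{P}_F \cap \RR^d_+$, while $s^\top w = \sum_i s_i |w_i| = \sum_i \kappa_i^{1/q} |w_i|$. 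This yields the first variational formula immediately; note that it covers $p=\infty$ (then $q=1$) as the special case $\Omega_\infty(\eta) = \max_{\kappa \in \mathcal{P}_F \cap \RR^d_+} \kappa^\top |\eta|$, which I will reuse below.

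For the second formula I restrict to $p>1$ (so that $q<\infty$) and assume $w \ge 0$. Using the special case just noted, $\Omega_\infty(\eta) = \max_{\kappa \in \mathcal{P}_F \cap \RR^d_+} \kappa^\top \eta$ for $\eta \ge 0$, I rewrite the candidate objective through the saddle function
\[
\Phi(\eta,\kappa) = \sum_{i \in V} \frac{1}{p}\frac{|w_i|^p}{\eta_i^{p-1}} + \frac{1}{q}\,\kappa^\top \eta ,
\]
so that $\min_{\eta \ge 0}\big[\sum_i \tfrac{1}{p}\tfrac{|w_i|^p}{\eta_i^{p-1}} + \tfrac{1}{q}\Omega_\infty(\eta)\big] = \min_{\eta \ge 0}\max_{\kappa \in \mathcal{P}_F \cap \RR^d_+} \Phi(\eta,\kappa)$. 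The function $\Phi$ is convex in $\eta$ (each $\eta_i \mapsto \eta_i^{-(p-1)}$ is convex on $\RR_+$ and $\kappa^\top\eta$ is linear) and linear, hence concave, in $\kappa$; the feasible set $\mathcal{P}_F \cap \RR^d_+$ is convex and compact, its boundedness following from the covering assumption $\cup_{A \in \D_0} A = V$, which forces $\kappa_i \le \kappa(A) \le F(A) < \infty$ for some finite-valued $A \ni i$. A minimax theorem (Sion's) then lets me exchange the min and the max.

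To finish, for fixed $\kappa$ the inner problem $\min_{\eta \ge 0} \Phi(\eta,\kappa)$ decouples across coordinates, and each scalar problem $\min_{\eta_i \ge 0} \tfrac{1}{p}\tfrac{|w_i|^p}{\eta_i^{p-1}} + \tfrac{1}{q}\kappa_i \eta_i$ is solved at $\eta_i = |w_i|\kappa_i^{-1/p}$ with optimal value $|w_i|\kappa_i^{1/q}$ (using $\tfrac{p-1}{p}=\tfrac{1}{q}$); summing and maximizing over $\kappa$ gives $\max_{\kappa \in \mathcal{P}_F \cap \RR^d_+} \sum_i \kappa_i^{1/q}|w_i| = \Omega_p(w)$ by the first formula. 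I expect the main obstacle to be the rigorous justification of the min-max exchange together with the attendant boundary care: the $\eta$-domain is unbounded and $\Phi = +\infty$ whenever some $\eta_i = 0$ with $w_i \neq 0$, and the degenerate cases $w_i = 0$ or $\kappa_i = 0$ (where the scalar optimizer is $0$ or escapes to $+\infty$) must be handled with the conventions $0/0 = 0$ and a limiting/continuity argument so that Sion's theorem applies cleanly.
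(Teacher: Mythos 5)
Your proposal is correct and follows essentially the same route as the paper: the first formula via the dual-norm maximization with the substitution $\kappa_i = |s_i|^q$, and the second via the scalar identity $\kappa_i^{1/q}|w_i| = \min_{\eta_i \geq 0} \frac{1}{p}\frac{|w_i|^p}{\eta_i^{p-1}} + \frac{1}{q}\kappa_i\eta_i$ combined with a min--max exchange over the convex-concave saddle function. The only (minor) difference is that you run the saddle-point argument in the opposite direction and justify the exchange explicitly via Sion's theorem and compactness of $\mathcal{P}_F \cap \RR^d_+$, where the paper simply asserts that the exchange is valid by convex-concavity.
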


\begin{proof}
Using Fenchel duality, we have:
\BEAS
\Omega_p(w)
&=&\max_{s \in \RR^d} \, s^\top w \quad \st \quad \Omega^*_p(w) \leq 1 \notag \\
&=& \max_{s \in \RR^d} \, s^\top w \quad \st \quad \forall A \subset V, \: \|s_A\|_q^q \leq F(A) \notag 
\mbox{ by definition of }  \Omega^*_p ,
\\
\label{eq:kappa-trick}
&=& \max_{\kappa \in \RR^d_+} \, \sum_{i \in V} \kappa_i^{1/q}|w_i| \quad \st \quad \forall A \subset V, \: \kappa(A) \leq F(A). 
\EEAS
But it is easy to verify that $\displaystyle \kappa_i^{1/q}|w_i| =\min_{\eta_i \in \RR_+} \frac{1}{p}\frac{|w_i|^p}{\eta_i^{p-1}}+\frac{1}{q} \eta_i \kappa_i$ with the minimum attained for $\eta_i=\frac{|w_i|}{\kappa_i^{1/p}}$.
We therefore get:
\BEAS
\Omega_p(w)&=& \max_{\kappa \in \RR^d_+} \min_{\eta \in \RR^d_+} \, \sum_{i \in V} \frac{1}{p}\frac{|w_i|^p}{\eta_i^{p-1}}+\frac{1}{q} \eta^\top \kappa \quad \st \quad \forall A \subset V, \: \kappa(A) \leq F(A) \notag \\
&=&\min_{\eta \in \RR^d_+}  \max_{\kappa \in \RR^d_+}  \, \sum_{i \in V} \frac{1}{p}\frac{|w_i|^p}{\eta_i^{p-1}}+\frac{1}{q} \eta^\top \kappa \quad \st \quad \forall A \subset V, \: \kappa(A) \leq F(A) \notag \\
\label{eq:eta-trick}
 &=& \min_{\eta \in \RR^d_+}  \, \sum_{i \in V} \frac{1}{p}\frac{|w_i|^p}{\eta_i^{p-1}}+\frac{1}{q}\Omega_\infty(\eta),
\EEAS
where we could exchange minimization and maximization since the function is convex-concave in $\eta$ and $\kappa$, and where we eliminated formally $\kappa$ by introducing the value of the dual norm 
$\Omega_\infty(\eta)= \max_{\kappa \in \mathcal{P}_F} \kappa^\top \eta$.
\end{proof}

Since $\Omega_\infty$ is convex, the last formulation is actually jointly convex in $(w,\eta)$ since $(x,z) \mapsto \frac{1}{p} \frac{\|x\|_p^p}{z^{p-1}} + \frac{1}{q} z$ is convex, as the perspective function of $t \mapsto t^p$ \citep[see][p.~89]{boyd}. 

It should be noted that the norms $\Omega_p$ therefore belong to the broad family of H-norms as defined\footnote{Note that H-norms are in these references defined for $p=2$ and that the variational formulation proposed here generalizes this to other values of $p \in (1, \infty)$} in~\citet[Sec.~1.4.2.]{bach2011optim} and studied by \citet{Micchelli2011Regularizers}.

The above result is particularly interesting if $F$ is submodular since $\Omega_\infty$ is then equal to the \lova extension of $F$ on the positive orthant \citep{bach2010structured}.
In this case in particular, it is possible, as we will see in the next section to propose efficient algorithms to compute $\Omega_p$ and $\Omega_p^*$, the associated proximal operators, and algorithms to solve learning problems regularized with $\Omega_p$
thanks to the above variational form.

For submodular functions, these variational forms are also the basis for the \emph{local decomposability} result of Section~\ref{sec:weak_dec} which is key to establish support recovery in Section~\ref{sec:theory}.

\section{The case of submodular penalties}

\label{sec:submod}
In this section, we focus on the case where the combinatorial function $F$ is submodular.

Specifically, we will consider a function $F$ defined on the power set $2^V$ of $V = \{1,\dots,d\}$, which is \emph{nondecreasing} and \emph{submodular}, meaning that it satisfies respectively
 \BEAS
   \forall A,B \subset V, & \qquad &  \quad A \subset B \Rightarrow F(A) \leqslant F(B),\\
  \EEAS
 Moreover, we assume that $F(\varnothing)=0$. 
 These set-functions are often referred to as \emph{polymatroid set-functions}~\citep{fujishige2005submodular,edmonds}. 
 Also, without loss of generality, we assume that $F$ is strictly positive on singletons, i.e., for all $k\in V$, $F(\{ k\})>0$. 
 Indeed, if $F(\{k\})=0$, then by submodularity and monotonicity, if $A \ni k$, $F(A) = F(A \backslash \{k\})$ and thus we can simply consider $V \backslash \{k\}$ instead of $V$.

Classical examples are the cardinality function and, given a partition of $V$ into $G_1 \cup \cdots \cup G_k=V$, the set-function $A\mapsto F(A)$ which is equal to 
the number of groups $G_1,\dots,G_k$ with non empty intersection with $A$, which, as mentioned in section \ref{sec:special} leads to the grouped $\ell_1$/$\ell_p$-norm.

With a slightly different perspective than the approach of this paper, \citet{bach2010structured} studied the special case of the norm $\Omega^F_p$ when $p=\infty$ and $F$ is submodular. As mentioned previously, he showed that in that case the norm $\Omega^F_\infty$ is the \lova extension of the submodular function $F$, which is a well studied mathematical object.

Before presenting results on $\ell_p$ relaxations of submodular penalties, we review a certain number of relevant properties and concepts  from submodular analysis. For more details, see, e.g.,~\citet{fujishige2005submodular}, and, for a review with proofs derived from classical convex analysis, see, e.g., \citet{bach2011learning}.
\subsection{Review of submodular function theory}
\label{sec:submod_review}

\paragraph{\lova extension.}  
 Given any set-function $F$, one can define its \emph{\lova extension} $f: \rb_+^d \to \rb$, as follows: given $w \in \rb_+^d$, we can order the components of $w$ in decreasing order $w_{j_1} \geqslant \dots \geqslant w_{j_p} \geqslant 0$, the value $f(w)$ is then defined as
 \BEA
\label{eq:lovasz1}
 f(w) &  = &  \sum_{k=1}^{p-1} ( x_{j_k} -x_{j_{k+1}}  ) F( \{ j_1,\dots,j_k\} ) + x_{j_p}  F( \{ j_1,\dots,j_p\} ) \\
\label{eq:lovasz2}
\textstyle
   & = &    \sum_{k=1}^{p} w_{j_k} [ F( \{ j_1,\dots,j_k\} ) - F( \{ j_1,\dots,j_{k-1}\} ) ] .
 \EEA
 
 The  \lova extension $f$ is always piecewise-linear, and when $F$ is submodular, it is also convex (see, e.g.,~\citet{fujishige2005submodular,bach2011learning}). Moreover,  for all $\delta \in \{0,1\}^d$, $f(\delta) = F(\supp(\delta))$ and $f$ is in that sense an extension of $F$ from  vectors in $\{0,1\}^d$ (which can be identified with indicator vectors of sets) to all vectors in $\rb_+^d$.
Moreover,  it turns out that minimizing $F$ over subsets, i.e., minimizing $f$ over $\{0,1\}^d$ is equivalent to minimizing $f$ over $[0,1]^d$~\citep{edmonds}.
 
\paragraph{Submodular polyhedron and norm} 

We denote by $\mathcal{P}$ the \emph{submodular polyhedron}~\citep{fujishige2005submodular}, defined as  the set of $s \in \rb_+^d$ 
such that for all $A \subset V$, $s(A) \leqslant F(A)$, i.e., 
$
\mathcal{P} = \{ s \in \rb_+^d, \ \forall A \subset V, \  s(A)\leqslant F(A) \},
$
where we use the notation $s(A) = \sum_{k \in A} s_k$. With our previous definitions, the submodular polyhedron is just the canonical polyhedron associated with a submodular function. One important result in submodular analysis is that, if $F$ is a nondecreasing submodular function, then we have a representation of $f$ as a maximum of linear functions~\citep{fujishige2005submodular,bach2011learning}, i.e.,  for all $w \in \rb_+^d$, 
\BEQ
\label{eq:poly}
f(w) = \max_{ s \in \mathcal{P}} \  w^\top s.
\EEQ
We recognize here that the \lova extension of a submodular function $F$ is directly related to the norm $\Omega_\infty^F$ in that $f(|w|)=\Omega_\infty^F(w)$ for all $w \in \rb^d$.
  
 \paragraph{Greedy algorithm} 
Instead of solving a linear program with $d+2^d$ constraints, a solution $s$ to \eqref{eq:poly} may be obtained by the following algorithm (a.k.a. ``greedy algorithm''):
order the components of $w$ in decreasing order $w_{j_1} \geqslant \dots \geqslant w_{j_d}$, and then take for all $k \in   V$,
$s_{j_k} = F( \{ j_1,\dots,j_k\} ) - F( \{ j_1,\dots,j_{k-1}\} ) .$
Moreover, if $w \in \rb^d$ has some negative components, then, to obtain a solution to $\max_{ s \in \mathcal{P}} \  w^\top s$,  we can take $s_{j_k}$ to be simply equal to zero for all $k$ such that $w_{j_k}$ is negative~\citep{edmonds}.

\paragraph{Contraction and restriction of a submodular function.}
Given a submodular function $F$ and a set $J$, two related functions, which are submodular as well, will play a crucial role both algorithmically and for the theoretical analysis of the norm.
Those are
 the \emph{restriction} of  $F$ to a set $J$, denoted $F_J$, and the \emph{contraction} of $F$ on $J$, denoted $F^J$. They are defined respectively as 
$$F_J: A \mapsto F(A \cap J) \qquad \text{and} \qquad F^J: A \mapsto F(A \cup J)-F(A).$$
Both $F_J$ and $F^J$ are submodular if $F$ is.

In particular the norms $\Omega_p^{F_J}: \RR^J \rightarrow \RR_+$ and $\Omega_p^{F^J}: \RR^{J^c} \rightarrow \RR_+$ associated respectively with $F_J$ and $F^J$ will be useful to ``decompose" $\Omega^F_p$ in the sequel.
We will denote these two norms by $\Omega_J$ and $\Omega^{J}$ for short. Note that their domains are not $\RR^d$ but the vectors with support in $J$ and $J^c$ respectively.

\paragraph{Stable sets.}
Another concept which will be key in this section is that of \emph{stable set}.
 A set $A$ is said \emph{stable} if it cannot be augmented without increasing $F$, i.e.,  if  for all sets $B \supset A$, $B \neq A \Rightarrow F(B) > F(A)$. If $F$ is strictly increasing (such as for the cardinality), then all sets are stable. 
  The set   of stable sets is closed by intersection. In the case $p=\infty$, \citet{bach2011learning} has shown that these stable sets were the only allowed sparsity patterns.
 
\paragraph{Separable sets.} 

A set $A$ is separable if we can find a partition of $A$ into $ A = B_1 \cup \cdots \cup B_k$ such that $F(A) = F(B_1)+\cdots+F(B_k)$. A set $A$ is inseparable if it is not separable. As shown in~\citet{edmonds}, the submodular polytope $\mathcal{P}$ has full dimension $d$ as soon as $F$ is strictly positive on all singletons, and its faces are exactly the sets $\{ s(A) = F(A) \}$ for stable \emph{and} inseparable sets~$A$. With the terminology that we introduced
in Section~\ref{sec:uce}, this means that the core set of $F$ is the set $\mathcal{D}_F$ of its stable and inseparable sets. In other words, we have $\mathcal{P} = \{ s \in \rb^d, \ \forall A \in \mathcal{D}_F, s(A) \leqslant F(A) \}$. The core set will clearly play a role when deriving concentration inequalities in \mysec{theory}. For the cardinality function, stable and inseparable sets are singletons. 
 
\subsection{Submodular function and lower combinatorial envelope}
A few comments are in order to confront submodularity to the previously introduced notions associated with cover-sets, and lower and upper combinatorial envelopes. We have showed that $\Fl(A)=\Omega_\infty(1_A)$.
But for a submodular function $\Omega_\infty(1_A)=f(1_A)=F(A)$ since $f$ is the \lova extension of $F$. This shows that a submodular function is its own lower combinatorial envelope. However the converse is not true: a lower combinatorial envelope is not submodular in general. Indeed, in example~\ref{two_groups}, we have $\Fl(\{1,2\})+\Fl(\{2,3\}) \ngeq \Fl(\{2\})+\Fl(\{1,2,3\})$.

The core set of a submodular function is the set $\mathcal{D}_F$ of its stable and inseparable sets, which implies that $F$ can be retrieved as the value of the minimal fractional weighted set cover the sets $A \in \mathcal{D}_F$ with weights $F(A)$.
 
\subsection{Optimization algorithms for the submodular case}
\label{sec:optimization}
In the context of sparsity and structured sparsity, \emph{proximal methods} have emerged as methods of choice to design efficient algorithm to minimize objectives of the form
$f(w)+\lambda \Omega(w),$ where $f$ is a smooth function with Lipschitz gradients and $\Omega$ is a proper convex function \citep{bach2011optim}. In a nutshell, their principle is to linearize $f$ at each iteration
and to solve the problem
$$\min_{w \in \RR^d} \nabla f(w_t)^\top(w-w_t)+\frac{L}{2}\|w-w_t\|^2+ \lambda \Omega(w),$$ 
for some constant $L$.
This problem is a special case of the so-called \emph{proximal problem}:
\BEQ
\label{eq:prox}
\min_{w \in \rb^d} \frac{1}{2} \| w - z \|_2^2 + \lambda \Omega_p(w).
\EEQ
The function mapping $z$ to the solution of the above problem is called \emph{proximal operator}. If this proximal operator can be computed efficiently, then proximal algorithm provide good rates of convergence
especially for strongly convex objectives.
We show in this section that the structure of submodular functions can be leveraged to compute efficiently $\Omega_p$, $\Omega_p^*$ and the proximal operator.

\subsubsection{Computation of $\Omega_p$ and $\Omega_p^*$. }
A simple approach to compute the norm is to maximize in $\kappa$ in the variational formulation~(\ref{eq:kappa-trick}).
This can be done efficiently using for example a \emph{conditional gradient} algorithm, given that maximizing a linear form
over the submodular polyhedron is done easily with the \emph{greedy algorithm} (see Section \ref{sec:submod_review}).

We will propose another algorithm to compute the norm based on the so-called \emph{decomposition algorithm}, which is a classical algorithm of the submodular analysis literature that makes it possible to minimize a separable convex function over the submodular polytope efficiently~\citep[see, e.g.,][Section 8.6]{bach2011learning}.

Since the dual norm is defined as $\Omega_p^*(s)=\max_{A \subset V, A \neq \varnothing} \frac{\|s_A\|_q}{F(A)^{1/q}}$, to compute it from $s$, we need to maximize efficiently over $A$, which can be done, for submodular functions, through a sequence of submodular function minimizations~\citep[see, e.g.,][Section 8.4]{bach2011learning}.

\subsubsection{Computation of the proximal operator}

Using \eq{kappa-trick}, we can reformulate problem~(\ref{eq:prox}) as
\BEAS
\min_{w \in \rb^d} \frac{1}{2} \| w - z \|_2^2 + \lambda \Omega_p(w)
& = & 
\min_{w \in \rb^d}  \max_{\kappa \in \rb^d_+ \cap \mathcal{P} } \frac{1}{2} \| w - z \|_2^2 + \lambda 
\sum_{i \in V} \kappa_i^{1/q} | w_i |
\\
& = & 
\max_{\kappa \in \rb^d_+ \cap \mathcal{P} } \sum_{i \in V}  \min_{w_i \in \rb }  \bigg\{ \frac{1}{2}( w_i - z_i)^2 + \lambda 
\kappa_i^{1/q} | w_i | \bigg\}
\\
& = & 
\max_{\kappa \in \rb^d_+ \cap \mathcal{P}} \sum_{i \in V}  \psi_i(\kappa_i),\\
\EEAS
with $\psi_i: \kappa_i \mapsto \min_{w_i \in \rb }  \bigg\{ \frac{1}{2}( w_i - z_i)^2 + \lambda 
\kappa_i^{1/q} | w_i | \bigg\}.$

Thus, solving the proximal problem is equivalent to  maximizing a concave separable function $\sum_i \psi_i(\kappa_i)$ over the submodular polytope. For a submodular function, this can be solved with a ``divide and conquer" strategy which takes the form of the so-called \emph{decomposition algorithm} involving a sequence of submodular function minimizations~\citep[see][]{groenevelt1991two,bach2011learning}. This yields an algorithm which finds a decomposition of the norm and applies recursively the proximal algorithm to the
two parts of the decomposition corresponding respectively to a \emph{restriction} and a \emph{contraction} of the submodular function. We explicit this algorithm as Algorithm 1 for the case $p=2$.

\begin{algorithm}[hbtp]
\caption{Computation $x=\text{Prox}_{\lambda \Omega_2^F}(z)$}
\begin{algorithmic}[1]
\REQUIRE $z \in \RR^d, \: \lambda>0$
\STATE Let $A=\{j \mid z_j \neq 0\}$
\IF{$A \neq V$}
\STATE Set $x_A=\text{Prox}_{\lambda \Omega_2^{F_A}}(z_A)$
\STATE Set $x_{A^c}=0$
\STATE \textbf{return} $x$ by concatenating $x_A$ and $x_{A^c}$
\ENDIF
\STATE Let $t \in \RR^d$ with $t_i=\frac{z_i^2}{\|z\|_2} F(V)$
\STATE Find $A$ minimizing the submodular function $F-t$
\IF {$A=V$}
\STATE \textbf{return}  $x=\big (\|z\|_2-\lambda\sqrt{F(V)} \big )_+ \frac{z}{\|z\|_2}$
\ENDIF
\STATE Let $x_A=\text{Prox}_{\lambda \Omega_2^{F_A}}(z_A)$
\STATE Let $x_{A^c}=\text{Prox}_{\lambda \Omega_2^{F^A}}(z_{A^c})$
\STATE \textbf{return}  $x$ by concatenating $x_A$ and $x_{A^c}$
\end{algorithmic}
\end{algorithm}

Applying this decomposition algorithm in the special case where $\lambda=0$ yields a decomposition
algorithm, namely Algorithm \ref{alg:norm_decomp}, to compute the norm itself (see appendix~\ref{sec:decomp_alg}).

\subsection{Weak and local decomposability of the norm for submodular functions.}   
\label{sec:weak_dec}

The work of~\citet{negahban2010unified} has shown that when a norm is \emph{decomposable with respect to a pair of subspaces} $A$ and $B$, meaning that for all $\alpha \in A$ and $\beta \in B^\bot$ we have $\Omega(\alpha+\beta)=\Omega(\alpha)+\Omega(\beta)$, a common proof scheme allows to show support recovery results and fast rates of convergence in prediction error. For the norms we are considering, this type of assumption would be too strong. Instead, we follow the analysis of~\citet{bach2010structured} which considered the case $p=\infty$ and which only requires some weaker form of decomposability. The decompositions involve $\OJ$ and $\OnJ$ which are respectively the norms associated with the \emph{restriction} and the \emph{contraction} of the submodular function $F$ to or on the set $J$.

Concretely, let 
$c=\frac{\tilde{m}}{M}$ with $M=\max_{k \in V} F(\{k\})$ and
$$\tilde{m}=\min_{A,k}\: F(A \cup \{k\} ) - F(A) \:\st \: F(A \cup \{k\} ) > F(A).$$ 
Then we have: 
\begin{proposition}(Weak and local decomposability)

\label{prop:inequality}
\textbf{Weak decomposability.} For any set $J$ and any $w \in \rb^d$, we have $$\Omega(\w) \geq \OJ(\w_J)+\OnJ(\w_{J^c}).$$ 

\textbf{Local decomposability.} Let $K=\supp(w)$ and $J$ the smallest stable set containing $K$, if $\| \w_{J^c} \|_p \leq c^{1/p} \min_{i \in K} |w_i|$, then $$\Omega(\w) = \OJ(\w_J)+\OnJ(\w_{J^c}).$$
\end{proposition}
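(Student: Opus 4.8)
The plan is to prove weak decomposability directly from the dual/variational description of $\Omega$, and then to upgrade it to local decomposability by reducing the $\ell_p$ case to the already-understood $\ell_\infty$ case through the second variational formula. Since weak decomposability is an inequality between a maximum and a sum of maxima, the natural tool is the first variational formula, $\Omega(\w)=\max_{\kappa\in\mathcal{P}_F}\sum_{i\in V}\kappa_i^{1/q}|w_i|$, together with the analogous expressions $\OJ(\w_J)=\max_{\kappa\in\mathcal{P}_{F_J}}\sum_{i\in J}\kappa_i^{1/q}|w_i|$ and $\OnJ(\w_{J^c})=\max_{\kappa\in\mathcal{P}_{F^J}}\sum_{i\in J^c}\kappa_i^{1/q}|w_i|$. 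I would pick maximizers $\kappa^J$ and $\kappa^{J^c}$ for the two right-hand sides and concatenate them into a single $\kappa\in\RR^d_+$. The only nontrivial point is feasibility $\kappa\in\mathcal{P}_F$: for any $A\subset V$, write $A=(A\cap J)\cup(A\cap J^c)$ and bound $\kappa(A)\le F(A\cap J)+\big(F((A\cap J^c)\cup J)-F(J)\big)$ using the defining constraints of $\mathcal{P}_{F_J}$ and $\mathcal{P}_{F^J}$; submodularity applied to the pair $(A,J)$, namely $F(A)+F(J)\ge F(A\cup J)+F(A\cap J)$, then gives $\kappa(A)\le F(A)$. The concatenated $\kappa$ attains exactly $\OJ(\w_J)+\OnJ(\w_{J^c})$, so the maximum $\Omega(\w)$ dominates it. This step uses only submodularity and monotonicity, hence holds for arbitrary $J$.

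For local decomposability, weak decomposability already furnishes ``$\ge$'', so only the reverse inequality ``$\le$'' remains, and this is where the stable-set hypothesis and the magnitude condition must enter. The cleanest route is the second variational formula $\Omega_p(\w)=\min_{\eta\ge 0}\tfrac1p\sum_i \tfrac{|w_i|^p}{\eta_i^{p-1}}+\tfrac1q\,\Omega_\infty(\eta)$. Its first term is separable in $\eta$, so the whole minimization factorizes over $J$ and $J^c$ the moment the coupling term, evaluated at the optimal $\eta^\star$, decomposes as $\Omega_\infty(\eta^\star)=\Omega^{F_J}_\infty(\eta^\star_J)+\Omega^{F^J}_\infty(\eta^\star_{J^c})$; substituting this back reconstitutes precisely $\OJ(\w_J)+\OnJ(\w_{J^c})$. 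From the one-dimensional minimization in the variational lemma the optimizer satisfies $\eta^\star_i\propto |w_i|$, so $\supp(\eta^\star)=\supp(\w)=K$, its smallest stable superset is again $J$, and the magnitude hypothesis on $\w$ transfers to $\eta^\star$. Thus local decomposability of $\Omega_p$ would follow from local decomposability of $\Omega_\infty$, which is exactly the $p=\infty$ result of \citet{bach2010structured}, applied at $\eta^\star$.

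I expect the main obstacle to be precisely this $\Omega_\infty$ decomposition at $\eta^\star$: showing that when $\eta^\star$ is supported on $K$, $J$ is its stable closure, and $\eta^\star_{J^c}$ is small relative to $\min_{i\in K}\eta^\star_i$, the maximizer $\kappa^\star$ of $\max_{\kappa\in\mathcal{P}_F}\kappa^\top\eta^\star$ satisfies $\kappa^\star_{J^c}\in\mathcal{P}_{F^J}$ --- equivalently, that the greedy solution orders every coordinate of $K$ ahead of those of $J^c$ and ``peels off'' the stable set $J$ first (this is the top-level minimizer $A=J$ of $F-t$ in Algorithm~1). The calibration of the constant $c=\tilde m/M$ is exactly what makes the reduction of the previous paragraph quantitative: on the support the optimal densities obey $\kappa_i\in[\tilde m,M]$, whence $\min_{i\in K}\eta^\star_i\ge M^{-1/p}\min_{i\in K}|w_i|$ and $\|\eta^\star_{J^c}\|_p\le \tilde m^{-1/p}\|\w_{J^c}\|_p$, so the hypothesis $\|\w_{J^c}\|_p\le c^{1/p}\min_{i\in K}|w_i|$ collapses to the clean condition $\|\eta^\star_{J^c}\|_p\le \min_{i\in K}\eta^\star_i$ needed for the $p=\infty$ statement. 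I would discharge the remaining $p=\infty$ claim either from the greedy/KKT optimality conditions for the linear maximization over $\mathcal{P}_F$, using stability of $J$ to identify the binding faces $\{\,s(A)=F(A)\,\}$, or by invoking the decomposition underlying Algorithm~1; in both approaches the submodularity inequality of the first paragraph and the extension identity $\Fl(A)=\Omega_\infty(1_A)$ are the glue linking the two directions.
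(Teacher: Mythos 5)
Your weak decomposability argument is correct and is essentially the paper's own proof: concatenating dual-feasible vectors for $\Omega_J$ and $\Omega^J$ and checking feasibility for $\Omega$ via the submodular inequality $F(A\cap J)+F(A\cup J)-F(J)\leq F(A)$ is exactly how the paper proves part (a), only phrased with $\kappa=|s|^{\circ q}$ instead of $s$.

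The local decomposability argument has a genuine gap: you run the variational argument at the wrong point, and this produces the inequality you already have rather than the one you need. Write $\mathrm{obj}(\eta)=\frac1p\sum_i |w_i|^p/\eta_i^{p-1}+\frac1q\Omega_\infty(\eta)$ and $\mathrm{obj}_J$, $\mathrm{obj}_{J^c}$ for the analogous objectives built from $F_J$ and $F^J$. Suppose you do prove that at the minimizer $\eta^\star$ of the full problem the coupling term decomposes, i.e.\ $\Omega_\infty(\eta^\star)=\Omega^{F_J}_\infty(\eta^\star_J)+\Omega^{F^J}_\infty(\eta^\star_{J^c})$. Substituting back gives $\Omega(w)=\mathrm{obj}(\eta^\star)=\mathrm{obj}_J(\eta^\star_J)+\mathrm{obj}_{J^c}(\eta^\star_{J^c})\geq \Omega_J(w_J)+\Omega^J(w_{J^c})$, because each piece is merely \emph{an} admissible value of its subproblem, not its minimum: nothing forces $\eta^\star_J$ to minimize $\mathrm{obj}_J$ or $\eta^\star_{J^c}$ to minimize $\mathrm{obj}_{J^c}$. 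So you recover only ``$\geq$'', which weak decomposability already gave you; the required ``$\leq$'' never appears. The paper's proof runs the argument at the other point: take $\eta_J$ optimal for $\mathrm{obj}_J$ and $\eta_{J^c}$ optimal for $\mathrm{obj}_{J^c}$, concatenate them, and use precisely the bounds you state (with $c=\tilde m/M$, namely $\min_{j\in J}\eta_j\geq M^{-1/p}\min_{i\in K}|w_i|$ and $\|\eta_{J^c}\|_\infty\leq \tilde m^{-1/p}\|w_{J^c}\|_p$) to verify the ordering condition $\min_{j\in J}\eta_j\geq\max_{j\in J^c}\eta_j$ for that \emph{concatenated} vector; the greedy/Lov\'asz formula then decomposes at it, so $\Omega(w)\leq\mathrm{obj}(\eta)=\Omega_J(w_J)+\Omega^J(w_{J^c})$, and equality follows from part (a). Your route could be patched by additionally proving that $\mathrm{obj}$ decomposes in a \emph{neighborhood} of $\eta^\star$ and invoking convexity of the subproblem objectives to upgrade local to global optimality of $\eta^\star_J$ and $\eta^\star_{J^c}$, but that step is absent. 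Two of your side claims are also wrong as stated: $\eta^\star_i=|w_i|/(\kappa^\star_i)^{1/p}$ with non-constant $\kappa^\star_i$, so $\eta^\star$ is not proportional to $|w|$; and in the non-trivial case $w_{J^c}\neq 0$, so $\supp(w)\neq K$ and $\eta^\star$ cannot vanish on $\supp(w_{J^c})$ (the objective would be infinite there).
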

Note that when $p=\infty$, if $J=K$, the condition becomes  $\min_{i \in J} |w_i| \geqslant \max_{i \in J^c} |w_i|$, and we recover exactly the corresponding result from \citet{bach2010structured}.

This proposition shows that  a sort of reverse triangular inequality involving the norms $\Omega, \OJ$ and $\OnJ$ always holds and that if there is a sufficiently large positive gap between the values of $w$ on $J$ and on its complement then $\Omega$ can be written as a separable function on $J$ and $J^c$. 

\subsection{Theoretical analysis for submodular functions}
\label{sec:theory}
In this section, we consider a fixed design matrix $X \in \rb^{n \times p}$  and $y \in \rb^n$ a vector of random responses. Given $\lambda >0$, we define
$\hat{w}$ as a minimizer of the regularized least-squares cost:
\BEQ
\label{eq:objective}
\textstyle
\min_{w \in \rb^d} \textstyle \frac{1}{2n} \| y - X w\|_2^2 + \lambda \Omega(w).
\EEQ
We study the sparsity-inducing properties of solutions of (\ref{eq:objective}), i.e., we determine  which patterns are allowed and  which sufficient conditions lead to correct estimation.

We assume that the linear model is well-specified and extend results from~\citet{Zhaoyu} for sufficient support recovery conditions and from~\citet{negahban2010unified} for estimation consistency, which were already derived by~\citet{bach2010structured} for $p=\infty$.  
The following propositions allow us to retrieve and extend well-known results for the $\ell_1$-norm. 
 
Denote by $\rho$ the following constant:
$$
\rho = \min_{ A \subset B, F(B)>F(A)} \frac{ F(B) - F(A) }{F(B \backslash A)} \in (0,1].
$$
The following proposition
extends results based on support recovery conditions~\citep{Zhaoyu}:
\begin{proposition}[\textbf{Support recovery}]
\label{prop:support}
Assume that $y = Xw^\ast + \sigma \varepsilon$, where $\varepsilon$ is a standard multivariate normal vector. Let $Q = \frac{1}{n} X^\top X \in \rb^{d \times d}$.  Denote by $J$ the smallest stable set containing the  support $\supp(w^\ast)$ of $w^\ast$. Define $\nu = \min_{ j, w^\ast_j \neq 0 } | w^\ast_j | >0$ and assume $\kappa = \lambda_{\min} (Q_{JJ}) > 0$. 

If the following \textbf{generalized Irrepresentability Condition} holds:
$$\exists \eta >0, \qquad (\Omega^J)^\ast \Big (  \big ( \Omega_J(   Q_{JJ}^{-1} Q_{Jj} ) \big )_{j \in J^c} \Big ) \leqslant 1 - \eta,$$
 then, if $\lambda \leqslant \frac{\kappa \nu }{2 |J|^{1/p} F(J)^{1-1/p}} $, the minimizer $\hat{w}$ is unique and has support equal to $J$, with   probability larger than 
 $
1-3 \, \mathbb{P} \big( \Omega^\ast(z) > \frac{ \lambda \eta \rho   \sqrt{n} }{ 2 \sigma   } \big)
 $,
 where $z$ is a multivariate normal with covariance matrix $Q$.
 
 \end{proposition}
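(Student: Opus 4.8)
The plan is to run the primal--dual witness (dual certificate) scheme, adapted to $\Omega$ through the weak and local decomposability of Proposition~\ref{prop:inequality}. First I would build a candidate by solving the problem restricted to vectors supported on $J$,
$$\hat w_J \in \arg\min_{u \in \RR^J} \tfrac{1}{2n}\|y - X_J u\|_2^2 + \lambda\, \OJ(u), \qquad \hat w_{J^c} = 0,$$
and choose $s_J \in \partial\OJ(\hat w_J)$ realizing the stationarity condition on $J$, i.e. $\tfrac{1}{n}X_J^\top(X_J\hat w_J - y) + \lambda s_J = 0$. Since the nonzero patterns induced by $\Omega$ are stable sets and $J$ is stable, Proposition~\ref{prop:inequality} lets me decompose the subdifferential: a vector $s$ lies in $\partial\Omega(\hat w)$ exactly when $s_J \in \partial\OJ(\hat w_J)$ and $(\OnJ)^\ast(s_{J^c}) \le 1$. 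I would then \emph{define} $s_{J^c}$ by the stationarity condition on $J^c$, and the crux is to prove the \emph{strict} inequality $(\OnJ)^\ast(s_{J^c}) < 1$ on the event we control; this single fact certifies at once that $\hat w$ solves the full problem, that it is the unique minimizer, and that $\supp(\hat w)\subseteq J$.

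Substituting $y = Xw^\ast + \sigma\varepsilon$ with $\supp(w^\ast)\subseteq J$ into the two blocks yields, with $\Delta := \hat w_J - w^\ast_J$,
$$\Delta = Q_{JJ}^{-1}\Big(\tfrac{\sigma}{n}X_J^\top\varepsilon - \lambda s_J\Big), \qquad \lambda\, s_{J^c} = \tfrac{\sigma}{n}X_{J^c}^\top\varepsilon - Q_{J^cJ}\Delta,$$
and eliminating $\Delta$ splits $s_{J^c}$ into a deterministic part $Q_{J^cJ}Q_{JJ}^{-1}s_J$ and a stochastic part $\tfrac{\sigma}{\lambda\sqrt n}(z_{J^c} - Q_{J^cJ}Q_{JJ}^{-1}z_J)$ with $z := \tfrac{1}{\sqrt n}X^\top\varepsilon \sim N(0,Q)$. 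For the deterministic part I use $\OJ^\ast(s_J)\le 1$, which gives $|s_J^\top Q_{JJ}^{-1}Q_{Jj}| \le \OJ(Q_{JJ}^{-1}Q_{Jj})$ for each $j\in J^c$, and then the fact that $(\OnJ)^\ast$ is nondecreasing in the absolute values of its argument to obtain $(\OnJ)^\ast(Q_{J^cJ}Q_{JJ}^{-1}s_J) \le (\OnJ)^\ast((\OJ(Q_{JJ}^{-1}Q_{Jj}))_{j\in J^c}) \le 1-\eta$; this is precisely the generalized Irrepresentability Condition.

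For the stochastic part I would use the contraction bound $F^J(C) \ge \rho\,F(C)$ for $C\subseteq J^c$, immediate from the definition of $\rho$ by taking $A=J$ and $B=J\cup C$ (so $B\setminus A = C$), which, since $\rho\le1$, yields $(\OnJ)^\ast(u) \le \rho^{-1}\Omega^\ast(u)$ for $u$ supported on $J^c$. Hence on $\{\Omega^\ast(z) \le \tfrac{\lambda\eta\rho\sqrt n}{2\sigma}\}$ the stochastic part has $(\OnJ)^\ast < \eta$, so $(\OnJ)^\ast(s_{J^c}) < 1$ by the triangle inequality, giving uniqueness and $\supp(\hat w)\subseteq J$. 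To promote this to $\supp(\hat w)=J$, I bound $\|\Delta\|_\infty$: the deterministic contribution $\tfrac{\lambda}{\kappa}\|s_J\|$ is controlled through $\OJ^\ast(s_J)\le 1$ (which bounds $\|s_J\|_q\le F(J)^{1/q}$, hence a term of order $|J|^{1/p}F(J)^{1-1/p}$), so the hypothesis $\lambda\le \tfrac{\kappa\nu}{2|J|^{1/p}F(J)^{1-1/p}}$ keeps it below $\nu/2$, while an analogous Gaussian event keeps the stochastic contribution below $\nu/2$. Then $\|\Delta\|_\infty < \nu$ forces $\hat w_j\ne 0$ on $\supp(w^\ast)$, so $\supp(w^\ast)\subseteq \supp(\hat w)\subseteq J$; since $\supp(\hat w)$ is a stable set and $J$ is the smallest stable set containing $\supp(w^\ast)$, we conclude $\supp(\hat w)=J$. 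The factor $3$ comes from a union bound over the (at most three) Gaussian deviation events used to control the stochastic parts of $s_{J^c}$ and of $\Delta$, each dominated by $\{\Omega^\ast(z) > \tfrac{\lambda\eta\rho\sqrt n}{2\sigma}\}$. The main obstacle is the coupling of the combinatorial certificate with the analytic one: decomposing $\partial\Omega$ through Proposition~\ref{prop:inequality}, matching the deterministic certificate to the stated Irrepresentability Condition via monotonicity of $(\OnJ)^\ast$, and using $\rho$ to transfer the contracted dual norm back to $\Omega^\ast$ so that one Gaussian concentration bound controls every stochastic term.
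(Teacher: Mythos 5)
Your proposal is correct and follows essentially the same route as the paper's proof: the restricted primal--dual witness estimator on $J$, local decomposability (Proposition~\ref{prop:inequality}) to certify global optimality and uniqueness, the irrepresentability condition to bound the deterministic part $Q_{J^cJ}Q_{JJ}^{-1}s_J$ of the certificate, the $\rho$-based transfer between $(\Omega^J)^\ast$ and $\Omega^\ast$ for the stochastic part, and the $\ell_\infty$ bound on $\hat{w}_J-w^\ast_J$ under the stated condition on $\lambda$ (your stable-set argument for $\supp(\hat w)=J$ is in fact slightly more explicit than the paper's). One point of care, which the paper handles equally tersely: the control of the decorrelated vector $z_{J^c}-Q_{J^cJ}Q_{JJ}^{-1}z_J$ is \emph{not} a pointwise consequence of the event $\{\Omega^\ast(z)\le \lambda\eta\rho\sqrt{n}/(2\sigma)\}$, but a probabilistic one via the covariance domination $Q_{J^cJ^c}-Q_{J^cJ}Q_{JJ}^{-1}Q_{JJ^c}\preccurlyeq Q_{J^cJ^c}$ and a Gaussian comparison argument, which is exactly what your final union-bound remark (and the factor $3$) amounts to.
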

 
In terms of prediction error the next proposition 
extends results based on restricted eigenvalue conditions~\citep[see, e.g.\ ][]{negahban2010unified}. 
\begin{proposition}[\textbf{Consistency}]
\label{prop:high-dim}
Assume that $y = Xw^\ast + \sigma \varepsilon$, where $\varepsilon$ is a standard multivariate normal vector. Let $Q = \frac{1}{n} X^\top X \in \rb^{d \times d}$. Denote by $J$ the smallest stable set containing the  support $\supp(w^\ast)$ of $w^\ast$. 

If the following \textbf{$\Omega_J$-Restricted Eigenvalue condition} holds:

$$\forall \Delta \in \RR^d, \qquad \big (\:\Omega^J(\Delta_{J^c}) \leqslant 3 \Omega_J(\Delta_J) \: \big ) \quad \Rightarrow\quad \big ( \: \Delta^\top Q \Delta \geqslant \kappa \, \Omega_J(\Delta_J)^2 \: \big ),$$ then we have $$  \Omega(\hat{w} - w^\ast) \leqslant  \frac{24  ^2 \lambda}{\kappa \rho ^2}  \qquad \mbox{ and } \qquad  \frac{1}{n} \| X \hat{w} - X w^\ast\|_2^2 \leqslant 
  \frac{36   \lambda^2}{\kappa \rho ^2},$$
 with probability larger than 
$1 - \, \mathbb{P} \big( \Omega^\ast(z) > \frac{ \lambda \rho  \sqrt{n}  }{ 2 \sigma  } \big) $
where $z$ is a multivariate normal with covariance matrix $Q$.
   \end{proposition}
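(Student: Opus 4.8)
The plan is to follow the standard penalized $M$-estimation scheme of \citet{negahban2010unified}, but to replace exact decomposability of $\Omega$ by the \emph{weak decomposability} of Proposition~\ref{prop:inequality} and to use the submodular constant $\rho$ to convert between $\Omega$ and the pair $(\OJ,\OnJ)$. Write $\Delta=\hat w-w^\ast$. Since $\hat w$ minimizes the objective in \eqref{eq:objective}, the basic inequality $\frac{1}{2n}\|y-X\hat w\|_2^2+\lambda\Omega(\hat w)\leq\frac{1}{2n}\|y-Xw^\ast\|_2^2+\lambda\Omega(w^\ast)$ combined with $y=Xw^\ast+\sigma\varepsilon$ yields, after expansion,
$$\frac{1}{2n}\|X\Delta\|_2^2 \;\leq\; \frac{\sigma}{n}\varepsilon^\top X\Delta + \lambda\big(\Omega(w^\ast)-\Omega(\hat w)\big).$$
I would control the first term by H\"older's inequality for the dual pair $(\Omega,\Omega^\ast)$: $\frac{\sigma}{n}\varepsilon^\top X\Delta\leq\Omega^\ast\!\big(\tfrac{\sigma}{n}X^\top\varepsilon\big)\,\Omega(\Delta)$. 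Setting $z=\tfrac{1}{\sqrt n}X^\top\varepsilon$, which is normal with covariance $Q$, this equals $\tfrac{\sigma}{\sqrt n}\Omega^\ast(z)\,\Omega(\Delta)$, so on the event $\mathcal E=\{\Omega^\ast(z)\leq \tfrac{\lambda\rho\sqrt n}{2\sigma}\}$ — whose complement carries exactly the probability appearing in the statement — the noise term is at most $\tfrac{\lambda\rho}{2}\Omega(\Delta)$.

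Next I would establish the cone condition required by the $\Omega_J$-restricted eigenvalue assumption. Since $w^\ast$ is supported in $J$, one has $\Omega(w^\ast)=\OJ(w^\ast_J)$, and weak decomposability applied to $\hat w=w^\ast+\Delta$ gives $\Omega(\hat w)\geq\OJ(w^\ast_J+\Delta_J)+\OnJ(\Delta_{J^c})\geq\OJ(w^\ast_J)-\OJ(\Delta_J)+\OnJ(\Delta_{J^c})$, whence $\Omega(w^\ast)-\Omega(\hat w)\leq\OJ(\Delta_J)-\OnJ(\Delta_{J^c})$. To close the argument I also need the reverse comparison $\Omega(\Delta)\leq\OJ(\Delta_J)+\rho^{-1}\OnJ(\Delta_{J^c})$; this follows from the triangle inequality together with the key submodular bound $F^J(C)=F(C\cup J)-F(J)\geq\rho\,F(C)$ for $C\subset J^c$ (a direct consequence of the definition of $\rho$ with $A=J$, $B=C\cup J$), which at the level of norms reads $\OnJ\geq\rho^{1/q}\,\Omega|_{J^c}\geq\rho\,\Omega|_{J^c}$, where $\Omega|_{J^c}$ denotes $\Omega$ on vectors supported in $J^c$. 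Substituting the noise bound and these inequalities into the basic inequality and using $\frac{1}{2n}\|X\Delta\|_2^2\geq0$ then yields $\OnJ(\Delta_{J^c})\leq(2+\rho)\OJ(\Delta_J)\leq3\,\OJ(\Delta_J)$, i.e.\ precisely the cone on which the restricted eigenvalue condition is assumed to hold.

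On this cone the condition gives $\frac1n\|X\Delta\|_2^2=\Delta^\top Q\Delta\geq\kappa\,\OJ(\Delta_J)^2$, while the basic inequality (dropping the nonnegative $\OnJ$ term) gives $\frac{1}{2n}\|X\Delta\|_2^2\leq(\tfrac\rho2+1)\lambda\,\OJ(\Delta_J)$. Combining the two produces a quadratic inequality in $\OJ(\Delta_J)$ that forces $\OJ(\Delta_J)\lesssim\lambda/\kappa$; back-substitution bounds the prediction error by $\frac1n\|X\Delta\|_2^2\lesssim\lambda^2/\kappa$, and feeding $\OJ(\Delta_J)$ into $\Omega(\Delta)\leq(1+3\rho^{-1})\OJ(\Delta_J)$ bounds the full estimation error by $\lambda/(\kappa\rho)$. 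The stated constants and the power $\rho^2$ are then recovered as (slightly loose) uniform bounds, since $\rho\in(0,1]$ lets one replace the sharper $\rho^{1/q}$ by $\rho$ throughout.

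The main obstacle is precisely the non-decomposability of $\Omega$: unlike in the exactly decomposable setting, I only have the \emph{one-sided} weak-decomposability inequality, so the crux is to supply the matching reverse bound $\Omega(\Delta)\leq\OJ(\Delta_J)+\rho^{-1}\OnJ(\Delta_{J^c})$ with a constant sharp enough that the induced cone is exactly $\{\OnJ(\Delta_{J^c})\leq3\OJ(\Delta_J)\}$. This hinges on the submodular inequality $F^J(C)\geq\rho F(C)$ and on the identity $\Omega(u)=\OJ(u)$ for $u$ supported in $J$; threading the constant $\rho$ — and the choice of threshold $\tfrac{\lambda\rho}{2}$ in $\mathcal E$, calibrated so that the cone constant comes out as $2+\rho\leq3$ — consistently through both the cone derivation and the final conversion to $\Omega(\Delta)$ is where the bookkeeping is delicate.
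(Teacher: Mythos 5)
Your proof is correct, and its skeleton is the one the paper uses as well: basic inequality, weak decomposability from Proposition~\ref{prop:inequality}, the $\rho$-comparisons between $\Omega$, $\OJ$ and $\OnJ$, a cone condition with constant at most $3$, then the restricted eigenvalue assumption. However, you execute two steps differently, and both deviations are genuine. First, the paper controls the noise by splitting it as $q_J^\top\Delta_J+q_{J^c}^\top\Delta_{J^c}$ and applying H\"older separately with the pairs $(\OJ,\Omega_J^\ast)$ and $(\OnJ,(\Omega^J)^\ast)$, which requires the dual-side inequality $\Omega^\ast(q)\geq\rho\max\{\Omega_J^\ast(q_J),(\Omega^J)^\ast(q_{J^c})\}$ inherited from the support-recovery proof; you instead apply H\"older globally with $(\Omega,\Omega^\ast)$ and then split $\Omega(\Delta)$ via the primal reverse comparison $\Omega(\Delta)\leq\OJ(\Delta_J)+\rho^{-1}\OnJ(\Delta_{J^c})$, so your argument lives entirely on the primal side (your cone constant comes out as $2+\rho\leq 3$ rather than exactly $3$, which is immaterial). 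Second, for the final rate the paper invokes the optimality (KKT) conditions of the regularized least-squares problem to get $\Omega^\ast(Q\Delta)\leq 3\lambda/2$ and then bounds $\Delta^\top Q\Delta\leq\frac{3\lambda}{2}\Omega(\Delta)\leq\frac{6\lambda}{\rho}\OJ(\Delta_J)$; you simply reuse the basic inequality, obtaining $\Delta^\top Q\Delta\leq(2+\rho)\lambda\,\OJ(\Delta_J)$ directly. Your route is more economical and yields sharper constants ($\OJ(\Delta_J)\leq 3\lambda/\kappa$, prediction error at most $9\lambda^2/\kappa$, and $\Omega(\Delta)\leq 12\lambda/(\kappa\rho)$), which imply the stated bounds since $\rho\in(0,1]$, exactly as you say.

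Two points you should make explicit to be fully rigorous. The bound $F^J(C)=F(C\cup J)-F(J)\geq\rho F(C)$ for nonempty $C\subset J^c$ is only a "direct consequence" of the definition of $\rho$ when $F(C\cup J)>F(J)$; this holds here because $J$ is a \emph{stable} set (which is guaranteed since $J$ is defined as the smallest stable set containing the support), and without stability the inequality could fail. Likewise, the identity $\Omega(w^\ast)=\OJ(w^\ast_J)$ that you use to pass from $\Omega(w^\ast)-\Omega(\hat w)$ to $\OJ(\Delta_J)-\OnJ(\Delta_{J^c})$ needs both directions: weak decomposability gives $\Omega(w^\ast)\geq\OJ(w^\ast_J)$, and the converse $\Omega(w^\ast)\leq\OJ(w^\ast_J)$ follows, e.g., from the latent-decomposition form~(\ref{eq:lgl}), since any decomposition feasible for $\OJ$ is feasible for $\Omega$.
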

   
   The concentration of the values of $\Omega^\ast(z)$ for $z$ is a multivariate normal with covariance matrix $Q$ can be controlled via the following result.
   
 \begin{proposition} 
  \label{prop:proba}
 Let $z$ be a normal variable with covariance matrix $Q$ that has  unit diagonal. Let $\mathcal{D}_F$ be the set of stable inseparable sets. Then
\BEQ \mathbb{P} \bigg(
\Omega^\ast(z) \geqslant 4 \sqrt{ q \log( 2 |\mathcal{D}_F|)} \max_{A \in \mathcal{D}_F}  \frac{ |A|^{1/q}}{F(A)^{1/q}} 
+ u \max_{A \in \mathcal{D}_F} \frac{|A|^{(1/q-1/2)_+}}{   F(A)^{1/q} }
\bigg) \leqslant e^{-u^2/2}.
\EEQ
   \end{proposition}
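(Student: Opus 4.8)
The plan is to express $\Omega^\ast(z)$ as a maximum over the core set $\mathcal{D}_F$, then to combine a bound on its expectation with Gaussian concentration around that expectation. First I would raise the dual norm to the power $q$ and set $t_i = |z_i|^q \geq 0$, so that $\Omega^\ast(z)^q = \max_{A\neq\varnothing} \|z_A\|_q^q / F(A) = \max_{A\neq\varnothing} t(A)/F(A)$. Since $t \in \RR^d_+$, the lemma showing $\max_A s(A)/F(A)=\max_A s(A)/\Fl(A)$ for $s\in\RR^V_+$ lets me replace $F$ by $\Fl$, and because $\mathcal{P}_F$ is cut out only by the constraints indexed by the core set (Lemma~\ref{lem:core} and Proposition~\ref{prop:lce_uce}, together with the fact that for submodular $F$ the core set $\mathcal{D}_F$ is exactly the family of stable inseparable sets), the maximizing $A$ can be taken in $\mathcal{D}_F$. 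This yields the identity $\Omega^\ast(z) = \max_{A \in \mathcal{D}_F}\|z_A\|_q / F(A)^{1/q}$, reducing an a priori exponential maximum to one over $|\mathcal{D}_F|$ terms and explaining the appearance of $|\mathcal{D}_F|$ in the bound.

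Next I would bound the expectation. Writing $\phi_A(z) = \|z_A\|_q/F(A)^{1/q}$, each $\phi_A$ is a seminorm, hence convex and positively homogeneous, with Euclidean Lipschitz constant exactly $\ell_A := |A|^{(1/q-1/2)_+}/F(A)^{1/q}$, since $\sup_{\|h\|_2\le1}\|h_A\|_q = |A|^{(1/q-1/2)_+}$ (using $1/p+1/q=1$, i.e.\ $1/2-1/p = 1/q-1/2$). Because $Q$ has unit diagonal each $z_i$ is standard normal, so by Jensen and the Gaussian moment growth $(\mathbb{E}|g|^q)^{1/q}\leq c\sqrt{q}$,
\[
\mathbb{E}\,\phi_A(z) \;\leq\; \Big(\textstyle\sum_{i\in A}\mathbb{E}|z_i|^q\Big)^{1/q}\big/F(A)^{1/q} \;\leq\; c\sqrt{q}\,|A|^{1/q}/F(A)^{1/q}.
\]
A standard sub-Gaussian maximal inequality then gives $\mathbb{E}[\Omega^\ast(z)] \leq \max_{A}\mathbb{E}\,\phi_A + \sqrt{2\log|\mathcal{D}_F|}\,\max_A \ell_A$, and since $\ell_A \leq |A|^{1/q}/F(A)^{1/q}$ both terms collapse into the claimed leading term $4\sqrt{q\log(2|\mathcal{D}_F|)}\,\max_{A\in\mathcal{D}_F}|A|^{1/q}/F(A)^{1/q}$ after absorbing numerical constants.

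Finally I would handle the deviation: viewing $\Omega^\ast = \max_{A\in\mathcal{D}_F}\phi_A$ as a single Lipschitz function of $z$ with constant $\max_{A\in\mathcal{D}_F}\ell_A$, I would apply the Gaussian concentration inequality for Lipschitz functions to get $\mathbb{P}\big(\Omega^\ast(z) \geq \mathbb{E}[\Omega^\ast(z)] + u\max_A\ell_A\big) \leq e^{-u^2/2}$, and combine this with the expectation bound of the previous step to conclude.

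The main obstacle is controlling the deviation coefficient uniformly in $A$ in the presence of the correlations encoded by $Q$. The constant $\ell_A$ is computed against the Euclidean metric, whereas Gaussian concentration for $z\sim\mathcal{N}(0,Q)$ naturally produces the variance proxy $\sup_{\Omega(w)\le1} w^\top Q w$ (equivalently $\sup_{\|u\|_p\le1} u^\top Q_{AA} u$ for each $A$, since the unit ball of $\Omega_p$ is the convex hull of the sets $D_A$ and $w\mapsto w^\top Q w$ is maximized at extreme points). The delicate point is to show that this proxy is governed by $\max_A \ell_A^2$ using the unit-diagonal normalization, rather than by a crude $\lambda_{\max}(Q)$ factor. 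I expect this variance-proxy estimate, and the matching of the explicit constant $4$, to be the technical heart of the argument; the reduction to $\mathcal{D}_F$ and the moment computation are routine given the earlier results.
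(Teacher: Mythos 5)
Your skeleton is exactly the paper's proof: (i) reduce $\Omega^\ast(z)$ to $\max_{A \in \mathcal{D}_F} \|z_A\|_q/F(A)^{1/q}$ via the core set; (ii) bound $\mathbb{E}\,\|z_A\|_q \leq (|A|\,\mathbb{E}|\varepsilon|^q)^{1/q} \leq 2\,|A|^{1/q}\sqrt{q}$ using only the unit diagonal, and combine with a union bound over $\mathcal{D}_F$ to get $\mathbb{E}[\Omega^\ast(z)] \leq 4\sqrt{q\log(2|\mathcal{D}_F|)}\max_{A\in\mathcal{D}_F}|A|^{1/q}/F(A)^{1/q}$; (iii) obtain the deviation term from Gaussian concentration of Lipschitz functions, the Euclidean Lipschitz constant being $\max_{A\in\mathcal{D}_F}|A|^{(1/q-1/2)_+}/F(A)^{1/q}$ because $\|z_A\|_q \leq |A|^{(1/q-1/2)_+}\|z_A\|_2$.

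The step you single out as ``the technical heart,'' however, is simply not in the paper: the paper applies the Lipschitz concentration inequality with this Euclidean constant as if $z$ were standard, i.e.\ it silently ignores the correlations in $Q$; that step is rigorous for $Q=I$ but is never justified for general unit-diagonal $Q$. Moreover, your suspicion that this point is delicate is well founded, and the estimate you hope for is in fact unavailable. With only the normalization $Q_{ii}=1$ one has $|Q_{ij}|\leq 1$, so for $v$ supported on $A$ with $\|v\|_p \leq F(A)^{-1/q}$ the Borell--TIS variance proxy satisfies $v^\top Q v \leq \|v\|_1^2 \leq |A|^{2/q}/F(A)^{2/q}$, and this is attained under perfect correlation; it strictly exceeds your $\ell_A^2 = |A|^{2(1/q-1/2)_+}/F(A)^{2/q}$ whenever $|A|\geq 2$. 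Concretely, take $d=2$, $z_1=z_2=g$ standard normal (so $Q$ is the all-ones correlation matrix), and $F(A)=\min(|A|,1)$: then $\mathcal{D}_F=\{V\}$, $\Omega^\ast(z)=2^{1/q}|g|$, and for $q=2$ the claimed tail bound with deviation coefficient $|V|^{(1/q-1/2)_+}/F(V)^{1/q}=1$ fails for large $u$, since the left-hand side decays like $e^{-u^2/4}$ rather than $e^{-u^2/2}$. The deviation coefficient that survives general unit-diagonal $Q$ is $\max_{A\in\mathcal{D}_F}|A|^{1/q}/F(A)^{1/q}$ (the same quantity as in the expectation term); the sharper coefficient in the statement requires $Q=I$ or an extra assumption on the correlations. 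In short: your approach is the paper's approach, and the obstacle you flagged and could not close is a genuine gap in the paper's own proof, not a hidden lemma you failed to find.
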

%%%%%%%%%%%%%%%%%%%%%%%%%%%%%%%%%%%%%%%%%%%%%%%%%%%%%%%%%   
\section{Experiments}

\label{sec:exp}

\subsection{Setting}

To illustrate the results presented in this paper we consider the problem of estimating the support of a parameter vector $w \in \RR^d$, when its support is assumed either 
\BIT
\item[(i)] to form an \emph{interval} in $[\![1,d\,]\!]$ or 
\item[(ii)] to form a \emph{rectangle} $[\![k_{\min}, k_{\max}]\!] \times [\![k'_{\min}, k'_{\max}]\!] \subset [\![1,d_1]\!] \times [\![1,d_2]\!]$, with $d=d_1d_2$.
\EIT

These two settings were considered in \citet{jenatton2011structured}. These authors showed  that, for both types of supports, it was possible to construct an $\ell_1/\ell_2$-norm with overlap based on a well-chosen collection of overlapping groups, so that the obtained estimators almost surely have a support of the correct form. Specifically, it was shown in \citet{jenatton2011structured} that norms of the form $w \mapsto \sum_{B \in \mathcal{G}} \|w_B\|_2$ induce sparsity patterns that are exactly intervals of $V=\{1, \ldots,p\}$ if 
$$\mathcal{G}=\big  \{[1,k] \mid 1\leq k \leq p \big \} \cup \big  \{ [k,p] \mid 1\leq k \leq p \big \},$$ and induce rectangular supports on $V=V_1 \times V_2$ with $V_1:=\{1, \ldots,p_1\}$ and $V_2:=\{1, \ldots,p_2\}$ if 
\begin{eqnarray*} \mathcal{G}&=&\big  \{ [\![1, k]\!] \times V_2 \mid 1\leq k \leq p_1\big \} \cup \big  \{ [\![k, p_1]\!]\times V_2 \mid 1\leq k \leq p_1 \big  \}\\ 
& & \cup  \big \{ V_1 \times [\![1, k]\!] \mid 1\leq k \leq p_2 \big \}  \cup  \big  \{ V_1 \times [\![k,p_2]\!] \} \mid 1\leq k \leq p_2 \big \}.
\end{eqnarray*}

These sets of groups are illustrated on Figure~\ref{fig:groups}, and, for the first case, the set $\G$ has already discussed in Example~\ref{ex:submodular_range} to define a modified range function which is submodular.

\begin{figure}
\includegraphics[width=.45\tw]{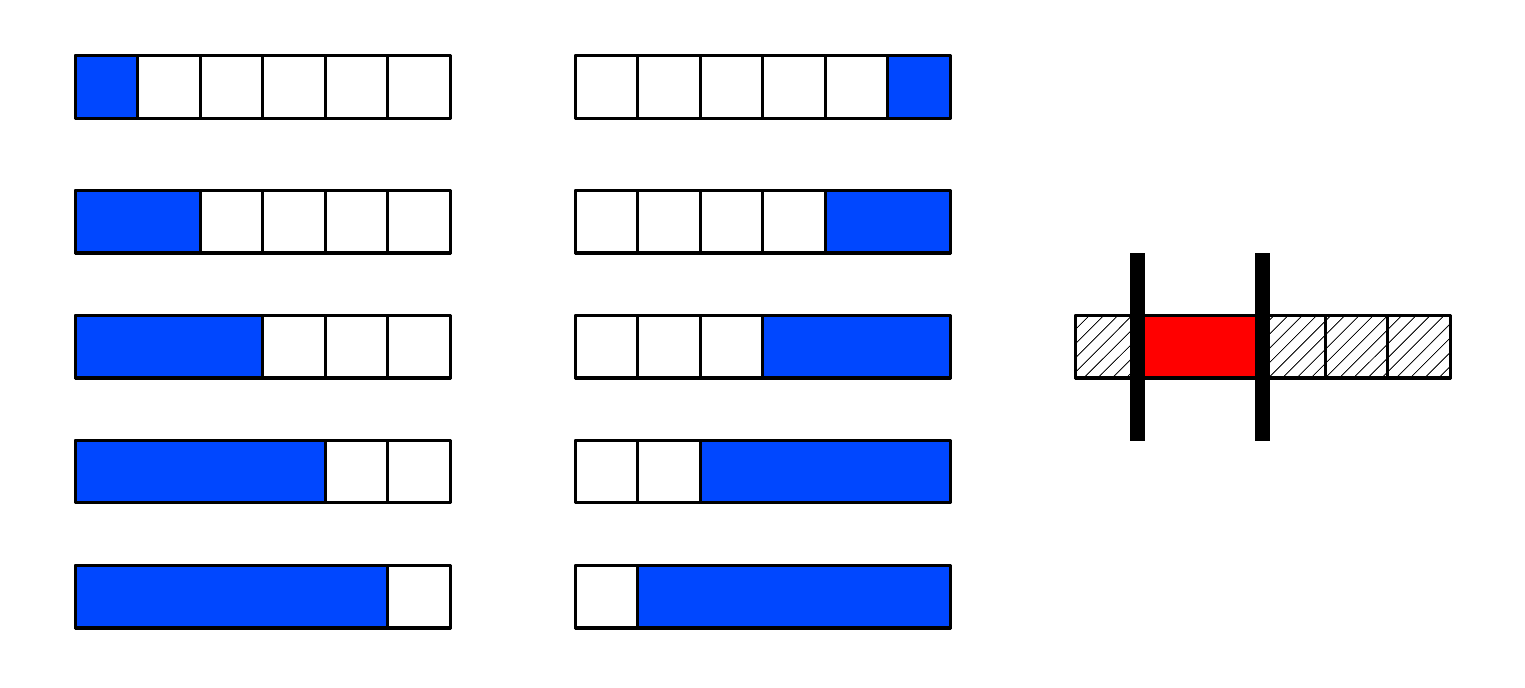} \hfill
\includegraphics[width=.45\tw]{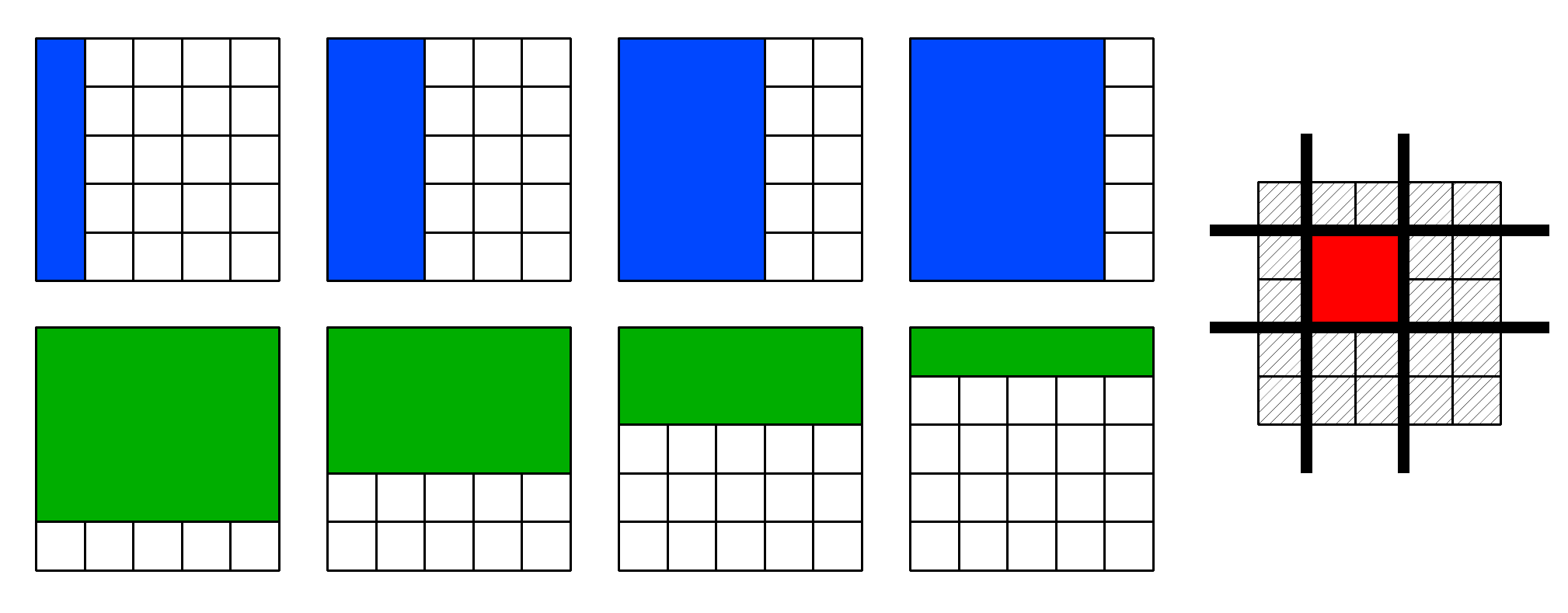}
\caption{\small Set $\G$ of overlapping groups defining the norm proposed by \citet{jenatton2011structured} (set in blue or green and their complements) and an example of  corresponding induced sparsity patterns (in red), respectively for interval patterns in 1D (left) and for rectangular patterns in 2D (right).}
\label{fig:groups}
\end{figure}

  \begin{figure}
\begin{center}
\includegraphics[width=5cm]{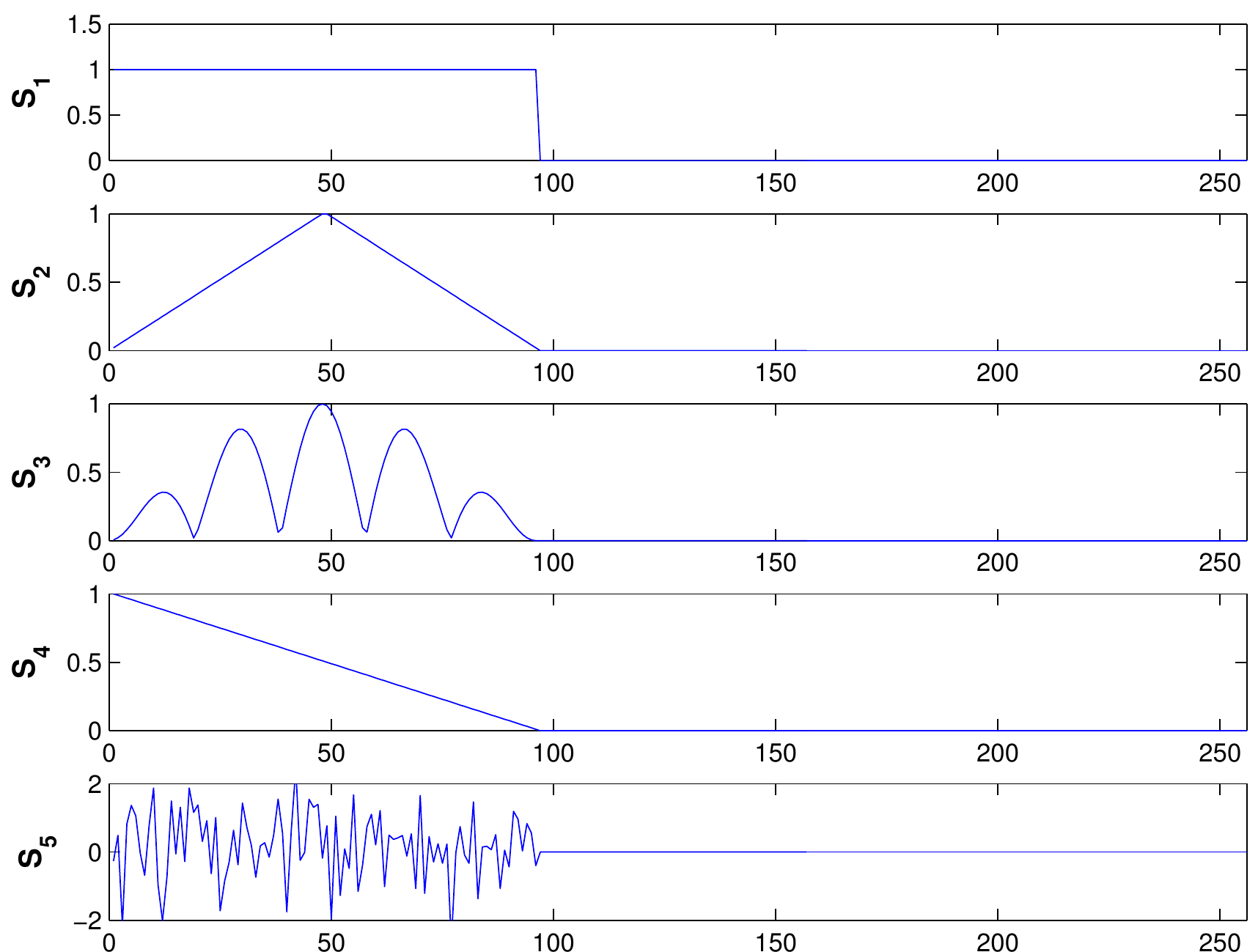}
\end{center}
\caption{\small Examples of the shape of the signals used to define the amplitude of the coefficients of $w$ on the support. Each plot represents the value of $w_i$ as a function of $i$. The first ($w$ constant on the support), third ($w_i=g(c\,i)$ with $g: x\mapsto |\sin(x) \sin(5x)|$) and last signal ($w_i \overset{\text{\tiny{i.i.d.}}}{\sim} \mathcal{N}(0,1)$) are the ones used in reported results.}
\label{fig:signals}
\end{figure}

Moreover, the authors showed that with a weighting scheme leading to a norm of the form $w \mapsto \sum_{B \in \mathcal{G}} \|w_B \circ d^B\|$, where $\circ$ denotes the Hadamard product and $d^B \in \RR^d_+$ is a certain vector of weights designed specifically for these case\footnote{We refer the reader to the paper for the details.} it is possible to obtain compelling empirical results in terms of support recovery, especially in the 1D case.

\textbf{Interval supports.} From the point of view of our work, that is, approaching the problem in terms of combinatorial functions, for supports constrained to be intervals, it is natural to consider the range function as a possible form of penalty: $F_0(A):={\rm range}(A)=i_{\max}(A)-i_{\min}(A)+1$. Indeed the range function assigns the same penalty to sets with the same range, regardless of whether these sets are connected or have ``holes"; this clearly favors intervals since they are exactly the sets with the largest support for a given value of the penalty.  Unfortunately, as discussed in the Example~\ref{ex:range} of Section~\ref{sec:lce}, the combinatorial lower envelope of the range function is $A \mapsto |A|$, the cardinality function, which implies that $\Omega_p^{F_0}$ is just the $\ell_1$-norm: in this case, the structure implicitly encoded in $F_0$ is lost through the convex relaxation.

However, as mentioned by \cite{bach2010structured} and discussed in Example~\ref{ex:submodular_range} the function $F_r$ defined by $F_r(A)=d-1+{\rm range}(A)$ for $A \neq \varnothing$ and $F(\varnothing)=0$ is submodular, which means that $\Omega^{F_r}_p$ is a tight relaxation and that regularizing with it leads to tractable convex optimization problems.

\textbf{Rectangular supports.} For the case of rectangles on the grid, a good candidate is the function $F_2$ with $F_2(A)={F_r}(\Pi_1(A))+{F_r}(\Pi_2(A))$
with $\Pi_i(A)$ the projection of the set $A$ along the $i$th axis of the grid.
 
This makes of $\Omega_p^{F_r}$ and $\Omega_p^{F_2}$ two good candidates to estimate a vector $w$ whose support matches respectively the two described a priori.

\subsection{Methodology}

We consider a simple regression setting in which $w \in \RR^d$ is a vector such that $\supp(w)$ is either an interval on $[1,d]$ or a rectangle on a fixed $2D$ grid. We draw the design matrix $X \in \RR^{n \times d}$ and a noise vector $\eps \in \RR^n$ both with i.i.d. standard Gaussian entries and compute $y=X w+\epsilon$. We then solve problem~(\ref{eq:objective}), with $\Omega$ chosen in turn to be the $\ell_1$-norm (Lasso), the elastic net, the norms $\Omega^F_p$ for $p \in\{2,\infty\}$ and $F$ chosen to be $F_r$ or $F_2$ in $1D$ and $2D$ respectively; we consider also the overlapping $\ell_1/\ell_2$-norm proposed by \citet{jenatton2011structured} and the weighted overlapping $\ell_1/\ell_2$-norm proposed by the same authors, i.e., $\Omega(w)=\sum_{B \in \mathcal{G}} \|w_B \circ d^B\|_2$ with the same notations as before\footnote{Note that we do not need to compare with an $\ell_infty$ counterpart of the unweighted norm considered in \citet{jenatton2011structured} since for $p=\infty$ the unweighted $\ell_1/\ell_\infty$ norm defined with the same collection $\G$ is exactly the norm $\Omega_\infty^{F_r}$: this follows from the form of $F_r$ as defined in Example~\ref{ex:submodular_range} and the preceding discussion.}.

We assess the estimators obtained through the different regularizers both in terms of support recovery and in terms of mean-squared error in the following way: assuming that held out data permits to choose an optimal point on the regularization path obtained with each norm,   we determine along each such path, the solution which either has a support with minimal Hamming distance to the true support or the solution which as the best $\ell_2$ distance, and we report the corresponding distances as a function the sample size on Figures~\ref{fig:results_1D} and \ref{fig:results_2D} respectively for the 1D and the 2D case.

Finally, we assess the incidence of the fluctuation in amplitude of the coefficients in the vector $w$ generating the data: we consider different cases among which:
\BIT
\item[(i)] the case where $w$ has a constant value on the support,
\item[(ii)] the case where $w_i$ varies as a modulated cosine, with $w_i=g(c i)$ for $c$ a constant scaling and $g: x \mapsto |\cos(x)\cos(5x)|$ 
\item[(iii)] the case where $w_i$ is drawn i.i.d. from a standard normal distribution.
\EIT
These cases (and two others for which we do not report results) are illustrated on Figure~\ref{fig:signals}.

\subsection{Results}

Results reported for the Hamming distances in the left columns of Figures~\ref{fig:results_1D} and \ref{fig:results_2D} show that the norms $\Omega_2^{F_r}$ and $\Omega_2^{F_2}$ perform quite well for support recovery overall and tend to outperform significantly their $\ell_\infty$ counterpart in most cases. 
In 1D, several norms achieve reasonably small Hamming distance, including the $\ell_1$-norm, the norm $\Omega_2^{F_r}$ and  the weighted overlapping $\ell_1/\ell_2$-norm although the latter clearly dominates for small values of $n$.

In 2D,  $\Omega_2^{F_2}$ leads clearly to smaller Hamming distances than other norms for the larger values of $n$, while is outperformed by the $\ell_1$-norm for small sample sizes. It should be noted that neither $\Omega_\infty^{F_2}$ nor the weighted overlapping $\ell_1/\ell_2$-norm that performed so well in 1D achieve good results.

The performance of the $\ell_2$ relaxation tends to be comparatively better when the vector of parameter $w$ has entries that vary a lot, especially when compared to the $\ell_\infty$ relaxation. Indeed, the choice of the value of $p$ for the relaxation can be interpreted as 
encoding a prior on the joint distribution of the amplitudes of the $w_i$: as discussed before, and as illustrated in~\cite{bach2010structured} the unit balls for the $\ell_\infty$ relaxations display additional ``edges and corners" that lead to estimates with clustered values of $|w_i|$, corresponding to an priori that many entries in $w$ have identical amplitudes. More generally, large values of $p$ correspond to the prior that the amplitude varies little while their vary more significantly for small $p$.

The effect of this other type of a priori encoded in the regularization is visible when considering the performance in terms of $\ell_2$ error.
Overall, both in 1D and 2D all methods perform similarly in $\ell_2$ error, except that when $w$ is constant on the support, the $\ell_\infty$ relaxations $\Omega_\infty^{F_r}$ and $\Omega_\infty^{F_2}$ perform significantly better, and this is the case most likely because the additional ``corners" of these norms induce some pooling of the estimates of the value of the $w_i$, which improves their estimation. By contrast it can be noted that when $w$ is far from constant the $\ell_\infty$ relaxations tend to have slightly larger least-square errors, while, on contrary, the $\ell_1$-regularisation tends to be among the better performing methods.

\begin{figure}
\begin{tabular}{cc}
\includegraphics[width=0.45\tw]{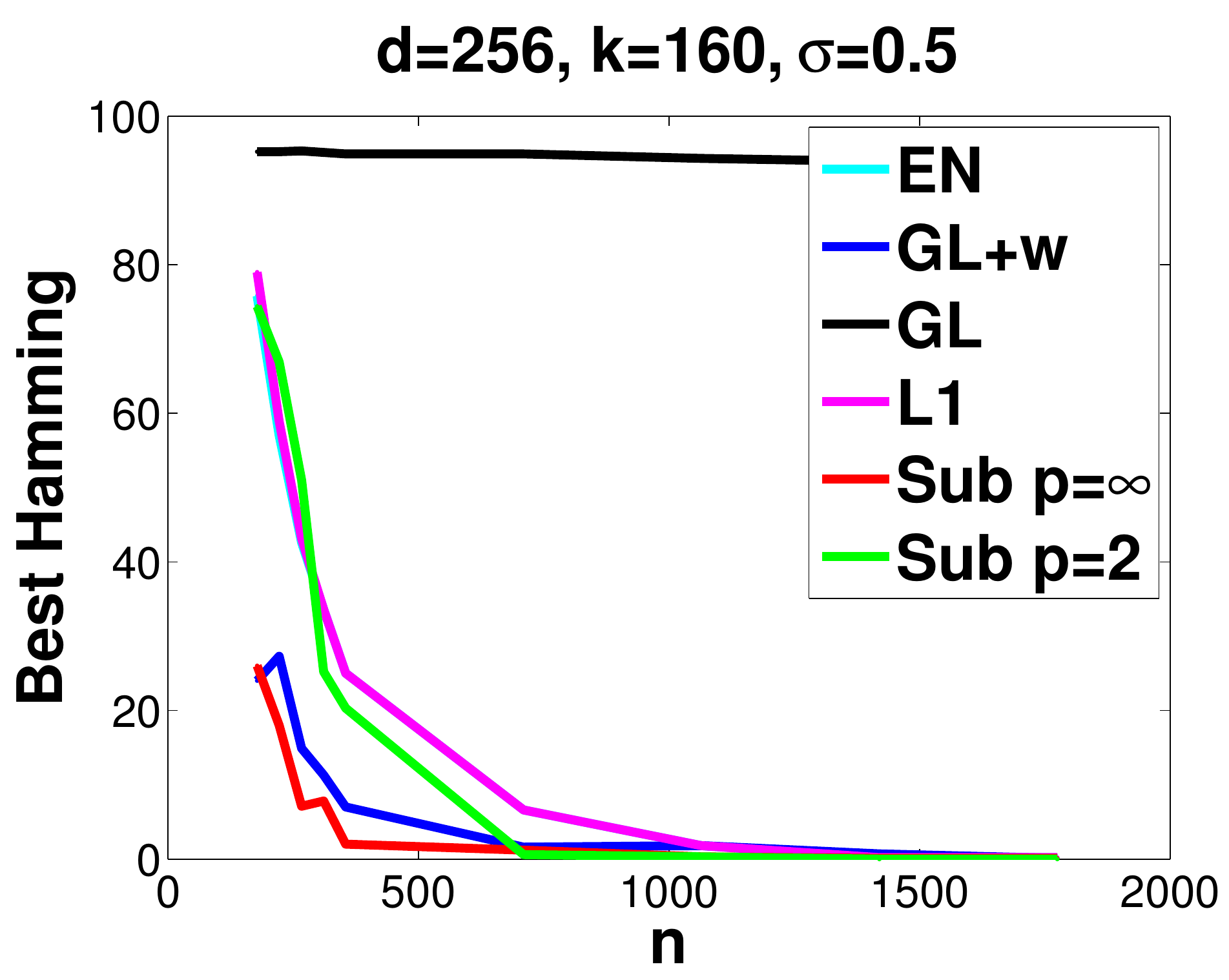} &
\includegraphics[width=0.45\tw]{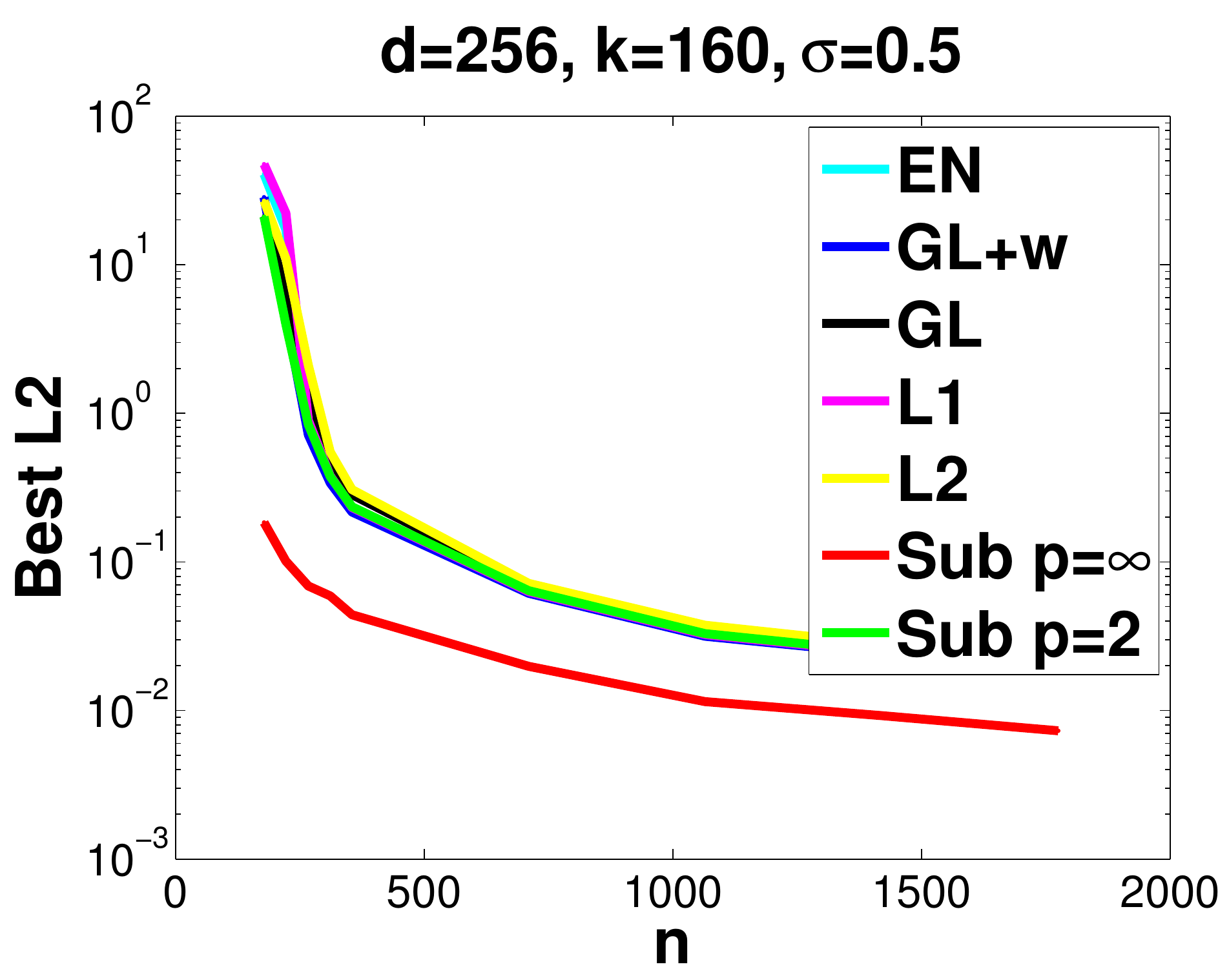} \\
\includegraphics[width=0.45\tw]{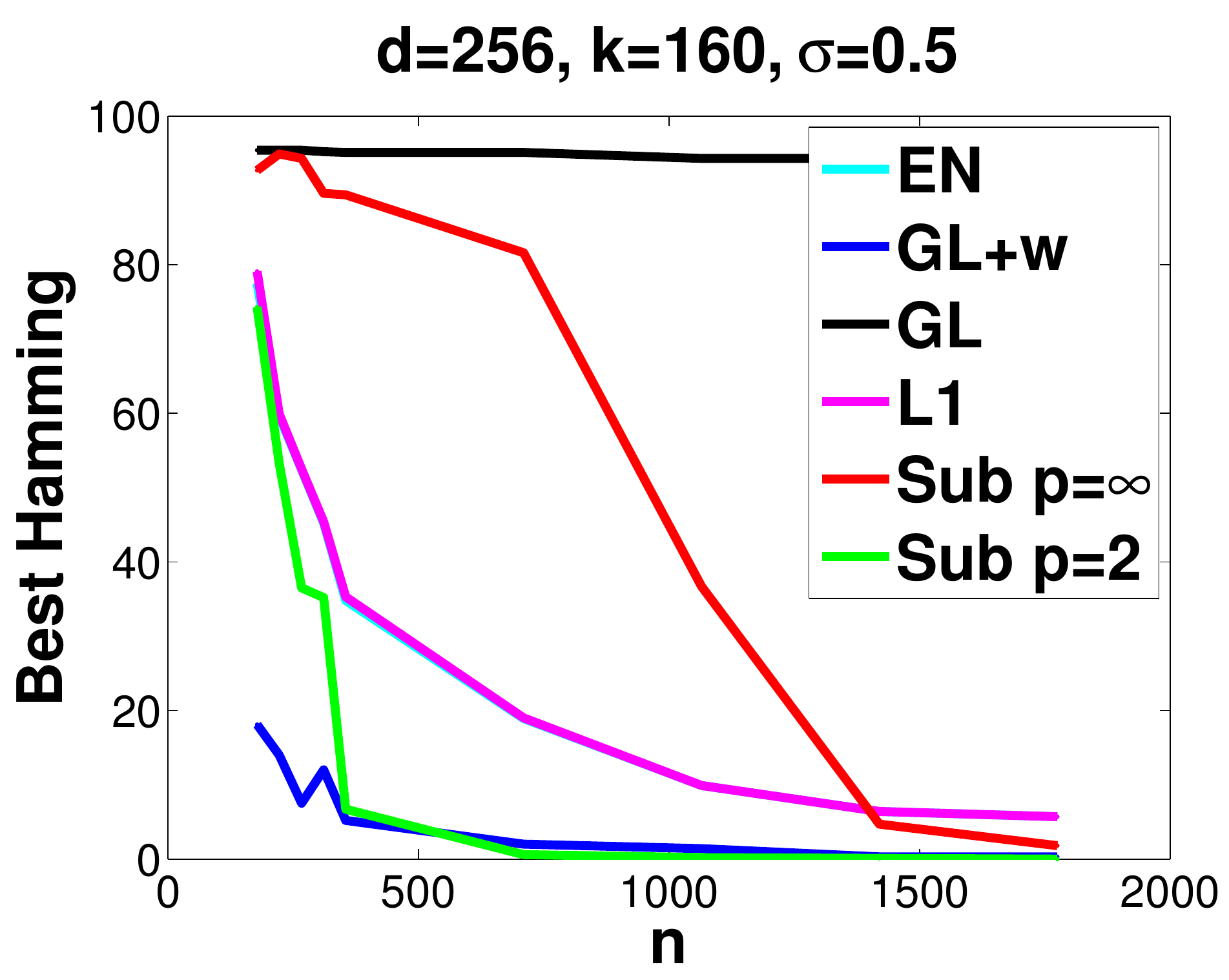} &
\includegraphics[width=0.45\tw]{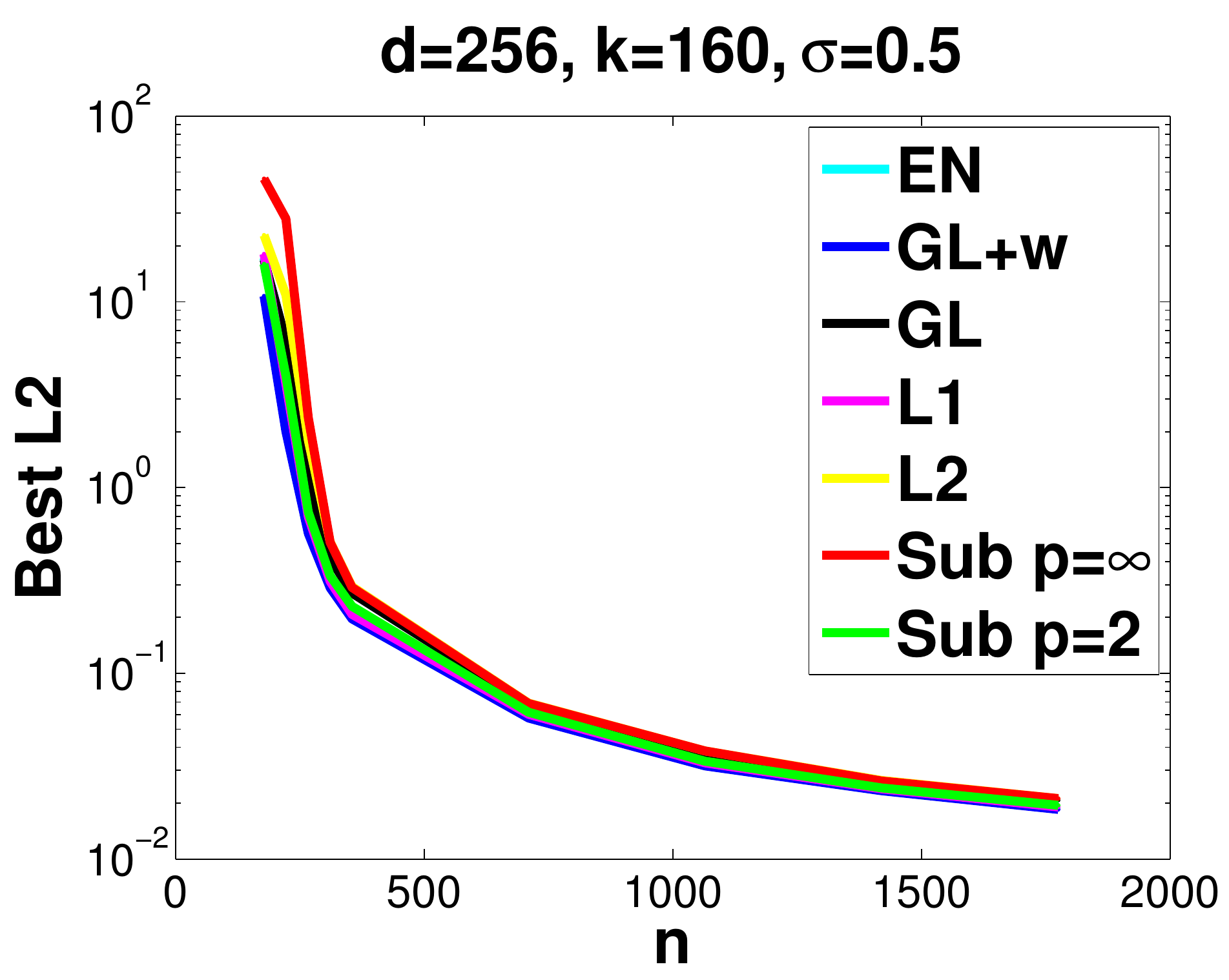} \\
\end{tabular}
\caption{Best Hamming distance (left column) and best least square error (right column) to the true parameter vector $w^*$, among all vectors along the regularization path of a least square regression regularized with a given norm, for different patterns of values of $w^*$.
The different regularizers compared include the Lasso (L1), Ridge (L2), the elastic net (EN), the unweighted (GL) and weighted (GL+w) $\ell_1/\ell_2$ regularizations proposed by \citet{jenatton2011structured}, the norms $\Omega^F_2$ (Sub $p=2$) and $\Omega^F_\infty$ (Sub $p=\infty$) for a specified function $F$. (first row) Constant signal supported on an interval, with an a priori encoded by the combinatorial function $F: A \mapsto d-1+{\rm range}(A)$.
(second row) Same setting with a signal $w^*$ supported by an interval consisting of coefficients $w_i^*$ drawn from a standard Gaussian distribution.  
 In each case, the dimension is $d=256$, the size of the true support is $k=160$ , the noise level is $\sigma=0.5$ and signal amplitude $\|w\|_\infty=1$.}
\label{fig:results_1D}
\end{figure}

\begin{figure}
\begin{tabular}{cc}
\includegraphics[width=0.45\tw]{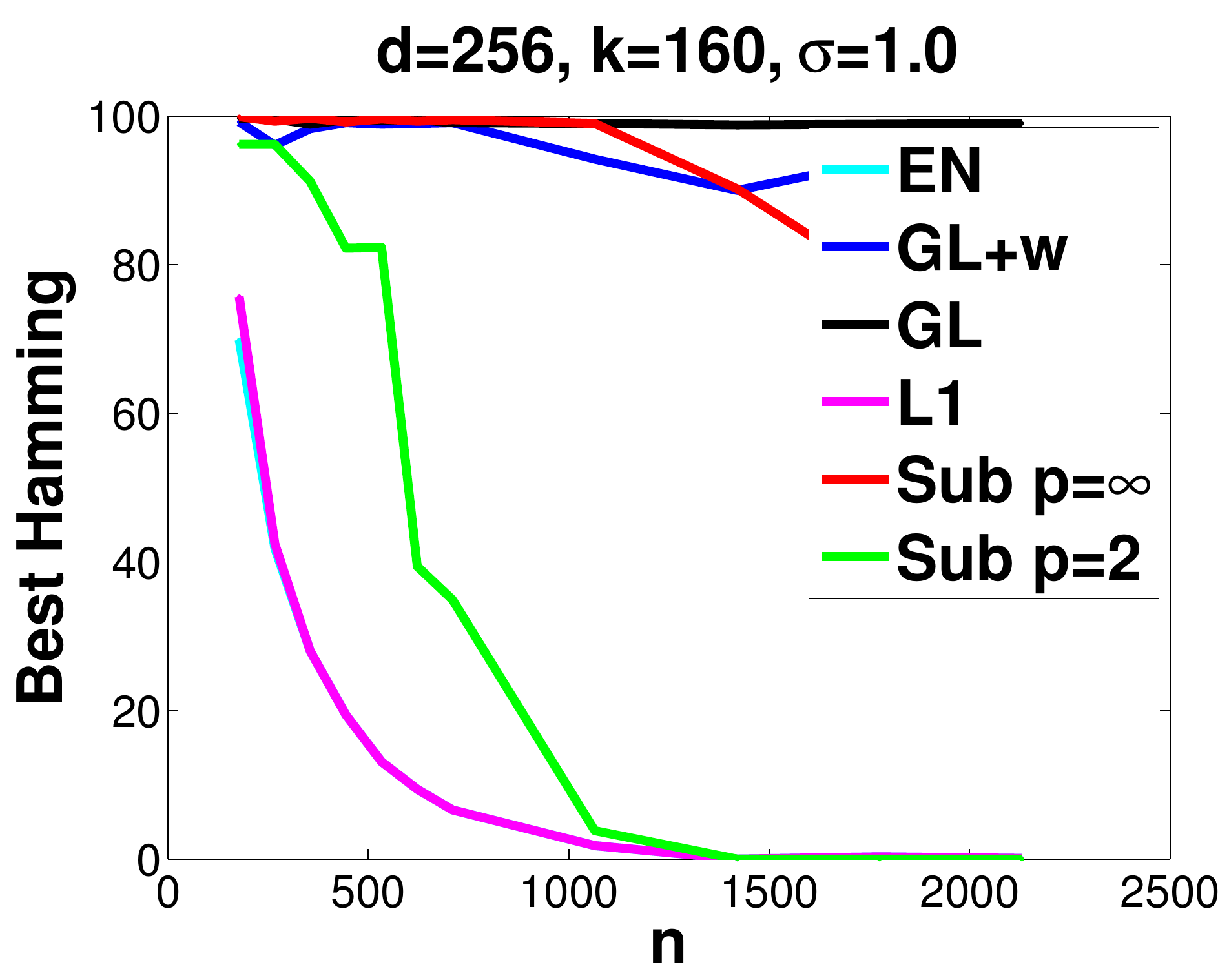} &
\includegraphics[width=0.45\tw]{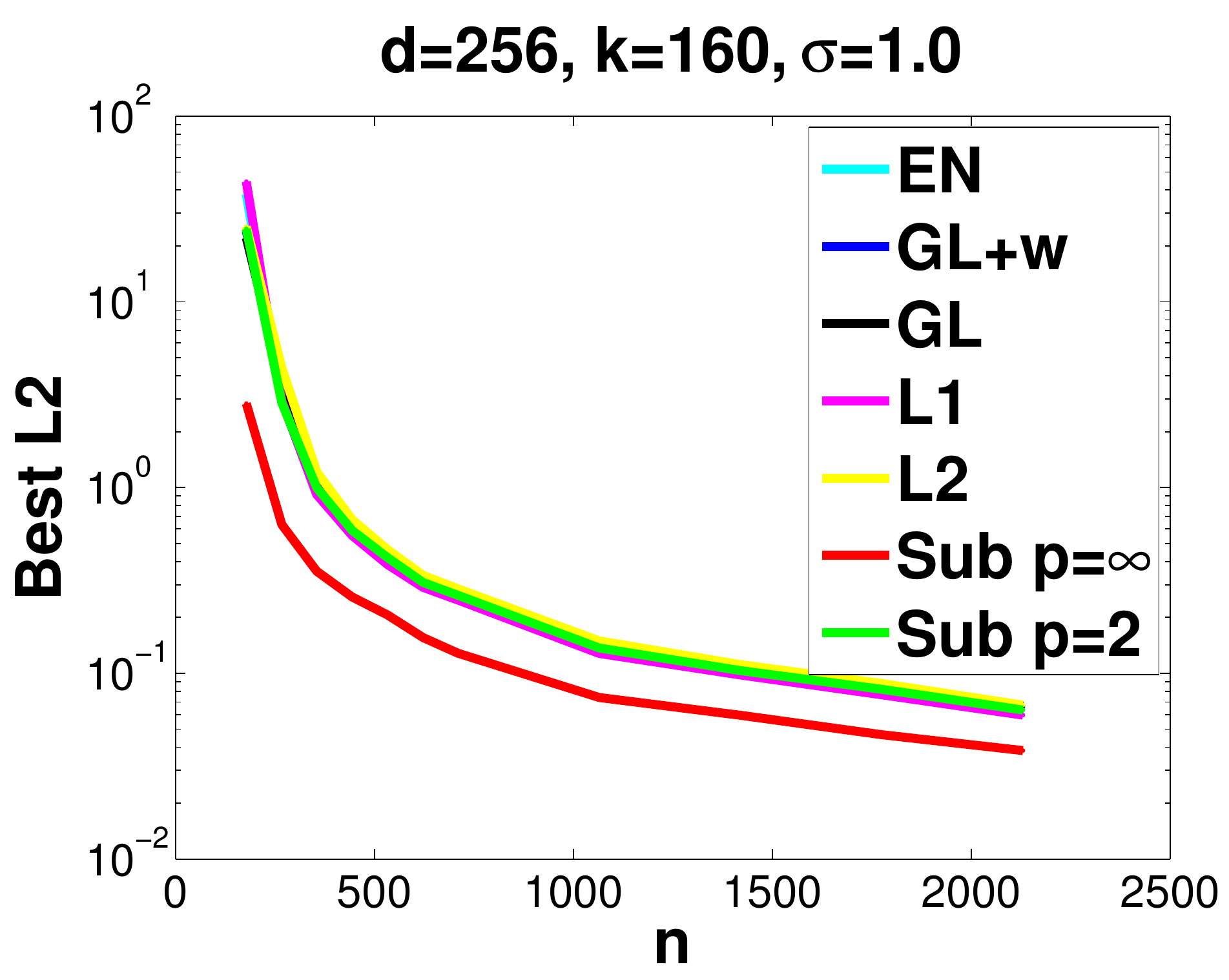}\\
\includegraphics[width=0.45\tw]{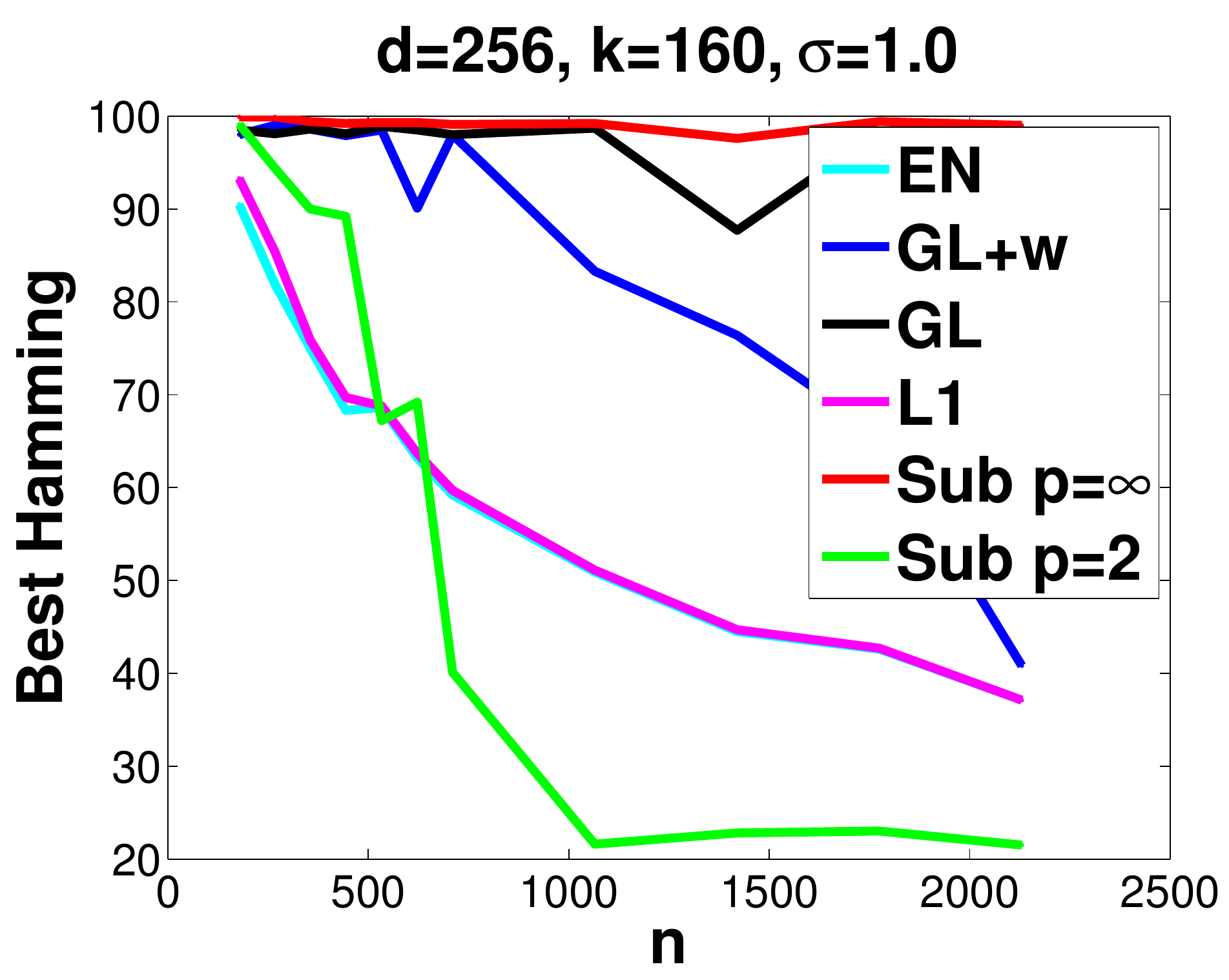} &
\includegraphics[width=0.45\tw]{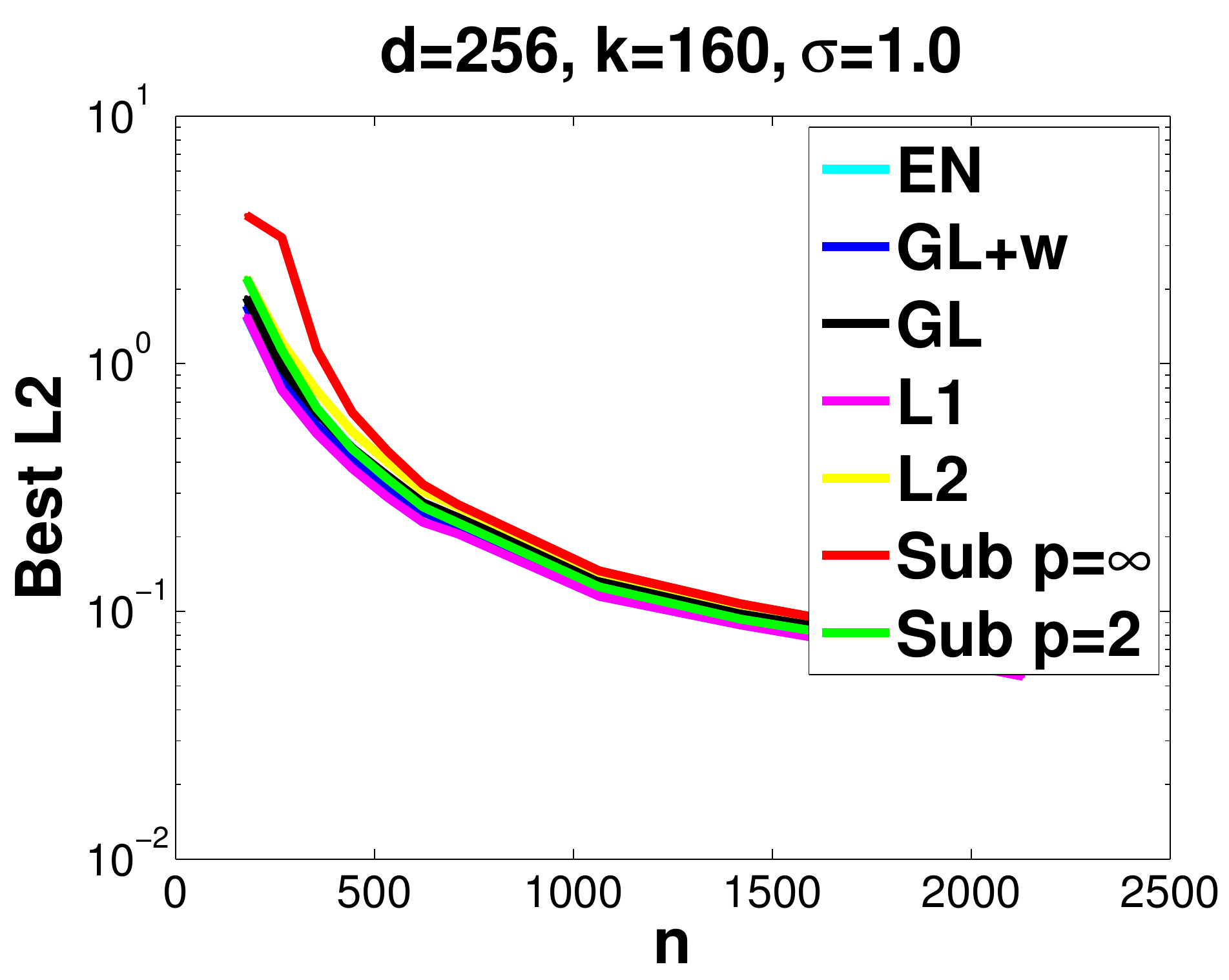} \\
\includegraphics[width=0.45\tw]{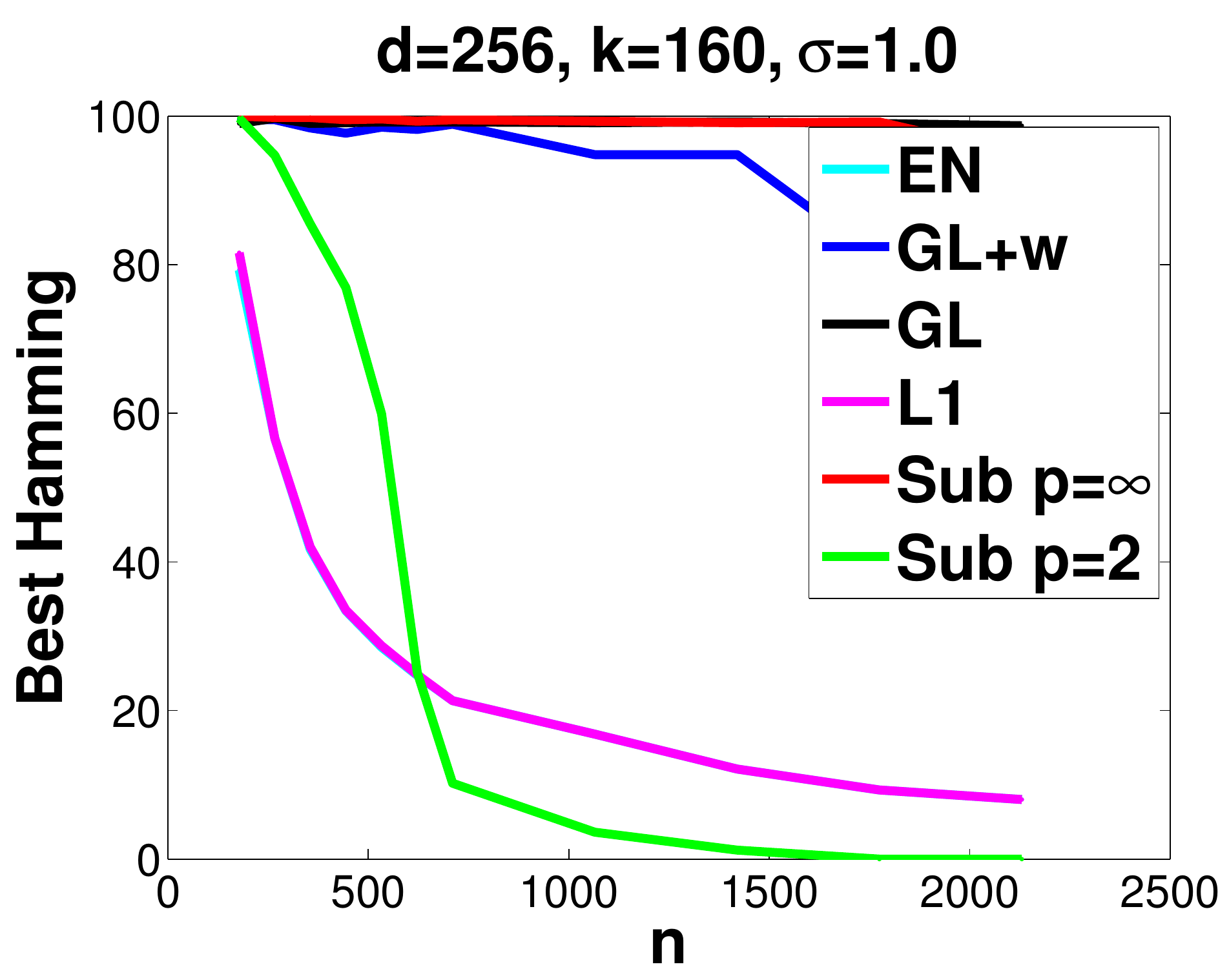} &
\includegraphics[width=0.45\tw]{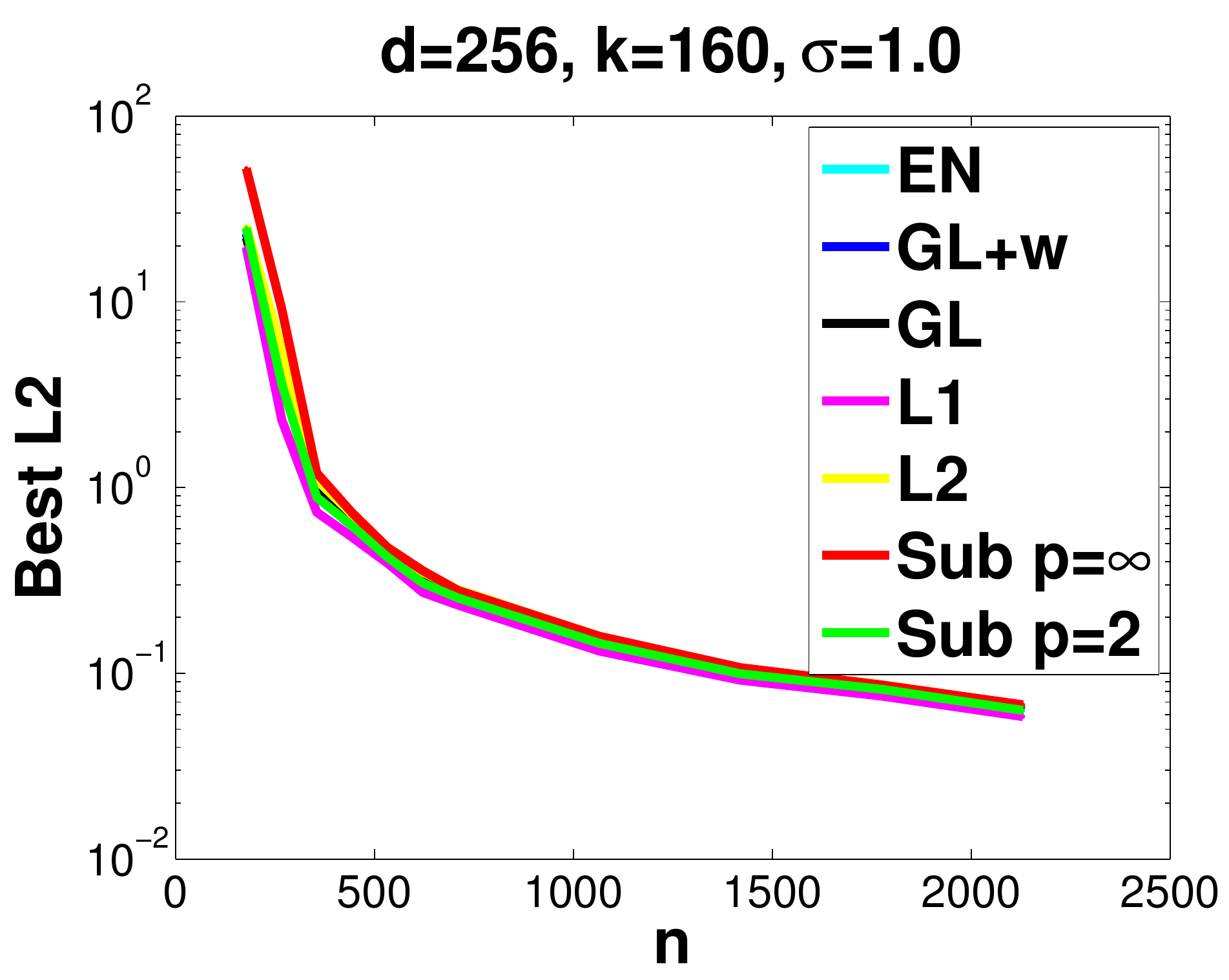} \\
\end{tabular}
\caption{\small Best Hamming distance (left column) and best least square error (right column) to the true parameter vector $w^*$, among all vectors along the regularization path of a least square regression regularized with a given norm, for different patterns of values of $w^*$. The regularizations compared include the Lasso (L1), Ridge (L2), the elastic net (EN), the unweighted (GL) and weighted (GL+w) $\ell_1/\ell_2$ regularizations proposed by \citet{jenatton2011structured}, the norms $\Omega^F_2$ (Sub $p=2$) and $\Omega^F_\infty$ (Sub $p=\infty$) for a specified function $F$. 
Parameter vectors $w^*$ considered here have coefficients that are supported by a rectangle
on a grid with size $d_1\times d_2$ with $d=d_1d_2$.
(first row) Constant signal supported on a rectangle with an a priori encoded by the combinatorial function $F: A \mapsto d_1+d_2-4+{\rm range}(\Pi_1(A))+{\rm range}(\Pi_2(A))$.
(second row) Same setting with coefficients of $w$ on the support given as $w^*_{i_1i_2}=g(c\, i_1)g(c\, i_2)$ for $c$ a positive constant and  $g: x \mapsto |\cos(x)\cos(5x)|$.
(third row) Same setting with coefficients $w^*_{i_1i_2}$ drawn from a standard Gaussian distribution.  
 In each case, the dimension is $d=256$, the size of the true support is $k=160$ , the noise level is $\sigma=1$ and signal amplitude $\|w\|_\infty=1$.}   
\label{fig:results_2D}
\end{figure}

\section{Conclusion}

We proposed a family of convex norms defined as relaxations of penalizations that combine a combinatorial set-function with an $\ell_p$-norm. Our formulation allows to recover in a principled way classical sparsity inducing regularizations such as $\ell_1$, $\ell_1/\ell_p$-norms or $\ell_p/\ell_1$-norms. In addition, it establishes the the latent group Lasso is the tightest relaxation of block-coding penalties.

There are several directions for future research. First, it would be of interest to determine for which combinatorial functions beyond submodular ones, efficient algorithms and consistency results can be established. Then a sharper analysis of the relative performance of the estimators using different levels of a priori would be needed to answer question such as: When is using a structured a priori likely to yield better estimators? When could it degrade the performance? What is the relation to the performance of an oracle given a specified structured a priori?

\subsection*{Acknowledgements}
The authors acknowledge funding from the European Research Council grant SIERRA, project 239993, and would like to thank Rodolphe Jenatton and Julien Mairal for stimulating discussions.

\bibliographystyle{plainnat}
\bibliography{submodlp}

\appendix
\section{Form of primal norm}
\label{sec:form_primal}
 We provide here a proof of lemma~\ref{lem:duality} which we first recall:
\begin{lemmann}[\ref{lem:duality}] $\Omega_p$ and $\Omega_p^*$ are dual to each other.
\end{lemmann}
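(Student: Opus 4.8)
The statement to prove is that $\Omega_p$ and $\Omega_p^*$ are dual to each other, where $\Omega_p$ is defined through the infimal-convolution formula in Eq.~(\ref{eq:lgl}) and $\Omega_p^*$ through its explicit maximum over subsets.

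The plan is to recognize $\Omega_p$ as an infimal convolution of the auxiliary norms $v^A \mapsto F(A)^{1/q}\|v^A\|_p$ restricted to coordinates in $A$, and to exploit the classical fact that the dual of an infimal convolution is the pointwise maximum (supremum) of the duals. First I would rewrite the defining formula
$$\Omega_p(w)=\min_{\sum_A v^A = w,\; \supp(v^A)\subset A} \sum_{A\subset V} F(A)^{1/q}\|v^A\|_p$$
as the infimal convolution $\Omega_p = \mathop{\square}_{A\subset V} g_A$, where $g_A(u):=F(A)^{1/q}\|u\|_p$ when $\supp(u)\subset A$ and $g_A(u):=+\infty$ otherwise. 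Each $g_A$ is a lower semicontinuous positively homogeneous convex gauge (a seminorm, owing to the support constraint), so its Fenchel conjugate is the indicator of a convex set, namely $g_A^*(s)=\iota_{\{\|s_A\|_q \leq F(A)^{1/q}\}}(s)$, since the constraint $\supp(u)\subset A$ makes the coordinates outside $A$ free to vanish and the conjugate of $F(A)^{1/q}\|\cdot\|_p$ restricted to $A$ is the indicator of the corresponding dual ball.

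Next I would invoke the standard duality identity $\big(\mathop{\square}_A g_A\big)^* = \sum_A$ realized as the supremum, i.e.\ $\big(\mathop{\square}_A g_A\big)^* = \sup_A g_A^*$ (infimal convolution dualizes to pointwise supremum of conjugates, or equivalently, the conjugate of a sum is the infimal convolution of conjugates applied in reverse). Concretely, $\Omega_p^* = \big(\mathop{\square}_A g_A\big)^* = \max_{A\subset V} g_A^*$, and since each $g_A^*$ is the indicator of the set $\{s:\|s_A\|_q\leq F(A)^{1/q}\}$, the pointwise maximum of these indicators is the indicator of the intersection $\bigcap_A \{\|s_A\|_q \leq F(A)^{1/q}\}$. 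This intersection is exactly the unit ball of the gauge $s \mapsto \max_{A\neq\varnothing} \|s_A\|_q / F(A)^{1/q}$, which is the definition of $\Omega_p^*$ given in the text. One then confirms that $\Omega_p$, being a finite infimal convolution of lower semicontinuous seminorms whose common dual ball has nonempty interior (guaranteed by the assumptions on $\D_0$ and positivity of $F$), is itself a genuine norm, so that taking the conjugate twice returns $\Omega_p$ and the duality is symmetric.

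The main obstacle I anticipate is the careful justification of the conjugation rule for the infimal convolution, specifically ensuring that the minimum in Eq.~(\ref{eq:lgl}) is attained and that no closure/biconjugation gap arises. This requires checking that each $g_A$ is proper, convex and lower semicontinuous, and that the qualification condition (e.g.\ relative-interior or coercivity, here supplied by $\cup_{A\in\D_0}A=V$ and $F>0$ on nonempty sets) holds so the infimal convolution is exact and its conjugate is the exact supremum of conjugates rather than its closure. Handling the extended-real-valued nature of $g_A$ (the $+\infty$ from the support constraint) cleanly, so that the conjugate computations go through coordinatewise, is the technical heart of the argument; everything else is a direct appeal to the Fenchel conjugate of an $\ell_p$-norm and the elementary identification of an intersection of balls with the unit ball of $\Omega_p^*$.
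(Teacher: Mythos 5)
Your proposal is correct and takes essentially the same route as the paper's own proof: the paper's appendix likewise views $\Omega_p$ as the infimal convolution of the gauges $\omega^A_p$ (your $g_A$), identifies each as the support function of $K^A_p=\{s \mid \|s_A\|_q^q \leq F(A)\}$ so that the dual unit ball is $\bigcap_A K^A_p$, and invokes the Fenchel conjugacy between sums and infimal convolutions (Rockafellar, Thm.~16.4). The only difference is direction---the paper conjugates the sum of indicators $\iota_{K^A_p}$ to recover the formula of Eq.~(\ref{eq:lgl}), whereas you conjugate the infimal convolution and then appeal to biconjugation, with the attainment/coercivity argument you flag playing the role of the paper's exactness qualification---so the two arguments are equivalent in substance.
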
 
\begin{proof}
Let $\omega^A_p$ be the function\footnote{Or gauge function to be more precise.} defined by $\omega^A_p(w)=F(A)^{1/q}\, \|w_A\|_p \, \iota_{\{v | Supp(v) \subset A\}}(w)$ with $\iota_B$ the indicator function taking the value $0$ on $B$ and $\infty$ on $B^c$. Let $K^A_p$ be the set $K^A_p=\{s \mid \|s_A\|_q^q \leq F(A)\}$. 
By construction, $\omega^A_p$ is the \emph{support function} of $K^A_p$ \citep[see][sec.13]{Rockafellar1970Convex}, i.e. $\omega^A_p(w)=\max_{s \in K^A_p} w^\top s$.
By construction we have $\{s \mid \Omega_p^*(s) \leq 1\}=\cap_{A \subset V} K^A_p$. But this implies that $\iota_{\{s| \Omega_p^*(s) \leq 1\}}=\sum_{A \subset V} \iota_{K^A_p}$. Finally, by definition of Fenchel-Legendre duality,
$$\Omega_p(w)=\max_{w \in \RR^d} w^\top s - \sum_{A \subset V} \iota_{K^A_p}(s),$$ or in words $\Omega_p$ is the Fenchel-Legendre dual to the sum of the indicator functions $\iota_{K^A_p}$. But since the Fenchel-Legendre dual of a sum of functions is the \emph{infimal convolution} of the duals of these functions \citep[see][Thm. 16.4 and Corr. 16.4.1, pp. 145-146]{Rockafellar1970Convex}, and since by definition of a support function $\big (\iota_{K^A_p} \big )^*=\omega^A_p$, then $\Omega_p$ is the \emph{infimal convolution}  of the functions $\omega^A_p$, i.e.
$$\Omega_p(w)=\inf_{(v^A \in \RR^d)_{A \subset V}} \sum_{A \subset V} \omega^A_p(\v^A) \quad \st \quad w=\sum_{A \subset V} v^A,$$
which is equivalent to formulation (\ref{eq:lgl}).
See \citet{obozinski2011group} for a more elementary proof of this result.
\end{proof} 
 
\section{Example of the Exclusive Lasso}
\label{sec:exclusive_plus}

We showed in Section~\ref{sec:exclusive} that the $\ell_p$ exclusive Lasso norm, also called  $\ell_p/\ell_1$-norm, defined by the mapping $w \mapsto  \Big ( \sum_{G \in \G} \|w_{G}\|_1^p  \Big )^{1/p}$, for some partition $\G$, is a norm $\Omega_p^F$ providing the $\ell_p$ tightest convex p.h.\ relaxation in the sense defined in this paper of a certain combinatorial function $F$. A computation of the lower combinatorial envelope of that function $F$ yields the function $\Fl: A \mapsto \max_{G \in \G} |A \cap G|$.
 
This last function is also a natural combinatorial function to consider and by the properties of a LCE it has the same convex relaxation.  It should be noted that it is however less obvious
to show directly that $\Omega^{\Fl}_p$ is the $\ell_p/\ell_1$ norm...

We thus show a direct proof of that result since it illustrates how the results on LCE and UCE can be used to analyze norms and derive such results.

\begin{lemma}
Let $\G=\{G_1,\ldots,G_k\}$ be a partition of $V$. For $F: A \mapsto \max_{G \in \G} |A \cap G|$, we have\\ $\Omega^F_\infty(w)=\max_{G \in \G} \|w_G\|_1$.
\end{lemma}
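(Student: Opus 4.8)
The plan is to prove the identity by establishing the two inequalities $\max_{G \in \mathcal{G}} \|w_G\|_1 \leq \Omega^F_\infty(w)$ and $\Omega^F_\infty(w) \leq \max_{G \in \mathcal{G}} \|w_G\|_1$ separately, using Lemma~\ref{lch_ext} for the first and the dual-norm formula for the second.

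For the lower bound I would apply Lemma~\ref{lch_ext} with the candidate function $g: w \mapsto \max_{G \in \mathcal{G}} \|w_G\|_1$. On $\RR^d_+$ this $g$ is a pointwise maximum of the functions $w \mapsto \|w_G\|_1$, each of which is convex, positively homogeneous and coordinate-wise non-decreasing, so $g$ inherits all three properties. Since $\mathcal{G}$ is a partition, evaluating on an indicator gives $g(1_A) = \max_G \|(1_A)_G\|_1 = \max_G |A \cap G| = F(A)$, so $g$ induces exactly the combinatorial function $F$ of the statement. The second conclusion of Lemma~\ref{lch_ext} then yields $g(|w|) \leq \Omega^F_\infty(w)$, which is precisely the desired lower bound, since $g$ depends on $w$ only through $|w|$ and hence $g(|w|) = \max_G \|w_G\|_1$.

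For the upper bound I would pass to the dual norm. Recalling that for $p = \infty$ we have $q = 1$ and $\Omega^*_\infty(s) = \max_{A \neq \varnothing} \|s_A\|_1 / F(A)$, and that the dual norm of $N: w \mapsto \max_G \|w_G\|_1$ is $N^*(s) = \sum_{G \in \mathcal{G}} \|s_G\|_\infty$ (the constraint $N(w) \leq 1$ forces $\|w_G\|_1 \leq 1$ for every group and the linear objective separates across the blocks of the partition), it suffices to show $N^*(s) \leq \Omega^*_\infty(s)$; duality then reverses this into $\Omega^F_\infty \leq N$. This last inequality follows from a single well-chosen set: let $T$ contain one index $\arg\max_{i \in G} |s_i|$ from each group $G$. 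Then $T$ meets every group in exactly one point, so $F(T) = \max_G |T \cap G| = 1$, while $\|s_T\|_1 = \sum_G \max_{i \in G} |s_i| = N^*(s)$; hence $\Omega^*_\infty(s) \geq \|s_T\|_1 / F(T) = N^*(s)$.

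Combining the two inequalities gives $\Omega^F_\infty(w) = \max_{G \in \mathcal{G}} \|w_G\|_1$. The only delicate points are bookkeeping rather than conceptual: one must keep the direction of the duality correspondence straight (the norm inequality $N \geq \Omega^F_\infty$ is equivalent to the reversed dual inequality $N^* \leq \Omega^*_\infty$), and one must justify $N^*(s) = \sum_G \|s_G\|_\infty$, which is exactly where the partition hypothesis is essential, as it makes the blocks $\{\RR^G\}_{G \in \mathcal{G}}$ orthogonal and lets both $N$ and its dual decouple over groups. I expect this decoupling step to be the main, though mild, obstacle; everything else is a direct verification.
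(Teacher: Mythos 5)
Your proposal is correct and follows essentially the same route as the paper: the lower bound via Lemma~\ref{lch_ext} applied to $g(w)=\max_{G\in\G}\|w_G\|_1$ (checking convexity, positive homogeneity, coordinate-wise monotonicity, and $g(1_A)=F(A)$), and the upper bound by showing $\sum_{G\in\G}\|s_G\|_\infty \leq (\Omega^F_\infty)^*(s)$ using transversal sets meeting each group in exactly one (maximizing) index, then reversing the inequality by norm duality. Your set $T$ of per-group argmax indices is precisely the maximizer the paper restricts to when it bounds the dual norm over sets with $|A\cap G|=1$ for all $G$, so the two arguments coincide in substance.
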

\begin{proof}
Consider the function $f: w \mapsto \max_{G \in \G} \|w_G\|_1$ and the set function $F_0: A \mapsto f(1_A)$. We have $F_0(A)=\max_{G \in \G} \|1_{A \cap G}\|_1=F(A)$. 
But by Lemma~\ref{lch_ext}, this implies that $f(w) \leq \Omega^F_\infty(w)$ since $f=f(|\cdot|)$ is convex positively homogeneous and coordinatewise non-decreasing on $\RR^d_+$.
We could remark first that since $F(A)=f(1_A) \leq \Omega^F_\infty(1_A) \leq F(A)$, this shows that $F=\Fl$ is a lower combinatorial envelope. 
Now note that 
$$(\Omega^F_\infty)^*(s)=\max_{A \subset V, A \neq \varnothing} \min_{G \in \G} \frac{\|s_A\|_1}{|A \cap G|}\geq  \max_{A \subset V,\: |A \cap G|=1,\, G \in \G} \|s_A\|_1=\sum_{G \in \G} \max_{i \in G} |s_i|=\sum_{G \in \G} \|s_G\|_\infty.
$$
This shows that $(\Omega^F_\infty)^*(s) \geq \sum_{G \in \G} \|s_G\|_\infty$, which implies for dual norms that
$\Omega^F_\infty(w) \leq f(w)$. Finally, since we showed above the opposite inequality $\Omega_\infty^F=f$ which shows the result.
 \end{proof}
 
\section{Properties of the norm $\Omega^F_p$ when $F$ is submodular}

In this section, we first derive upper bounds and lower bounds for our norms, as well as a local formulation as a sum of $\ell_p$-norms on subsets of indices.

\subsection{Some important inequalities.} 

We now derive inequalities which will be useful later in the theoretical analysis.
By definition, the dual norm satisfies the following inequalities:
\BEQ
\frac{\|s\|_\infty}{M^\frac{1}{q}} \leq \max_{k \in V} \frac{\|s_{\{k\}}\|_q}{F(\{k\})^\frac{1}{q}} \leq \Omega^*_p(s)=\max_{A \subset V, A \neq \varnothing} \frac{\|s_A\|_q}{F(A)^\frac{1}{q}} \leq \frac{\|s\|_q}{\min_{A \subset V, A \neq \varnothing} F(A)^{\frac{1}{q}}} \leq \frac{\|s\|_q}{m^\frac{1}{q}},
\EEQ
for $m=\min_{k \in V} F(\{k\})$ and $M=\max_{k \in V} F(\{k\})$.
These inequalities imply immediately inequalities for $\Omega_p$ (and therefore for $f$ since for $\eta \in \RR^d_+, \: f(\eta)=\Omega_\infty(\eta))$:
$$ m^{1/q}  \| \w \|_p \leqslant \Omega_p(\w) \leqslant M^{1/q}  \| \w \|_1.$$
We also have $\Omega_p(w) \leqslant F(V)^{1/q}\| w\|_p$, using the following lower bound for the dual norm:
$\Omega_p^\ast(s) \geqslant \frac{ \| s\|_p}{F(V)^{1/q}}$.

Since by submodularity, we in fact have $M = \max_{A, k \notin A} F(A \cup \{k\} ) - F(A)$, it makes sense to introduce $\tilde{m}
= \min_{A , k , F(A \cup \{k\} ) > F(A) }F(A \cup \{k\} ) - F(A) \leq m$.
Indeed, we consider in \mysec{theory} the norm $\Omega_{p,J}$ (resp. $\Omega_p^J$) associated with \emph{restrictions} of $F$ to $J$ (resp.  \emph{contractions} of $F$ on $J$) and it follows from the previous inequalities that for all $J \subset V$, we have:
$${ \tilde{m}}^{1/q}  \| \w \|_p \leqslant  { {m}}^{1/q}  \| \w \|_p \leqslant \Omega_{p,J}(\w) \leqslant M^{1/q}  \| \w \|_1 \text\qquad \text{and} \qquad {\tilde{m}}^{1/q}  \| \w \|_p \leqslant \Omega^J_p(\w) \leqslant M^{1/q}  \| \w \|_1.$$

\subsection{Some optimality conditions for $\eta$.}  
While exact necessary and sufficient conditions for $\eta$ to be a solution of \eq{eta-trick} would be tedious to formulate precisely, we provide three necessary and two sufficient conditions, which together characterize a non-trivial subset of the solutions, which will be useful in the subsequent analysis.

\begin{proposition}[Optimality conditions for $\eta$] Let $F$ be a non-increasing submodular function.
Let $p>1$ and $w \in \rb^d$, $K = \supp(w)$ and $J$ the smallest stable set containing $K$. Let $H(w)$ the set of minimizers of \eq{eta-trick}. Then,

(a) the set $\{ \eta_K, \ \eta \in H(w) \}$ is a singleton with strictly positive components, which we denote $\{\eta_K(w)\}$, i.e., \eq{eta-trick} uniquely determines $\eta_K$.

(b) For all $\eta \in H(w)$, then $\eta_{J^c} = 0$.

(c) If $A_1 \cup \dots \cup A_m$ are the ordered level sets of $\eta_K$, i.e., $\eta$ is constant on each $A_j$ and the values on $A_j$ form a strictly decreasing sequence, then $F( A_1 \cup \cdots   \cup A_j) - F( A_1 \cup \cdots   \cup A_{j-1}) > 0$ and the value on $A_j$ is equal to
$\eta^{A_j}(\w) =  \frac{\| \w_{A_j} \|_p}{ [ F( A_1 \cup \cdots   \cup A_j) - F( A_1 \cup \cdots   \cup A_{j-1})]^{1/p}}.$

(d) If $\eta_K$ is equal to $\eta_K(w)$, $\max_{k \in  J \backslash K } \eta_k \leqslant \min_{k \in K} \eta_k(w)$, and $\eta_{J^c}=0$, then $\eta \in H(w)$.

(e) There exists $\eta \in H(w)$ such that 
$\frac{ \min_{i \in K } |w_i|  }{M^{1/p}}  \leqslant \min_{j \in J } \eta_j \leqslant 
\max_{j \in J } \eta_j \leqslant \frac{ \| w \|_p }{m^{1/p}}.$

\end{proposition}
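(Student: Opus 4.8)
The plan is to analyze the minimization \eqref{eq:eta-trick} directly, writing its objective as $L(\eta) = \sum_{i \in K} \frac1p \frac{|w_i|^p}{\eta_i^{p-1}} + \frac1q f(\eta)$, where $f=\Omega_\infty$ is the \lova extension of $F$ and the terms with $i\notin K$ drop out because $w_i=0$. First I would record that $L$ is convex and coercive on $\RR^d_+$ (coercivity from $f(\eta)\ge m\|\eta\|_\infty$, the $p=\infty$ instance of the lower bound proved in the previous subsection), so a minimizer exists and $H(w)\neq\varnothing$. For (a), each map $\eta_i\mapsto\frac1p|w_i|^p\eta_i^{-(p-1)}$ is strictly convex on $(0,\infty)$ and blows up as $\eta_i\to0^+$, forcing $\eta_K>0$ at any minimizer; since $\eta_K\mapsto\sum_{i\in K}\frac1p\frac{|w_i|^p}{\eta_i^{p-1}}$ is strictly convex while $\frac1q f$ is merely convex, the restriction $\eta_K$ is identical at every minimizer (the midpoint of two distinct ones would strictly lower $L$), giving the singleton $\{\eta_K(w)\}$ with positive components.

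The structural input for the rest is the identification of $J$ as the submodular closure of $K$: for nondecreasing submodular $F$, the smallest stable set containing $K$ equals $\{i:F(K\cup\{i\})=F(K)\}$, satisfies $F(J)=F(K)$, and every $i\in J^c$ obeys $F(K\cup\{i\})-F(K)>0$ (one line from diminishing returns and monotonicity). For (b), a minimizer $\eta$ must minimize $f$ over the fibre $\{\eta_K(w)\}\times\RR_+^{K^c}$ since $\eta_K$ is fixed by (a). The greedy formula shows $f(\eta_K(w)+te_i)=f(\eta_K(w))+t\,[F(K\cup\{i\})-F(K)]$ for small $t>0$ (as $i$ enters as the smallest positive coordinate), so along $J^c$ the directional derivative is strictly positive; by convexity and monotonicity any $\eta$ with $\eta_i>0$ for some $i\in J^c$ would give $f(\eta)>f(\eta_K(w))$, contradicting minimality, whence $\eta_{J^c}=0$. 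The same computation yields derivative $0$ along $J\setminus K$, explaining why those coordinates are free.

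For (c), observe that $\eta^0:=\eta_K(w)\oplus 0$ is itself a minimizer (same fidelity, and $f(\eta^0)=f(\eta_K(w))\le f(\eta)$ by monotonicity), so $\eta_K(w)$ may be read off from this $K$-supported point. Let $A_1,\dots,A_m$ be its level sets with strictly decreasing positive values $v_1>\dots>v_m$. On the open cone preserving this ordering, $f$ is linear and equals $\sum_j v_j c_j$ with $c_j:=F(A_1\cup\cdots\cup A_j)-F(A_1\cup\cdots\cup A_{j-1})$; since $\eta^0$ lies in the interior of that cone, the first-order condition in $v_j$ reads $-\tfrac{p-1}{p}v_j^{-p}\|w_{A_j}\|_p^p+\tfrac1q c_j=0$, and using $\tfrac{p-1}{p}=\tfrac1q$ this gives $v_j=\|w_{A_j}\|_p\,c_j^{-1/p}$, the claimed value; $c_j>0$ is forced, as $c_j=0$ would make $v_j$ unbounded.

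For (d), take $\eta$ with $\eta_K=\eta_K(w)$, $\eta_{J^c}=0$, and $\max_{J\setminus K}\eta\le\min_K\eta_K(w)$: then all of $K$ precedes $J\setminus K$ in the greedy order, the latter contribute zero marginal by saturation ($F\equiv F(K)$ between $K$ and $J$), so $f(\eta)=f(\eta_K(w))$ is minimal on the fibre and $L(\eta)=\min L$, i.e.\ $\eta\in H(w)$. For (e), I would apply (d) with $\eta_{J\setminus K}\equiv\min_K\eta_K(w)$ so that $\min_J\eta=v_m$ and $\max_J\eta=v_1$; then (c) gives the estimates, since $c_1=F(A_1)\ge m$ yields $v_1\le\|w\|_p/m^{1/p}$, while $c_m\le|A_m|M$ together with $\|w_{A_m}\|_p\ge|A_m|^{1/p}\min_{A_m}|w_i|$ yields $v_m\ge\min_{i\in K}|w_i|/M^{1/p}$. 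I expect the crux to be (b) and the structural interplay it encodes: one must correctly recognize $J$ as the submodular closure and extract the local linear behaviour of $f$ (directional derivative $F(K\cup\{i\})-F(K)$) to separate the three regimes---unique on $K$, free on $J\setminus K$, and forced to zero on $J^c$. Once this local-linearity description and the strict-convexity bookkeeping of (a) are in place, parts (c)--(e) reduce to first-order conditions and elementary estimates.
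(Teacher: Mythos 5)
Your proposal is correct and takes essentially the same approach as the paper's proof: strict convexity of the data-fitting term for uniqueness of $\eta_K$, the greedy/Lov\'asz formula giving the directional derivative $F(K\cup\{i\})-F(K)>0$ along $J^c$ for (b), first-order conditions on the cone where the ordering of the level sets is preserved for (c), and direct evaluation of the objective for (d) and (e). Your write-up is in places more detailed than the paper's (the convexity argument making rigorous the paper's ``infinitesimally small $\eta_j$'' perturbation in (b), the explicit closure characterization $J=K\cup\{i : F(K\cup\{i\})=F(K)\}$, and the explicit bounds $c_1\geq m$ and $c_m\leq |A_m|M$ in (e)), but the underlying ideas are identical.
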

\begin{proof}
(a) Since $f$ is non-decreasing with respect to each of its argument, for any $\eta \in H(w)$, we have $\eta' \in H(w)$ for $\eta'$ defined through $\eta'_K=\eta_K$ and $\eta_{K^c}=0$. The set of values of $\eta_K$ for $\eta \in H(w)$ is therefore the set of solutions problem (\ref{eq:eta-trick}) restricted to $K$. The latter problem has a unique solution as a consequence of the strict convexity on $\rb_+^\ast$ of $\eta_j \mapsto \frac{|w_j|^p}{\eta_j^{p-1}}$.

(b) If there is $j \in J^c$ such that $\eta \in H(w)$ and $\eta_j \neq 0$, then (since $w_j=0$) because $f$ is non-decreasing with respect to each of its arguments, we may take $\eta_j$ infinitesimally small and all other $\eta_k$ for $k \in K^c$ equal to zero, and we have $f(\eta) = f_K(\eta_K(w)) + \eta_j [ F( K \cup \{j \}) - F(K) ] $. Since 
$F( K \cup \{j \}) - F(K)  \geqslant F( J \cup \{j \}) - F(J) > 0$ (because $J$ is stable), we have $f(\eta) > f_K(\eta_K(w)) $, which is a contradiction.

(c) Given the ordered level sets, we have $f(\eta) = \sum_{j=1}^m \eta^{A_j} [ F( A_1 \cup \cdots   \cup A_j) - F( A_1 \cup \cdots \cup   A_{j-1})]$, which leads to a closed-form expression $\eta^{A_j}(\w) =  \frac{\| \w_{A_j} \|_p}{ [ F( A_1 \cup \cdots   \cup A_j) - F( A_1 \cup \cdots   \cup A_{j-1})]^{1/p}}.$  If $F( A_1 \cup \cdots   \cup A_j) - F( A_1 \cup \cdots   \cup A_{j-1})=0$, since $\| \w_{A_j} \|_p>0$, we have
 $\eta^{A_j}$ as large as possible, i.e., it has to be equal to $\eta^{A_{j-1}}$, thus it is not a possible ordered partition.

(d) With our particular choice for $\eta$, we have
$\sum_{i \in V} \frac{1}{p}\frac{|w_i|^p}{\eta_i^{p-1}}+\frac{1}{q}f(\eta)
= \Omega_K(w_K)
$. Since we always have $\Omega(w) \geqslant \Omega_K(w_K)$, then $\eta$ is optimal   in \eq{eta-trick}.

(e) We take the largest elements from (d) and bounds the components of $\eta_K$ using (c).
\end{proof}

Note that from property (c),  we can explicit the value of the norm as:
\begin{align}
\Omega_p(\w)&=\sum_{j=1}^k (F(A_1 \cup \ldots \cup A_j)-F(A_1 \cup \ldots \cup A_{j-1}))^{\frac{1}{q}} \|\w_{A_j \backslash A_{j-1}}\|_p\\
&=\Omega_{p,A_1}(\w_{A_1})+\sum_{j=2}^k \Omega_{p,A_j}^{A_{j-1}}(\w_{A_j \backslash A_{j-1}})
\label{eq:norm_form}
\end{align}
where $\Omega_{p,B}^A$ is the norm associated with the contraction on $A$ of $F$ restricted to $B$.

\section{Proof of Proposition~\ref{prop:inequality} (Decomposability)}
Concretely, let
$c=\frac{\tilde{m}}{M}$ with $M=\max_{k \in V} F(\{k\})$ and
$$\tilde{m}=\min_{A,k}\: F(A \cup \{k\} ) - F(A) \:\st \: F(A \cup \{k\} ) > F(A)$$ 

\begin{propositionnn}[\ref{prop:inequality}.\ Weak and local Decomposability]
\label{sec:proof_decomp}
(a) For any set $J$ and any $w \in \rb^d$, we have  $$\Omega(\w) \geq \OJ(\w_J)+\OnJ(\w_{J^c}).$$ 
(b) Assume that $J$ is stable, and $\| \w_{J^c} \|_p \leq c^{1/p} \min_{i \in J} |w_i|$, then $\Omega(\w) = \OJ(\w_J)+\OnJ(\w_{J^c})$.\\
(c) Assume that $K$ is non stable and $J$ is the smallest stable set containing $K$, and that $\| \w_{J^c} \|_p \leq c^{1/p} \min_{i \in K} |w_i|$, then $\Omega(\w) = \OJ(\w_J)+\OnJ(\w_{J^c})$.

\end{propositionnn}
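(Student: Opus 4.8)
The plan is to treat the reverse-triangle inequality (a) and the two equality cases (b),(c) with two different variational descriptions of the norm: the dual (max-in-$\kappa$) form \eqref{eq:kappa-trick} for the inequality, and the primal (min-in-$\eta$) form \eqref{eq:eta-trick} together with the level-set structure of the \lova extension for the equalities. Throughout, $\OJ$ and $\OnJ$ denote the norms of the restriction $F_J$ and the contraction $F^J$, with canonical polyhedra $\mathcal{P}_{F_J}$ (supported on $J$) and $\mathcal{P}_{F^J}$ (supported on $J^c$).

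For part (a) I would write $\OJ(w_J)=\max_{\kappa^1\in\mathcal{P}_{F_J}}\sum_{i\in J}(\kappa^1_i)^{1/q}|w_i|$ and $\OnJ(w_{J^c})=\max_{\kappa^2\in\mathcal{P}_{F^J}}\sum_{i\in J^c}(\kappa^2_i)^{1/q}|w_i|$, take maximizers $\kappa^1,\kappa^2$, and glue them into a single $\kappa\in\rb^d_+$ with $\kappa_J=\kappa^1$ and $\kappa_{J^c}=\kappa^2$. The only thing to verify is admissibility $\kappa\in\mathcal{P}_F$: for $A\subset V$, splitting $A=C\cup B$ with $C=A\cap J$, $B=A\cap J^c$, one gets $\kappa(A)=\kappa^1(C)+\kappa^2(B)\le F_J(C)+F^J(B)\le F(A)$, the last step being the submodular (diminishing-returns) relation between $F$, its restriction and its contraction for disjoint $C\subseteq J$, $B\subseteq J^c$. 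Feasibility of the glued $\kappa$ then gives $\Omega(w)\ge\sum_i\kappa_i^{1/q}|w_i|=\OJ(w_J)+\OnJ(w_{J^c})$, with no use of stability, so this holds for arbitrary $J$.

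Given (a), the equalities reduce to the reverse inequality $\Omega(w)\le\OJ(w_J)+\OnJ(w_{J^c})$ under the gap hypothesis, and here I would use \eqref{eq:eta-trick}, $\Omega(w)=\min_{\eta\ge0}\sum_i\frac1p\frac{|w_i|^p}{\eta_i^{p-1}}+\frac1q f(\eta)$ with $f=\Omega_\infty$ the \lova extension of $F$. The structural fact I would exploit is that if $\eta$ is \emph{separated}, i.e. $\min_{i\in J}\eta_i\ge\max_{i\in J^c}\eta_i$, then ordering the level sets of $\eta$ lists all of $J$ before $J^c$, and the chain formula \eqref{eq:lovasz2} splits as $f(\eta)=f_J(\eta_J)+f^J(\eta_{J^c})$, the \lova extensions of $F_J$ and of the contraction $F^J$. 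The whole objective in \eqref{eq:eta-trick} then separates across $J$ and $J^c$, so plugging in the minimizers of the restriction and contraction subproblems (concatenated) yields an admissible $\eta$ whose value is exactly $\OJ(w_J)+\OnJ(w_{J^c})$, hence the upper bound.

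The crux, and the step I expect to cost the most care, is ensuring this concatenated $\eta$ is genuinely separated; this is where the constant $c=\tilde m/M$ is forced. I would invoke the necessary/sufficient optimality conditions for $\eta$: condition~(e) applied to the restriction subproblem gives $\min_{i\in J}\eta^1_i\ge M^{-1/p}\min_{i\in J}|w_i|$, while on the contraction subproblem the increments $F^J(\{k\})=F(J\cup\{k\})-F(J)$ are strictly positive precisely because $J$ is stable, hence at least $\tilde m$, yielding $\max_{i\in J^c}\eta^2_i\le\tilde m^{-1/p}\|w_{J^c}\|_p$. Combining the two bounds, separation holds as soon as $\|w_{J^c}\|_p\le(\tilde m/M)^{1/p}\min_{i\in J}|w_i|=c^{1/p}\min_{i\in J}|w_i|$, which is the hypothesis. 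Two delicate points remain to be handled carefully: (i) confirming that a separated $\eta$ really produces the \emph{contraction} extension $f^J$ rather than some other marginal object, and (ii) in part (c), where $K=\supp(w)$ is non-stable and $J$ is its smallest stable superset, using condition~(d) to fill the coordinates of $J\setminus K$ (on which $w$ vanishes) with values that keep $\eta$ separated without changing its value, so that the bound reads $\min_{i\in K}|w_i|$ as required.
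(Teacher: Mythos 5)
Your proposal is correct and follows essentially the same route as the paper's own proof: part (a) by gluing feasible dual vectors for $\mathcal{P}_{F_J}$ and $\mathcal{P}_{F^J}$ and checking feasibility in $\mathcal{P}_F$ via submodularity, and parts (b)--(c) by concatenating the optimal $\eta$'s of the restriction and contraction subproblems, using the bounds from the optimality conditions (with stability of $J$ guaranteeing the contraction increments are at least $\tilde m$) to ensure separation, so that the \lova extension splits as $f_J(\eta_J)+f^J(\eta_{J^c})$. The two delicate points you flag are handled in the paper exactly as you anticipate, including filling $\eta_{J\setminus K}$ with large values in case (c) using $F(J)=F(K)$.
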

\begin{proof}  
We first prove the first statement (a):
If $\|\s_{A \cap J}\|^p_p \leq F(A \cap J)$ and $\|\s_{A \cap J^c}\|^p_p \leq F(A \cup J) - F(J)$ then
by submodularity we have $\|\s_A\|_p^p \leq F(A \cap J) + F(A \cup J)-F(J) \leq F(A)$.
The submodular polyhedra associated with $F_J$ and $F^J$ are respectively defined by
\begin{align*} 
P(F_J)& =\{\s \in \RR^d,\: \supp(\s) \subset J,  \: \s(A) \leq F(A), \: A \subset J \} \quad \text{and} \\
P(F^J)& =\{\s \in \RR^d,  \:  \supp(\s) \subset J^c,\: \s(A) \leq F(A \cup J) -F(J)\}
\end{align*}
Denoting $s^{\ptpow p}:=(s_1^p,\ldots,s_d^p)$, we therefore have $$\Omega(\w)=\max_{\{|\s^{\ptpow p}| \in P(F) \}} \s^\top \w \geq \max_{\{|\s^{\ptpow p}_J| \in \,  P(F_J), \: |\s^{\ptpow p}_{J^c}| \in \, P(F^J) \}} \s^\top \w=\OJ(\w_J)+\OnJ(\w_{J^c}).$$
 
In order to prove (b), we consider an optimal $\eta_J$ for $w_J$ and $\Omega_J$ and an optimal $\eta_{J^c}$ for $\Omega^J$. Because of our inequalities, and because we have assume that $J$ is stable (so that the value $m$ for $\Omega^J$ is indeed lower bounded by $\tilde{m}$), we have $\|\eta_{J^c} \|_\infty \leqslant \frac{ \| \w_{J^c} \|_p}{\tilde{m}^{1/p}}$. Moreover, we have $\min_{j \in J} \eta_j \geqslant \frac{ \min_{i \in J} |w_i| }{ M^{1/p}  }$ (inequality proved in the main paper). Thus when concatenating $\eta_J$ and $\eta_{J^c}$ we obtain an optimal~$\eta$ for $w$ (since then the \lova extension decomposes as a sum of two terms), hence the desired result.

In order to prove (c), we simply notice that since $F(J) = F(K)$, the value of $\eta_{J \backslash K}$ is irrelevant (the variational formulation does not depend on it), and we may take it equal to the largest known possible value, i.e., one which is largest than $\frac{ \min_{i \in J} |w_i| }{ M^{1/p}  }$, and the same reasoning than for (b) applies.
\end{proof}
Note that when $p=\infty$, the condition in (b) becomes  $\min_{i \in J} |w_i| \geqslant \max_{i \in J^c} |w_i|$, and we recover exactly the corresponding result from \citet{bach2010structured}.
 
\section{Algorithmic results}

\subsection{Proof of Algorithm 1}
\label{sec:proof_alg1}
Algorithm 1 is a particular instance of the decomposition algorithm for the optimization of a convex function over the submodular polyhedron (see e.g. section 6.1 of \citet{bach2011learning})
Indeed denoting $\psi_i(\kappa_i)=\min_{w_i \in \RR} \frac{1}{2} (w_i-z_i)^2+\lambda \kappa_i^{1/q}|w_i|$, the computation of the proximal operator amounts to solving in $\kappa$ the problem $$\max_{\kappa \in \RR^d_+ \cap \mathcal{P}} \sum_{i\in V}\psi_i(\kappa_i)$$
Following the decomposition algorithm, one has to solve first
\begin{eqnarray*}
& & \max_{\kappa \in \RR^d_+}\sum_{i\in V}\psi_i(\kappa_i) \quad \st \quad \sum_{i \in V} \kappa_i=F(V)\\
& = &  \min_{w \in \RR^d}\max_{\kappa \in \RR^d_+} \frac{1}{2} \|w-z\|_2^2+\sum_{i \in V} \kappa_i^{1/q}|w_i|\quad \st \quad \sum_{i \in V} \kappa_i=F(V)\\
& = & \min_{w \in \RR^d} \frac{1}{2} \|w-z\|_2^2+\lambda F(V)^{1/q} \|w\|_p,
\end{eqnarray*}
where the last equation is obtained by solving the maximization problem in $\kappa$, which has the unique solution $\kappa_i=F(V)\frac{|w_i|^p}{\|w\|^p_p}$ if $w \neq 0$ and the simplex of solutions $\{\kappa \in \RR_+^d \mid \kappa(V)=F(V)\}$ for $w=0$.

This is solved in closed form for $p=2$ with $w^*=(\|z\|_2-\lambda \sqrt{F(V)})_+\frac{z}{\|z\|_2}$ if $z\neq0$ and $w^*=0$ else.

In particular since $w^*\propto z$, then $\kappa_i=F(V)\frac{z_i^2}{\|z\|_2^2}$ is always
a solution.
Following the decomposition algorithm, one then has to find the minimizer of the submodular function $A \mapsto F(A)-\kappa(A)$. Then one needs to solve 
$$\min_{\kappa_A \in \RR^{|A|}_+ \cap \mathcal{P}(F_A)} \sum_{i \in A} \psi_i(\kappa_i)
\qquad \text{and}
\qquad \min_{\kappa_{V\backslash A} \in \RR^{|V\backslash A|}_+ \cap \mathcal{P}(F^A)} \sum_{i \in V\backslash A} \psi_i(\kappa_i).
$$
Using the expression of $\psi_i$ and exchanging as above the minimization in $w$ and the maximization in $\kappa$, one obtains directly that these two problems correspond respectively to the computation of the proximal operators of $\Omega^{F_A}$ on $z_A$ and of  the proximal operator of $\Omega^{F^A}$ on $z_{V \backslash A}$.

The decomposition algorithm is proved to be correct in section 6.1 of \citet{bach2011learning}
under the assumption that $\kappa_i \mapsto \psi(\kappa_i)$ is a strictly convex function.
The functions we consider here are not strongly convex, and in particular, as mentioned above the solution in $\kappa$ is not unique in case $w^*=0$. The proof of \citet{bach2011learning}
however goes through using any solution of the maximization problem in $\kappa$.
\subsection{Decomposition algorithm to compute the norm}
\label{sec:decomp_alg}
Applying Algorithm 1 in the special case where $\lambda=0$ yields a decomposition
algorithm to compute the norm itself (see Algorithm~\ref{alg:norm_decomp}).

\begin{algorithm}[htbp]
\label{alg:norm_decomp}
\caption{Computation of $\Omega_p^F(z)$}
\begin{minipage}{.5\tw}
\begin{algorithmic}[1]
\REQUIRE $z \in \RR^d$.
\STATE Let $A=\{j \mid z_j \neq 0\}.$
\IF{$A \neq V$}
\STATE \textbf{return}  $\Omega_p^{F_A}(z_A)$
\ENDIF
\STATE Let $t \in \RR^d$ with $t_i=\frac{|z_i|^p}{\|z\|^p_p} F(V)$
\STATE Find $A$ minimizing the submodular function $F-t$
\IF {$A=V$}
\STATE \textbf{return}  $F(V)^{1/q}\|x\|_p$
\ELSE
\STATE \textbf{return}  $\Omega_p^{F_A}(z_A)+\Omega_p^{F^A}(z_{A^c})$
\ENDIF
\end{algorithmic}
\end{minipage}
\end{algorithm}

\section{Theoretical Results}

In this section, we prove the propositions on consistency, support recovery and the concentration result of Section~\ref{sec:theory}. As there, we consider a fixed design matrix $X \in \rb^{n \times p}$  and $y \in \rb^n$ a vector of random responses. Given $\lambda >0$, we define
$\hat{w}$ as a minimizer of the regularized least-squares cost:
\BEQ
\textstyle
\min_{w \in \rb^d} \textstyle \frac{1}{2n} \| y - X w\|_2^2 + \lambda \Omega(w).
\EEQ

\subsection{Proof of Proposition~\ref{prop:support} (Support recovery)}
\label{sec:proof_recovery} 

  \begin{proof} We follow the proof of the case $p=\infty$ from~\cite{bach2010structured}.
    Let $r = \frac{1}{n} X^\top \varepsilon \in \rb^d$, which is normal with mean zero and covariance matrix $\sigma^2 Q / n$.
We have for any $w \in \rb^p$, $$\Omega(w) \geqslant \Omega_J(w_J) +  \Omega^{J}(w_{J^c})
\geqslant \Omega_J(w_J) +  \rho \,   \Omega_{J^c}(w_{J^c}) \geqslant \rho  \,  \Omega(w).$$ This implies that
$\Omega^\ast(r) \geqslant \rho \max \{ \Omega_J^\ast(r_J) , (\Omega^J)^\ast(r_{J^c}) \}$.

Moreover,
$r_{J^c} - Q_{J^c J}Q_{JJ}^{-1} r_J$ is normal with covariance matrix $$\frac{\sigma^2}{n} ( Q_{J^c J^c} - Q_{J^c J} Q_{JJ}^{-1} Q_{J J^c} ) \preccurlyeq \sigma^2 / n  Q_{J^c J^c}.$$ This implies that with probability larger than $1 - 3 P( \Omega^\ast(r) > \lambda  \rho  \eta/2 )$,
we have  $$\Omega_J^\ast(r_J) \leqslant \lambda/2 \qquad \text{and} \qquad
 (\Omega^J)^\ast( r_{J^c} - Q_{J^c J}Q_{JJ}^{-1} r_J ) \leqslant \lambda  \eta/2.$$ 
 
 We denote by $\tilde{w}$ the unique (because $Q_{JJ}$ is invertible) minimum of  $\frac{1}{2n} \| y - X w\|_2^2 + \lambda \Omega(w)$, subject to $w_{J^c}=0$. $\tilde{w}_J$ is defined through $Q_{JJ} ( \tilde{w}_J - {w_J}^\ast ) - r_J = - \lambda s_J$ where $s_J \in \partial \Omega_J(\tilde{w}_J)$ (which implies that $\Omega_J^\ast(s_J) \leqslant 1$) , i.e., $\tilde{w}_J - w^\ast_J = Q_{JJ}^{-1} ( r_J - \lambda s_J)$.  We have:
 \BEAS
 \| \tilde{w}_J - w^\ast_J  \|_\infty  & \leqslant &  
 \max_{ j \in J } | \delta_j^\top  Q_{JJ}^{-1} ( r_J - \lambda s_J) |
 \\
  & \leqslant &  
 \max_{ j \in J } \Omega_J(   Q_{JJ}^{-1}  \delta_j ) \Omega_J^\ast( r_J - \lambda s_J) |
 \\ 
  & \leqslant &  
 \max_{ j \in J } \|   Q_{JJ}^{-1}  \delta_j  \|_p F(J)^{1-1/p}[ \Omega_J^\ast( r_J) +  \lambda   \Omega_J^\ast( s_J)  ]
\\
& \leqslant &  
 \max_{ j \in J } \kappa^{-1}  |J|^{1/p} F(J)^{1-1/p}[ \Omega_J^\ast( r_J) +  \lambda   \Omega_J^\ast( s_J)  ]
\leqslant 
\frac{3}{2} \lambda |J|^{1/p} F(J)^{1-1/p} \kappa^{-1} .
 \EEAS
 Thus if $2 \lambda |J|^{1/p} F(J)^{1-1/p}  \kappa^{-1}  \leqslant \nu$, then 
 $ \| \tilde{w} - w^\ast \|_\infty \leqslant \frac{3 \nu}{4}$, which implies $\supp(\tilde{w}) \supset \supp(w^\ast)$.
 
 In the neighborhood of $\tilde{w}$, we have an exact decomposition of the norm, hence, to show that $\tilde{w}$ is the unique global minimum, we simply
need to show that since we have $(\Omega^J)^\ast( r_{J^c} - Q_{J^c J}Q_{JJ}^{-1} r_J ) \leqslant \lambda  \eta/2 $, $\tilde{w}$ is the unique minimizer of \eq{objective}. For that it suffices to show that $(\Omega^J)^\ast ( Q_{J^c J} (\tilde{w}_J - w_J^\ast) - r_{J^c} ) < \lambda$. We have:
 \BEAS
 (\Omega^J)^\ast ( Q_{J^c J} (\tilde{w}_J - w_J^\ast) - r_{J^c} )
 & = &  (\Omega^J)^\ast ( Q_{J^c J} Q_{JJ}^{-1} ( r_J - \lambda s_J)- r_{J^c} )
\\
 & \leqslant &  (\Omega^J)^\ast ( Q_{J^c J} Q_{JJ}^{-1}   r_J  - r_{J^c} ) + 
\lambda  (\Omega^J)^\ast (  Q_{J^c J} Q_{JJ}^{-1}  s_J )
\\
 & \leqslant &  (\Omega^J)^\ast ( Q_{J^c J} Q_{JJ}^{-1}   r_J  - r_{J^c} ) + 
\lambda (\Omega^J)^\ast[  ( \Omega_J(   Q_{JJ}^{-1} Q_{Jj} ) )_{j \in J^c} ]
\\
& \leqslant &  \lambda  \eta/2 + \lambda ( 1-\eta) < \lambda,
 \EEAS
 which leads to the desired result.
   \end{proof}

\subsection{Proof of proposition~\ref{prop:high-dim} (Consistency)}
 \label{sec:proof_consistency}
 \begin{proof}
 
   Like for the proof of Proposition~\ref{prop:support}, we have $$\Omega(x) \geqslant \Omega_J(x_J) +  \Omega^{J}(x_{J^c})
\geqslant \Omega_J(x_J) +  \rho  \, \Omega_{J^c}(x_{J^c}) \geqslant \rho  \, \Omega(x).$$ Thus, if we assume
$\Omega^\ast(q) \leqslant  \lambda  \rho  /2$, then  $\Omega_J^\ast(q_J) \leqslant \lambda/2$ and
 $(\Omega^J)^\ast( q_{J^c}) \leqslant \lambda /2 $. Let $\Delta  = \hat{w} - w^\ast$.

We follow the proof from~\cite{bickel_lasso_dantzig} by using the decomposition property of the norm $\Omega$.
We have, by optimality of $\hat{w}$:
$$ \frac{1}{2}\Delta^\top Q \Delta+   \lambda \Omega( w^\ast + \Delta) + q^\top \Delta \leqslant \lambda \Omega( w^\ast + \Delta) + q^\top \Delta  \leqslant    \lambda \Omega( w^\ast)
$$
Using the decomposition property,
$$  \lambda \Omega_J( (w^\ast + \Delta)_J ) + \lambda \Omega^J((w^\ast + \Delta)_{J^c} ) + q_J^\top \Delta_J
+ q_{J^c}^\top \Delta_{J^c}
  \leqslant    \lambda \Omega_J( w^\ast_J),
$$
$$   \lambda \Omega^J( \Delta_{J^c} )  \leqslant    \lambda \Omega_J( w^\ast_J) - \lambda \Omega_J( w^\ast_J + \Delta_J ) 
+ \Omega_J^\ast(q_J) \Omega_J(\Delta_J)
+ 
(\Omega^J)^\ast (q_{J^c}) \Omega^J(  \Delta_{J^c}),\quad \text{and}
$$
$$   ( \lambda - (\Omega^J)^\ast (q_{J^c}) ) \Omega^J( \Delta_{J^c} )  \leqslant    ( \lambda +  \Omega_J^\ast(q_J)   ) \Omega_J( \Delta_J ).$$
Thus $\Omega^J( \Delta_{J^c} )  \leqslant  3 \Omega_J( \Delta_J)$, which implies $
\Delta^\top Q \Delta \geqslant \kappa \|\Delta_J \|_2^2
$ (by our assumption which generalizes the usual $\ell_1$-restricted eigenvalue condition).
Moreover, we have:
\BEAS
\Delta^\top Q \Delta & = & \Delta^\top ( Q \Delta) \leqslant \Omega(\Delta) \Omega^\ast( Q \Delta) \\
& \leqslant &  \Omega(\Delta) ( \Omega^\ast( q)  +  \lambda ) \leqslant \frac{3 \lambda}{2} \Omega(\Delta) 
\mbox{ by optimality of } \hat{w}
\\
\Omega(\Delta)
& \leqslant & 
   \Omega_J(\Delta_J) + \rho ^{-1} 
\Omega^J(\Delta_{J^c})  \\
& \leqslant &  \Omega_J(\Delta_J) ( 3 + \frac{1}{\rho } )
\leqslant  \frac{4}{\rho } \Omega_J(\Delta_J).
\EEAS
This implies that $ { \kappa}  \Omega_J(\Delta_J)^2 \leqslant   \Delta^\top Q \Delta \leqslant 
\frac{6 \lambda}{\rho } \Omega_J(\Delta_J)$, and thus 
$
\Omega_J(\Delta_J) \leqslant \frac{6 \lambda}{\kappa \rho }
$, which leads to the desired result, given the previous inequalities.
 
   \end{proof}
   
   \subsection{Proof of proposition~\ref{prop:proba}}
  \begin{proof}
  We have $\Omega^\ast(z) = \max_{A \in \mathcal{D}_F} \frac{\|  z_A \|_q }{F(A)^{1/q}}$. Thus, from the union bound, we get
 \BEAS
  \mathbb{P}( \Omega^\ast(z) > t )  
 & \leqslant &   \sum_{A \in \mathcal{D}_F} \mathbb{P} ( \|  z_A \|_q^q  >  t^q F(A) ).
 \EEAS 
We can then derive concentration inequalities.
We have $\mathbb{E} \| z_A \|_q \leqslant ( \mathbb{E}  \| z_A\|_q^q )^{1/q}
= ( |A|\mathbb{E}  |\varepsilon|^q)^{1/q}
\leqslant 
 2 |A|^{1/q}  q^{1/2}
$, where $\varepsilon$ is a standard normal random variable. Moreover,
$\| z_A\|_q \leqslant \|z_A \|_2$ for $ q\geqslant 2$, and
$\| z_A\|_q \leqslant |A|^{1/q-1/2}\|z_A \|_2$ for $ q\leqslant 2$. We can thus use the concentration of Lipschitz-continuous functions of Gaussian variables, to get for $p \geqslant 2$ and $u\geqslant 0$,
$$
\mathbb{P}  \big( \| z_A \|_q \geqslant  2 |A|^{1/q} \sqrt{q} + u \big) \leqslant e^{-u^2/2}
.$$
For $p<2$ (i.e., $q>2$), we obtain
$$
\mathbb{P}  \big( \| z_A \|_q \geqslant  2 |A|^{1/q} \sqrt{q} + u \big) \leqslant e^{-u^2 |A|^{1 - 2/q}/2}.
$$
We can also bound the expected norm $\mathbb{E}[ \Omega^\ast(z)] $, as
$$
\mathbb{E}[ \Omega^\ast(z) ]
\leqslant 4 \sqrt{ q \log ( 2 |\mathcal{D}_F|)} \max_{A \in \mathcal{D}_F}  \frac{ |A|^{1/q}}{F(A)^{1/q}}.
$$
Together with
$\Omega^\ast(z) \leqslant \| z\|_2 \max_{A \in \mathcal{D}_F} \frac{|A|^{(1/q-1/2)_+}}{   F(A)^{1/q} }$, 
we get
$$
\mathbb{P} \bigg(
\Omega^\ast(z) \geqslant 4 \sqrt{ q \log ( 2 |\mathcal{D}_F|)} \max_{A \in \mathcal{D}_F}  \frac{ |A|^{1/q}}{F(A)^{1/q}} 
+ u \max_{A \in \mathcal{D}_F} \frac{|A|^{(1/q-1/2)_+}}{   F(A)^{1/q} }
\bigg) \leqslant e^{-u^2/2}.
$$
   \end{proof}

 \end{document}